\newtheorem{theorem}{Theorem}		%
\newtheorem{corollary}{Corollary}		%
\newtheorem{lemma}{Lemma}		%
\newtheorem{definition}{Definition}
\newtheorem{proposition}{Proposition}		%
\newtheorem*{corollary*}{Corollary}		%
\newcommand*{\addFileDependency}[1]{%
  \typeout{(#1)}
  \@addtofilelist{#1}
  \IfFileExists{#1}{}{\typeout{No file #1.}}
}
\numberwithin{equation}{section}		%
\crefname{assumption}{Assumption}{Assumptions}
\theoremstyle{definition}
\newtheorem{assumption}{Assumption}		%
\newtheorem{example}{Example}		%
\newtheorem*{definition*}{Definition}		%
\newtheorem*{assumption*}{Assumptions}		%
\theoremstyle{remark}
\newtheorem{remark}{Remark}		%
\newtheorem*{remark*}{Remark}		%
\newtheorem*{example*}{Example}		%
\newcounter{proofpart}
\numberwithin{example}{section}		%
\DeclarePairedDelimiterX{\setdef}[2]{\{}{\}}{#1:#2}		%
\DeclarePairedDelimiterXPP{\exclude}[1]{\mathopen{}\setminus}{\{}{\}}{}{#1}
\newcommand{\eg}{e.g.,\xspace}		%
\newcommand{\ie}{i.e.,\xspace}		%
\newcommand{\R}{\mathbb{R}}		%
\DeclareMathOperator{\ind}{ind}		%
\newcommand{\eps}{\varepsilon}		%
\DeclarePairedDelimiterX{\braket}[2]{\langle}{\rangle}{#1,#2}		%
\DeclareMathOperator{\diag}{diag}		%
\DeclareMathOperator{\tr}{tr}		%
\DeclareMathOperator{\ex}{\mathbb{E}}		%
\DeclareMathOperator{\prob}{\mathbb{P}}		%
\DeclareMathOperator{\supp}{supp}		%
\DeclarePairedDelimiterXPP{\exof}[1]{\ex}{[}{]}{}{%
 #1}
\DeclarePairedDelimiterXPP{\probof}[1]{\prob}{(}{)}{}{%
 #1}
\DeclarePairedDelimiterX{\product}[2]{\langle}{\rangle}{#1,#2}		%
\DeclarePairedDelimiterXPP{\dnorm}[1]{}{\lVert}{\rVert}{_{\ast}}{#1}		%
\newcommand{\ud}{\,\mathrm{d}}
\newcommand{\trn}{\mathrm{trn}}
\newcolumntype{C}[1]{>{\centering\arraybackslash}m{#1}}
\def\R{\mathbb{R}}
\def\reals{\mathbb{R}}
\def\integers{\mathbb{Z}}
\def\indic{\mathrm{1}}
\def\epsilonbf{\boldsymbol \epsilon }
\def\thetabf{\boldsymbol \theta }
\def\sigmabf{\boldsymbol \sigma }
\def\Sigmabf{\boldsymbol \Sigma}
\def\Deltabf{\boldsymbol \Delta}
\def\ebf{{\bf e}}
\def\gbf{{\bf g}}
\def\wbf{{\bf w}}
\def\xbf{{\bf x}}
\def\ybf{{\bf y}}
\def\xbf{{\bf x}}
\def\ybf{{\bf y}}
\def\Abf{{\bf A}}
\def\Ibf{{\bf I}}
\def\Wbf{{\bf W}}
\def\Xbf{{\bf X}}
\def\Bc{{\cal B}}
\def\Hc{{\cal H}}
\def\Oc{{\cal O}}
\def\Sc{{\cal S}}
\def\Xc{{\cal X}}
\def\Yc{{\cal Y}}
\def\nn{\nonumber}
\def\beq{\begin{equation}}
\def\eeq{\end{equation}}
\def\beqa{\begin{eqnarray}}
\def\eeqa{\end{eqnarray}}
\def\balign{\begin{align}}
\def\ealign{\end{align}}
\def\bpr{\begin{proof}}
\def\epr{\end{proof}}
\def\bth{\begin{theorem}}
\def\eth{\end{theorem}}
\def\blm{\begin{lemma}}
\def\elm{\end{lemma}}
\def\bprop{\begin{proposition}}
\def\eprop{\end{proposition}}
\def\bcr{\begin{corollary}}
\def\ecr{\end{corollary}}
\def\ie{{\it i.e.,\ \/}}
\def\eg{{\it e.g.,\ \/}}
\def\tr{{\rm tr}}
\def\E{\mathbb{E}}
\def\and {{\rm and}}
\newcommand{\mathbbm}[1]{\text{\usefont{U}{bbm}{m}{n}#1}} %
\newcommand{\cs}{{covariate shift}\xspace}
\newcommand{\css}{{covariate shifts}\xspace}
\newcommand{\IW}{{FTW-ERM}\xspace}
\newcommand{\IIW}{{FITW-ERM}\xspace}
\def \l{\left}
\def \r{\right}
\def \s{\sigma}
\def \eps{\epsilon}
\def \R{\mathbb{R}}
\def \trn{\mathrm{tr}}
\def \te{\mathrm{te}}
\def \ol{\overline}
\def \ul{\underline}
\def \tr{\operatorname{tr}}
\def \Der{\nabla\hspace{-1pt}} 
\def\Br{\mathrm{BD}}
\def\BD{\mathcal{E}}
\def\RU{\mathrm{ReLU}}
\def\LSIF{\mathrm{LSIF}}
\def\UKL{\mathrm{UKL}}
\def\LR{\mathrm{LR}}
\def\PU{\mathrm{PU}}
\def\Vol{\mathrm{Vol}}
\def\ind{\mathbbm{1}}
\def\dx{d_{\xbf}}
\def\dy{d_{\ybf}}
\def \supp{\operatorname{supp}}
\def\ra{\rightarrow}
\def\da{\downarrow}
\definecolor{fhcolor}{rgb}{0.523, 0.235, 0.625}
\title{Federated Learning under Covariate Shifts\\ with Generalization Guarantees}
\author{\name Ali Ramezani-Kebrya\thanks{These authors contributed equally to this work. This work was partially done while the first author was at LIONS, EPFL.} \email ali@uio.no \\
      \addr Department of Informatics, University of Oslo  and
Visual Intelligence Centre
      \AND
      \name  Fanghui Liu\footnotemark[1] \email fanghui.liu@epfl.ch \\
      \addr Laboratory for Information and Inference Systems (LIONS), EPFL
      \AND
      \name Thomas Pethick\footnotemark[1] \email thomas.pethick@epfl.ch\\
      \addr Laboratory for Information and Inference Systems (LIONS), EPFL
      \AND
      \name Grigorios Chrysos \email grigorios.chrysos@epfl.ch\\
      \addr Laboratory for Information and Inference Systems (LIONS), EPFL
      \AND
      \name Volkan Cevher \email volkan.cevher@epfl.ch\\
      \addr Laboratory for Information and Inference Systems (LIONS), EPFL
      }
\begin{document}

\maketitle

\begin{abstract}
This paper addresses intra-client and inter-client \css in federated learning (FL) with a focus on the overall generalization performance. To handle \css, we formulate a new global model training paradigm and propose Federated Importance-Weighted Empirical Risk Minimization (\IW) along with improving density ratio matching methods without requiring perfect knowledge of the supremum over true ratios. We also propose the communication-efficient variant \IIW with the same level of privacy guarantees as those of classical ERM in FL. We theoretically show that \IW achieves smaller generalization error than classical ERM under certain settings. Experimental results demonstrate the superiority of \IW over existing FL baselines in challenging imbalanced federated settings in terms of data distribution shifts across clients.%

\end{abstract}

\vspace{-5mm}
\section{Introduction}\label{sec:intro}
Federated learning (FL) \citep{Li20,FL,FL21}  is an efficient and powerful paradigm to collaboratively train a shared machine learning model among multiple clients, such as hospitals and cellphones, without sharing local data.

Existing FL literature mainly focuses on training a  model under the classical empirical risk minimization (ERM) paradigm in learning theory, with implicitly assuming that the training and test data distributions of each client are the same. However, this stylized setup overlooks the specific requirements of each client. Statistical heterogeneity is a major challenge for FL, which has been mainly studied in terms of non-identical data distributions across clients, i.e., inter-client distribution shifts~\citep{Li20,FL,FL21}. Even for a single client, the distribution shift between training and test data, i.e., intra-client distribution shift, has been a major challenge for decades~(\citealt{wang2018deep,kouw2019review}, and references therein). For instance, scarce disease data for training and test in a local hospital can be different. To adequately address the statistical heterogeneity challenge in FL, we need to handle both intra-client and inter-client distribution shifts under stringent requirements in terms of privacy and communication costs.

We focus on the {\it overall generalization performance} on multiple clients by considering both intra-client and inter-client distribution shifts. There exist three major challenges to tackle this problem: 1) how to modify the classical ERM to obtain an unbiased estimate of an overall true risk minimizer under  intra-client and inter-client distribution shifts; 
2) how to develop an efficient density ratio estimation method %
under stringent privacy requirements of FL; 3) are there theoretical guarantees for the modified ERM under the improved density ratio method in FL?
\begin{wrapfigure}{R}{0.49\textwidth}
            \includegraphics[width=0.49\textwidth]{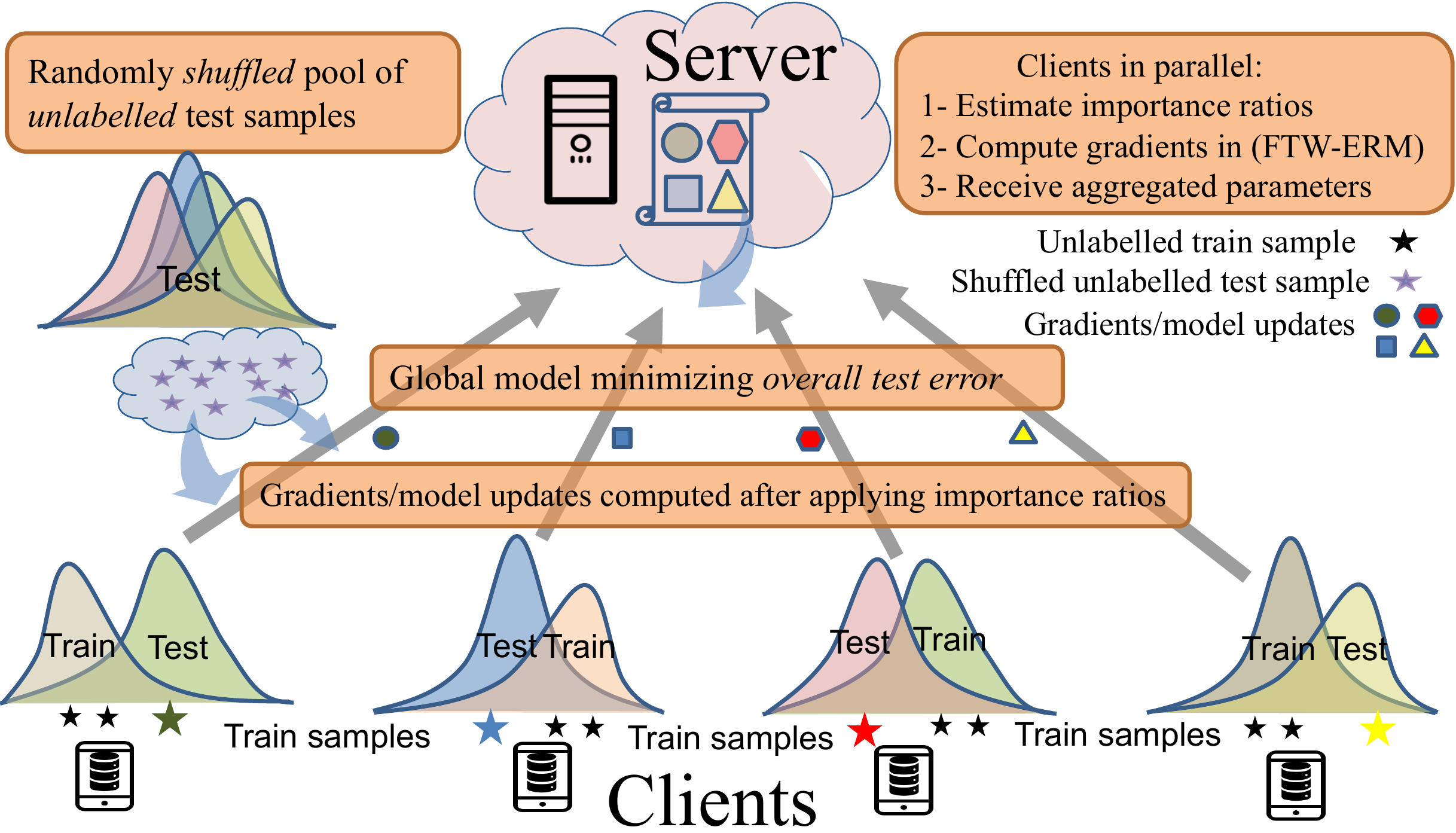}
            \caption{An overview of \IW. Marginal train and test distributions of clients are  arbitrarily different leading to intra-client and inter-client {\it \css}. To control privacy leakage, the server {\it randomly shuffles} unlabelled test samples and broadcasts to the clients.}
            \label{fig:FIDEM}
        \vspace*{-0.3cm}
\end{wrapfigure}

We aim to address the above challenges in our new paradigm for FL. For description simplicity, in our problem setting, we focus on \cs, which is the {\it most commonly used and studied} in {\it theory} and {\it practice} in distribution shifts \citep{sugiyama2007covariate,kanamori2009least,BD,uehara2020off,tripuraneni2021overparameterization,zhou2021training}.\footnote{Our results can be extended to other typical distribution shifts, e.g., target shift~\citep{kamyarIWERM}. We provide experimental results on target shift in~\cref{sec:experiment}.} To be specific, for any client $k$, \cs assumes marginal train distributions $p_k^{\trn}(\xbf)$ and marginal test distributions $p_k^{\te}(\xbf)$ can be arbitrarily different; while the conditional distribution $p_k^{\trn}(\ybf|\xbf)=p_k^{\te}(\ybf|\xbf) := p(\ybf|\xbf)$ remains the same, which gives rise to intra-client and inter-client {\it \css}. Handling \cs is a challenging issue, especially in federated settings~\citep{FL}. %

To this end, motivated by \citet{sugiyama2007covariate} under the classical \cs setting, we propose  Federated Importance-Weighted Empirical Risk Minimization (\IW), that considers \css across multiple clients in FL. We show that the learned global model under intra/inter-client \css is still unbiased in terms of minimizing the overall true risk, i.e., \IW is  \emph{consistent} in FL. 
To handle \css accurately, we propose a histogram-based density ratio matching method (DRM) under both intra/inter-client distribution shifts. Our method unifies well-known DRMs in FL, which has its own interest in the distribution shift community for ratio estimation ~\citep{zadrozny2004learning,huang2006correcting,sugiyama2007covariate,kanamori2009least,sugiyama2012density,zhang2020one,BD}. To fully eliminate {\it any privacy risks}, we introduce another variant of \IW, termed as Federated Independent Importance-Weighted Empirical Risk Minimization (\IIW). It does not require any form of data sharing among clients and preserves the {\it same level of privacy} and {\it same communication costs} as those of baseline federated averaging (FedAvg)~\citep{FedAvg}. An overview of~\IW is shown in~\cref{fig:FIDEM}. %

\subsection{Technical challenges and contributions}\label{sec:contr}

Learning on multiple clients in FL under \css via importance-weighted ERM  is challenging due to multiple data owners with own learning objectives, multiple potential but unpredictable train/test shift scenarios, privacy, and communication costs~\citep{FL}. To be specific,\\ 1) It is non-trivial to control privacy leakage to other clients while estimating ratios and relax the requirement to have perfect estimates of the supremum over true ratios, which is a key step for non-negative  Bregman divergence (nnBD) DRM. Our work handles inter/inter-client distribution shifts in FL;\\ %
2) It is challenging to obtain per-client bounds on ratio estimation error for a general nnBD DRM with multiple clients and imperfect estimates of the supremum due to intra/inter-client couplings in ratios. 
Note that, even if we have access to perfect estimates of density ratios, it is still unclear whether importance-weighted ERM results in smaller excess risk compared to classical ERM. Our work gives an initial attempt by providing an affirmative answer for ridge regression;\\
3) While well-established benchmarks for multi-client FL have been used, they are usually designed in a way that each client's test samples are drawn uniformly from a set of classes. However, we believe this might not be the case in real-world applications and then design realistic experimental settings in our work.

To address those technical challenges, we
\begin{itemize}
    \item Algorithmically propose an intuitive framework to minimize average {\it test error} in FL, design efficient mechanisms to control privacy leakage while estimating ratios (\IW) along with a privacy-preserving and communication-efficient variant (\IIW), and improve nnBD DRM under FL without requiring perfect knowledge of the supremum over true ratios.
    \item Theoretically establish high-probability guarantees on ratio estimation error for general nnBD DRM with multiple clients under imperfect estimates of the supremum, which unifies a number of DRMs, and  show benefits of importance weighting in terms of excess risk decoupled from density ratio estimation through bias-variance decomposition.
    \item Experimentally demonstrate more than $16\%$ overall test accuracy improvement over existing FL baselines when training ResNet-18~\citep{he2016deep} on CIFAR10~\citep{CIFAR10} in challenging imbalanced federated settings in terms of data distribution shifts across clients.
\end{itemize}

In conclusion, we expand the concept and application scope of FL to a general setting under intra/inter-client \css, provide an in-depth theoretical understanding of learning with \IW via a general DRM, and experimentally validate the utility of the proposed framework. We hope that our work opens the door to a new FL paradigm.

\subsection{Related work}\label{sec:relatedwork}
In this section, we overview a summary of related work. See~\cref{app:relatedwork} for complete discussion.

\paragraph{Federated learning.} The current FL literature largely focuses on minimizing the empirical risk, under the same training/test data distribution assumption over each client~\citep{Li20,FL,FL21}. Statistical heterogeneity across clients in training-time is handled using heuristics-based personalization methods that typically do not have a statistical learning theoretical support~\citep{FLMultiTask,Khodak,li2021ditto}.  In contrast, we focus on learning under both intra-client and inter-client \css.
Communication-efficient, robust, and secure aggregations can be viewed as complementary technologies, which can be used along with  \IW  to improve \IW's scalability and security while addressing overall generalization. Our theory focuses on cross-silo FL  where a number of trustworthy and available clients under intra/inter-client \css learn a global model collaboratively, and our experiments extend to scenarios with 100 clients and client partial participation.

\citet{wang2020tackling}~tackle update drifts considering variations in the number of local updates performed by each client in each communication round and focuses on minimizing the empirical risk under the same training/test data distribution assumption over each client. \citet{li2021fedbn}~propose FedBD to tackle inter-client feature shift by updating Batch Normalization (BN) layers locally and updating non-BN layers using FedAvg. They consider both inter-client \cs and concept shift but under the same training/test data distribution assumption over each client. \citet{de2022mitigating} consider Federated Domain Generalization and propose data augmentation to learn a model that generalizes to in-domain datasets of the participating clients and an out-of-domain dataset of a non-participating client. They propose to use FedAvg after proper data augmentation, which is orthogonal to the algorithmic design, e.g., our work. \cite{gupta2022fl} propose FL Games, a game-theoretic framework for learning causal features that are invariant across clients by using ensembles over clients' historical actions and increasing the local computation under the same training/test data distribution assumption over each client. Different from these work, we focus on learning and overall generalization performance, i.e.,  minimizing the average test error over all clients, under both intra/inter-client \css.

\paragraph{Importance-weighted ERM and density ratio matching.} \citet{shimodaira2000improving} introduce \cs where the input train and test distributions are different while the conditional distribution of the output variable given the input variable remains unchanged. Importance-weighted ERM is widely used to improve generalization performance under \cs~\citep{zadrozny2004learning,sugiyama2005input,huang2006correcting,sugiyama2007covariate,kanamori2009least,sugiyama2012density,fang2020rethinking,zhang2020one,BD}. \citet{sugiyama2012density} propose a BD-based DRM, which unifies various DRMs. \citet{BD} propose an nnBD-based DRM  when using  deep neural networks for density ratio estimation. Our work largely differs from~\citet{BD} in our {\it problem setting} that allows multiple clients, {\it algorithm design} to estimate different ratios across clients and relax the requirement to have  perfect estimates of the supremum over true ratios while controlling privacy leakage, and {\it theoretical analyses} to show the benefit of importance weighting in generalization.

\paragraph{Domain adaptation.} Distribution shifts between a source and a target domain have been a prominent problem in machine learning for several decades~\citep{wang2018deep,kouw2019review}. %
The premise behind such shifts is that data is frequently biased, and this results in distribution shifts that can be estimated by assuming some (unlabelled) knowledge of the target distribution. The following two categories of domain adaptation methods are most closely related to our work: a) sample-based, and b) feature-based methods. In feature-based methods, the goal is to find a transformation that maps the source samples to target samples~\citep{ganin2016domain, bousmalis2017unsupervised, das2018unsupervised, damodaran2018deepjdot}. Sample-based methods aim at minimizing the target risk through data in the source domain. Importance weighting is often used in sample-based methods~\citep{shimodaira2000improving, jiang2007instance, baktashmotlagh2014domain}.  However, the focus on domain adaptation has been mainly to adapt to a single target distribution, not the overall generalization performance on multiple clients, which is addressed in this paper.

\paragraph{Statistical generalization and excess risk bounds.} Understanding generalization performance of learning algorithms is one essential topic in modern machine learning.
Typical techniques to establish generalization guarantees include  uniform convergence by Rademacher complexity \citep{bartlett1998sample}, and its variants \citep{bartlett2005local}, bias-variance decomposition \citep{geman1992neural,adlam2020understanding}, PAC-Bayes \citep{mcallester1999some}, and stability-based analysis \citep{Bousquet,SSSS}.
Our work employs the first two techniques to analyze our density ratio estimation method in a federated setting and establish generalization guarantees for \IW, respectively.
Rademacher complexity has been used in FL to obtain theoretical guarantees on the centralized model \citep{mohri2019agnostic} and personalized model \citep{mansour2020three}. These work are different from our setting  where we consider multiple test distributions under different training/test data distributions for clients and focus on the overall test error. %
Bias-variance decomposition is typically studied in two settings, i.e., the fixed and random design setting, which is categorized by whether the (training) data are fixed or random.
This technique has been extensively applied in least squares \citep{hsu2012random,dieuleveut2017harder}, analysis of SGD \citep{jain2018parallelizing,zou2021benefits}, and double descent \citep{adlam2020understanding}.

\textbf{Notation:} We use $\E[\cdot]$ to denote the expectation and $\|\cdot\|$ to represent the Euclidean norm of a vector. We use lower-case bold font to denote vectors. Sets and scalars are represented by calligraphic and standard fonts, respectively. We use $[n]$ to denote $\{1,\ldots,n\}$ for an integer $n$. We use $\lesssim$ to ignore terms up to constants and logarithmic factors.

\section{Covariate shift and \IW for FL}\label{sec:prelim}
We first provide the problem setting under intra/inter client \css, and then describe the proposed \IW as an unbiased estimate in terms of minimizing the overall true risk\footnote{Notations are provided in~\cref{app:content}.}.%
\subsection{Problem setting}\label{sec:probsetting}
Let $\Xc \subseteq \reals^{\dx}$ be a compact metric space, $\Yc \subseteq \reals^{\dy}$, and $K$ be the number of clients in an FL setting. %
Let $\Sc_k=\{(\xbf_{k,i}^{\trn},\ybf_{k,i}^{\trn})\}_{i=1}^{n_k^{\trn}}$ denote the training set of client $k$ with $n_k^{\trn}$ samples drawn  i.i.d.\@ from an unknown probability distribution $p_k^{\trn}$ on $\Xc \times \Yc$.\footnote{For notational simplicity, we use the same notation for probability distributions and density functions.}
The test data of client $k$, is drawn from another unknown probability distribution $p_k^{\te}$ on $\Xc \times \Yc$. Under the \cs setting~\citep{sugiyama2007covariate,kanamori2009least,BD,uehara2020off,tripuraneni2021overparameterization,zhou2021training}, the conditional distribution $p_k^{\trn}(\ybf|\xbf)=p_k^{\te}(\ybf|\xbf) := p(\ybf|\xbf)$ is assumed to be the same for all $k$, while $p_k^{\trn}(\xbf)$ and $p_k^{\te}(\xbf)$ can be arbitrarily different, which gives rise to intra-client and inter-client \css. We consider supervised learning where the goal is to find a hypothesis $h_{\wbf}: \Xc\ra\Yc$, parameterized by $\wbf \in \mathbb{R}^d$ e.g., weights and biases of a neural network, such that $h_{\wbf}(\xbf)$ (for short $h(\xbf)$) is a good approximation of the label $\ybf \in \Yc$ corresponding to a new sample $\xbf \in \Xc$.
Let $\ell:\Xc\times\Yc\ra \R_+$ denote a loss function. 
In our FL setting, the true (expected) risk of client $k$ is given by  
$R_k(h_\wbf) = \E_{(\xbf,\ybf)\sim p_k^{\te}(\xbf,\ybf)}[\ell(h_\wbf(\xbf),\ybf)].$

\subsection{\IW for FL under \cs}\label{sec:IWERM}
We assume that  $p_k^{\trn}(\xbf^{\trn})>0$ for $k\in[K]$ and all $\xbf^{\trn}\in\Xc^{\trn}\subseteq \Xc$ with $\Xc^{\te}\subseteq\Xc^{\trn}$, i.e., we need a common data domain with strictly positive train density, which is a  common assumption~\citep{kanamori2009least,BD}.
For a scenario with $K$ clients, we first focus on  minimizing $R_l$ ($l\in[K]$) under intra/inter-client covariate shifts, i.e., $p_k^{\trn}(\xbf)\neq p_l^{\te}(\xbf)$ for all $k$. We then formulate \IW to minimize the average test error over $K$ clients under \css by optimizing a global model under our FL setting.

\paragraph{\IW for one client.} Under $p_k^{\trn}(\xbf)\neq p_l^{\te}(\xbf)$ $\forall k$, \IW focusing on minimizing $R_l$ is given by:
\begin{align}\label{IWERM:R1multi}
\min_{\wbf\in\reals^d} \sum_{k=1}^K \frac{1}{n_k^{\trn}}\sum_{i=1}^{n_k^{\trn}} \frac{p_l^{\te}(\xbf_{k,i}^{\trn})}{p_k^{\trn}(\xbf_{k,i}^{\trn})}\ell(h_\wbf(\xbf_{k,i}^{\trn}),\ybf_{k,i}^{\trn})\,.
\end{align} %
In~\cref{app:prop:IWERM}, we elaborate on four special cases of the above scenario, i.e., $p_k^{\trn}(\xbf)\neq p_l^{\te}(\xbf)$ $\forall k$, focusing on one client under various \css and formulate their \IW's. 

\bprop\label{Prop:IWERM} Let $l\in[K]$. \IW in~\cref{IWERM:R1multi} is consistent.  \ie the learned function converges in probability to the optimal function in terms of minimizing $R_l$.  %
\eprop
See~\cref{app:prop:IWERM} for the proof. \cref{Prop:IWERM} implies that, under intra/inter-client \css, \IW outputs an unbiased estimate of a true risk minimizer of client $l$. In~\cref{app:nointra}, we show usefulness of importance weighting under no intra-client \css but inter-client \css, which is a special and important case of our setting.

Building on \cref{IWERM:R1multi} that aims to minimize $R_l$, we now formulate \IW to minimize the average test error over all clients and explain its costs and benefits for federated settings. 

\paragraph{\IW for $K$ clients.} Let $\wbf$ be the global model. For $K$ clients under intra/inter-clinet \css, \IW minimizes the average test error over all clients and is formulated as:
\begin{align}\label{glob:obj}\tag{\IW}
\min_{\wbf\in\reals^d} F(\wbf):= \sum_{k=1}^K F_k(\wbf)
\end{align} where 
\begin{align}\label{ori:obj}
 F_k(\wbf) = \frac{1}{n^{\trn}_k} \sum_{i=1}^{n^{\trn}_k} \frac{ \sum_{l=1}^K p_l^{\te}(\xbf_{k,i}^{\trn})}{p_k^{\trn}(\xbf_{k,i}^{\trn})}\ell(h_\wbf(\xbf_{k,i}^{\trn}),\ybf_{k,i}^{\trn})\,.
\end{align}
Each client requires an estimate of a ratio in the form of sum of test densities over own train density, e.g., $\sum_{l=1}^K p_l^{\te}/{p_k^{\trn}}$ for client $k$. We emphasize that $F_k(\wbf)$ should  \textit{not} be viewed as the local loss function of client $k$. Our formulation~\ref{glob:obj} is meant to minimize the overall test error over all clients given intra/inter-client \css.
To solve~\ref{glob:obj}, we employ the stochastic gradient descent (SGD) algorithm for $T$ iterations starting from an initial parameter $\wbf_0$: $\wbf_{t+1} = \wbf_{t} - \eta_t \sum_{k=1}^K\gbf_k(\wbf_{t})$   
where $\eta_t>0$ is the step size, $\gbf_k(\wbf_{t})$ is an unbiased estimate of $\Der_\wbf F_k(\wbf_{t})$, and $\wbf_T$ is the output.

Under no \cs, both \ref{glob:obj} with true ratios and classical ERM result in the same solution, which is a minimizer of the overall {\it empirical risk}. The main difference happens under intra-client and inter-client \css. In those challenging settings, \ref{glob:obj}'s solution is an unbiased estimate of a minimizer of the overall {\it true risk}, while the solution of ERM minimizes the overall {\it empirical risk}.

\subsection{Privacy, communication, and computation in FL}
Privacy and communication efficiency are major concerns in FL~\citep{FL}. We elaborate on them and introduce another variant of~\ref{glob:obj} with the same  guarantees and costs as FedAvg. 

\paragraph{Communication/computational costs and security benefits.} Compared to classical ERM, the communication/computational overhead of~\ref{glob:obj} is negligible.\footnote{The analyses of computational/communication overheads are provided in Appendices~\ref{app:complexity} and~\ref{app:IIWERM}, respectively.} To solve \ref{glob:obj}, client $k$ should compute  an unbiased estimate of the weighted gradient $\Der_\wbf F_k(\wbf_{t})$, which requires a single backward pass at a single parameter $\wbf=\wbf_{t}$. Hence, given the ratios, there is no extra computational/communication overhead compared to classical ERM. Clients compute the ratios in parallel. In~\cref{app:IIWERM}, we provide a concrete example and show that the number of communication bits needed during training in standard FL is usually many orders of magnitudes larger than the size of samples shared for estimating the ratios. To further reduce communication costs of density ratio estimation and gradient aggregation, compression methods such as quantization, sparsification, and local updating rules, can be used along with~\ref{glob:obj} on the fly. More importantly, due to {\it importance weighting}, $\gbf_k (\wbf)$ can be arbitrarily different from an unbiased stochastic gradient of classical ERM for client $k$, \ie $\frac{1}{n^{\trn}_k}\sum_{i=1}^{n^{\trn}_k} \Der_{\wbf}\ell(h_\wbf(\xbf_{k,i}^{\trn}),\ybf_{k,i}^{\trn})$.
The formulation~\ref{glob:obj} makes it impossible for an adversary to apply gradient inversion attack and obtain private training data of clients~\citep{deepleakage}. In particular, the attacker cannot  formulate the correct optimization problem and reconstruct client $k$'s data unless the attacker has a perfect knowledge of the ratio $r_k(\xbf)={\sum_{l=1}^K p_l^{\te}(\xbf)}/{p_k^{\trn}(\xbf)}$ that client $k$  applies  when computing (stochastic) gradients in~\cref{ori:obj}.

\paragraph{Privacy.} Given $\{r_k(\xbf)\}_{k=1}^K$, ~\ref{glob:obj} efficiently minimizes the overall test error over all clients in a privacy-preserving manner. To estimate those ratios, if clients can tolerate some level of privacy leakage, clients send unlabelled samples $\xbf_{l,j}^{\te}$ for $l\in[K]$ and $j\in[n^{\te}]$ from their test distributions. To control privacy leakage to other clients, we propose that the server {\it randomly shuffles} these unlabelled samples before broadcasting to clients. %
In~\cref{app:limitation}, we discuss an alternative method instead of sending  original unlabelled samples and discuss its limitations.

To fully {\it eliminate any privacy risks} compared to classical ERM, clients may opt to minimize the following  surrogate objective, which we name Federated Independent Importance-Weighted Empirical Risk Minimization (\IIW):%
\begin{align}\label{glob:Surobj}\tag{\IIW}
\min_{\wbf\in\reals^d} \tilde F(\wbf):= \sum_{k=1}^K \frac{1}{n^{\trn}_k}\sum_{i=1}^{n^{\trn}_k} \frac{p_k^{\te}(\xbf_{k,i}^{\trn})}{p_k^{\trn}(\xbf_{k,i}^{\trn})}\ell(h_\wbf(\xbf_{k,i}^{\trn}),\ybf_{k,i}^{\trn})\,.
\end{align} %

The formulation~\ref{glob:Surobj} preserves the {\it same level of privacy} and {\it same communication costs} as those of classical ERM, e.g., FedAvg.\footnote{By estimating the ratios locally and absorbing into local losses,~\ref{glob:Surobj} can be viewed as a variant of classical ERM.} Ratios for~\ref{glob:Surobj} are obtained using local data and clients share only gradient information without sharing any data. Clients estimate and apply ratios using their own local data, which essentially modifies their local loss function. This modified local loss for~\ref{glob:Surobj} can be directly substituted in any formal differential privacy results for ERM such as those in~\citep{FL}.
However, to exploit  the entire data distributed among all clients and achieve the optimal global model in terms of overall test error, clients need to compromise some level of privacy and share unlabelled test samples with the server. Hence, in this paper, we focus on the original objective in~\ref{glob:obj}.

\section{Ratio estimation for FL under \cs}\label{sec:ratio}%

To solve~\ref{glob:obj}, client $k$ should have access to an accurate estimate of this ratio
\begin{align}\label{ratio_k}\small
r_k(\xbf)=\frac{\sum_{l=1}^K p_l^{\te}(\xbf)}{p_k^{\trn}(\xbf)}\,. \vspace{-4mm}
\end{align}
Ratio estimation is a key step for importance weighting~\citep{sugiyama2007covariate,sugiyama2012density}.
The discrepancy between the true ratio $r^*_k$ for client $k$ in \cref{ratio_k} and the estimated one $r_k$ using our ratio model can be measured by $\E_{p_k^{\trn}}[\Br_f(r_k^*(\xbf) \parallel r_k(\xbf))]$ where the Bregman divergence (BD) associated with a strictly convex $f$  leads to BD-based DRMs \citep{BD,kiryo2017positive}:

\begin{definition}[\citealt{bregman1967relaxation}] Let $\Bc_f\subset [0,\infty)$ be bounded and $f: \Bc_f\ra \R$ be a strictly convex function with bounded gradient. The BD associated with $f$ from $\tilde z$ to $z$ is given by~$\Br_f(\tilde z \parallel z) = f(\tilde z) - f(z) -\Der f(z)(\tilde z-z).$ 
\end{definition}
Note that $\Br_f(\tilde z \parallel z)$ is a convex function w.r.t.\@ $\tilde z$; however, it is not necessarily convex w.r.t.\@ $z$. The bounded ${\cal B}_f$ is a standard assumption~\citep{BD}, which holds in our problem since the density ratios that are inputs of BD  are bounded following the assumption in~\cref{sec:IWERM}. We estimate the supremum over true ratios in~\cref{sec:supp} and provide examples of $f$ commonly used for BD-based methods in~\cref{tab:f_BD} of~\cref{app:content}. Motivated by~\citet{BD,kiryo2017positive}, we propose a new histogram-based DRM (HDRM) for FL with multiple clients. HDRM overcomes the over-fitting issue \citep{kiryo2017positive,BD} while providing an estimate for the upper bound $\ol r_k=\sup_{\xbf\in\Xc^{\trn}}r_k^*(\xbf)$, which is a key step for non-negative BD (nnBD) DRM. We now extend nnBD DRM to FL settings. 

\subsection{Extension of nnBD DRM to FL}
Let $\Hc_r\subset\{r:\Xc\ra \Bc_f\}$ denote a hypothesis class for our ratios $r_k$, e.g., neural networks with a given architecture. Our goal is to estimate  $r_k$ by minimizing the discrepancy $\E_{p_k^{\trn}}[\Br_f(r_k^*(\xbf) \parallel r_k(\xbf))]$, which leads to BD-based DRM for FL and is formulated in \cref{app:BDDRM}. Let $\{\xbf_{k,i}^{\trn}\}_{i=1}^{n^{\trn}_k}$ and $\{\xbf_{l,j}^{\te}\}_{j=1}^{n^{\te}}$ denote unlabelled samples drawn   i.i.d.\@ from distributions $p_k^{\trn}$ and $p_l^{\te}$, respectively, for $l\in[K]$. %
Standard BD-based DRM is shown to suffer from an over-fitting issue where $-\frac{1}{n^{\te}}\sum_{j=1}^{n^{\te}}\Der f(r_k(\xbf_{l,j}^{\te}))$ diverges if there is no lower bound on this term~\citep{kiryo2017positive,BD}.
To resolve this issue in FL, we consider 
non-negative BD (nnBD) DRM for client $k$, \ie $\min_{r_k\in \Hc_r}  \hat\BD^+_f(r_k)$ where 
\begin{align}\label{nBD_mul}\small
\begin{split}
\!\!\!\!\hat\BD^+_f(r_k)= \RU\Big(\frac{1}{n_k^{\trn}}\sum_{i=1}^{n_k^{\trn}}\ell_1(r_k(\xbf_{k,i}^{\trn}))-\frac{C_k}{n^{\te}}\sum_{j=1}^{n^{\te}}\sum_{l=1}^K\ell_1(r_k(\xbf_{l,j}^{\te}))\Big)+ \frac{1}{n^{\te}}\sum_{j=1}^{n^{\te}}\sum_{l=1}^K\ell_2(r_k(\xbf_{l,j}^{\te}))\,,
\end{split}
\end{align} $\RU(z)=\max\{0,z\}$,  $0<C_k<\frac{1}{\ol r_k}$, $\ol r_k=\sup_{\xbf\in\Xc^{\trn}}r_k^*(\xbf)$, $\ell_1(z)=\Der f(z)z-f(z)$, and $\ell_2(z)=C(\Der f(z)z-f(z))-\Der f(z)$. Intuitively, $\RU$ is used for non-negativity and $0<C_k<\frac{1}{\ol r_k}$ acts as a regularization parameter.
Substituting different $f$'s into~\cref{nBD_mul} leads to different variants of nnBD, which covers previous work~\citep{basu1998robust,Hastie2001,gretton2009covariate,nguyen2010estimating,kato2018learning}. We provide explicit expressions of those variants for client $k$ in~\cref{app:nnBDvars_mul}. In this work, we focus on $f(z)=\frac{(z-1)^2}{2}$ leading to the well-known least-squares importance fitting (LSIF) variant of nnBD for client $k$.

\begin{algorithm}[t]
\SetAlgoLined
    \KwIn{Samples $\{\{\xbf_{k,i}^{\trn}\}_{i=1}^{n_k^{\trn}}\}_{k=1}^K$, $\{\{\xbf_{l,j}^{\te}\}_{j=1}^{n^{\te}}\}_{l=1}^K$,  learning rate 
    $\alpha$, regularization $\Lambda(r)$ and regularization coefficient $\lambda$.}
    \KwOut{Ratio model parameters $\{\thetabf_{r_k}\}_{k=1}^K$.}
    \For{$k=1$ {\bfseries to} $K$ (in parallel)}{
	   Send $n^{\te}$ samples to the server \;
	 }  
	 Server randomly shuffles and broadcasts samples   $\{\{\xbf_{l,j}^{\te}\}_{j=1}^{n^{\te}}\}_{l=1}^K$ to clients \;
	   
	\For{$k=1$ {\bfseries to} $K$ (in parallel)}{
	   Create $M$ bins and compute $\tilde r_{k,m}= \frac{{1}/{n^{\te}}\sum_{j=1}^{n^{\te}}\sum_{l=1}^K\ind(\xbf_{l,j}^{\te}\in \Bc_m)}{{1}/{n_k^{\trn}}\sum_{i=1}^{n_k^{\trn}}\ind(\xbf_{k,i}^{\trn}\in \Bc_m)}$\;
	   Estimate $C_k=\frac{1}{\max\{\tilde r_{k,1},\ldots,\tilde r_{k,M}\}}$\;
	}
    
\For{$t=1$ {\bfseries to} $T$}{
	\For{$k=1$ {\bfseries to} $K$ (in parallel)}{
		\For{$n=1$ {\bfseries to} $N_k$}{
			\If{$\frac{1}{B_k^{\trn}}\sum_{i=1}^{B_k^{\trn}}\ell_1(r_k(\xbf_{k,n,i}^{\trn}))-\frac{KC_k}{B_k^{\te}}\sum_{j=1}^{B_k^{\te}}\ell_1(r_k(\xbf_{k,n,j}^{\te}))\geq 0$}{
			$\gbf_k=-\Der_{\thetabf_{r}}\Big(\frac{1}{B_k^{\trn}}\sum_{i=1}^{B_k^{\trn}}\ell_1(r_k(\xbf_{k,n,i}^{\trn}))-\frac{KC_k}{B_k^{\te}}\sum_{j=1}^{B_k^{\te}}\ell_1(r_k(\xbf_{k,n,j}^{\te}))+\frac{K}{B_k^{\te}}\sum_{j=1}^{B_k^{\te}}\ell_2(r_k(\xbf_{k,n,j}^{\te}))+\frac{\lambda}{2}\Lambda(r_k)\Big)$\;
			}		
			\Else{
			$\gbf_k=\Der_{\thetabf_{r}}\Big(\frac{1}{B_k^{\trn}}\sum_{i=1}^{B_k^{\trn}}\ell_1(r_k(\xbf_{k,n,i}^{\trn}))-\frac{KC_k}{B_k^{\te}}\sum_{j=1}^{B_k^{\te}}\ell_1(r_k(\xbf_{k,n,j}^{\te}))+\frac{\lambda}{2}\Lambda(r_k)\Big)$\;			
			
			}
		Update ratio model parameters $\thetabf_{r_k}=\thetabf_{r_k}+\alpha \gbf_k$\;  %
		}		
		
	}
	
}
\vspace{0.1cm}
\caption{Histogram-based density ratio matching. Loops are executed in parallel on each client.}
\label{HDRMalg}
\end{algorithm}
\subsection{Estimation of the upper bound $\ol r_k$}\label{sec:supp} Estimating $\ol r_k=\sup_{\xbf\in\Xc^{\trn}}r_k^*(\xbf)$ is a key step for nnBD DRM. For a single train and test distribution, it is shown that overestimating $\ol r$ leads to significant performance degradation~\citep[Section 5]{BD}. \citet{BD} considered $0<C<\frac{1}{\ol r}$ as a hyper-parameter, which can be tuned. 
However, obtaining an efficient estimate of $\ol r_k$ is desirable, in particular when training a deep model.
Here we propose a histogram-based method for estimation of $\ol r_k$.

Let $\Bc\subset\Xc^{\trn}$, and assume $p_k^{\trn}$ and $p_l^{\te}$ are continuous for $l\in[K]$. Since $\Bc$ is connected and  Lebesgue-measurable with finite measure,   by applying intermediate value theorem~\citep{russ1980translation}, there exist $\tilde \xbf^{\trn}$ and $\hat \xbf^{\te}$ such that $\Pr\{X_k^{\trn}\in\Bc\}=p_k^{\trn}(\tilde \xbf^{\trn})\Vol(\Bc)$ and $\sum_{l=1}^K\Pr\{X_l^{\te}\in \Bc\}=\sum_{l=1}^K p_l^{\te}(\hat \xbf^{\te})\Vol(\Bc)$ where $\Vol(\Bc)=\int_{\xbf\in\Bc} \ud \xbf$.  We note that  $\sup_{\xbf\in \Bc}r_k^*(\xbf)\leq \frac{\sup_{\xbf\in \Bc}\sum_{l=1}^Kp_l^{\te}(\xbf)}{\inf_{\xbf\in \Bc}p_k^{\trn}(\xbf)}$ and $\frac{\sum_{l=1}^Kp_l^{\te}(\hat x^{\te})}{p_k^{\trn}(\tilde x^{\trn})}\leq \frac{\sup_{\xbf\in \Bc}\sum_{l=1}^Kp_l^{\te}(\xbf)}{\inf_{\xbf\in \Bc}p_k^{\trn}(\xbf)}.$ To estimate $\ol r_k$ , we first partition $\Xc^{\trn}$ into $M$ bins where for each bin $\Bc_m$, if there exists some $\xbf_{k,i}^{\trn}\in \Bc_m$, then we define $\tilde r_{k,m} := \frac{\sum_{l=1}^K\Pr\{X_l^{\te}\in \Bc_m\}}{\Pr\{X_k^{\trn}\in \Bc_m\}}\simeq \frac{\frac{1}{n^{\te}}\sum_{j=1}^{n^{\te}}\sum_{l=1}^K\ind(\xbf_{l,j}^{\te}\in \Bc_m)}{\frac{1}{n_k^{\trn}}\sum_{i=1}^{n_k^{\trn}}\ind(\xbf_{k,i}^{\trn}\in \Bc_m)}$
for $m\in[M]$. Otherwise, $\tilde r_{k,m}=0$. Finally, we propose to use $C_k=\frac{1}{\tilde r_k}$ where $\tilde r_k= \max\{\tilde r_{k,1},\ldots,\tilde r_{k,M}\}.$ Convergence of $\tilde r_k$ to $\ol r_k$ is established in~\cref{app:tilder}. Furthermore, for high-dimensional data, an efficient implementation of HDRM using  $k$-means clustering is provided in~\cref{app:tilder}. We note that the number of elementary operations for computing~\eqref{nBD_mul} and its gradients per step dominates that of running the efficient $k$-means clustering to estimate $C_k$.

In HDRM, $K$ clients estimate their ratios in parallel. To be specific, clients first share unlabelled test samples with the server. The server returns the randomly shuffled pool of samples to all clients. Then clients find $C_k$'s in parallel.  Given $C_k$'s, clients estimate their corresponding ratios in parallel. To handle high-dimensional data samples and deep ratio estimation models, we adopt a variant of SGD.  For client $k$, we divide unlabelled samples $\{\xbf_{k,i}^{\trn}\}_{i=1}^{n_k^{\trn}}$ and $\{\xbf_{l,j}^{\te}\}_{j=1}^{n^{\te}}$ for $l\in[K]$ into $N_k$ batches $\{\xbf_{k,n,i}^{\trn}\}_{i=1}^{B_k^{\trn}}$ and $\{\xbf_{k,n,j}^{\te}\}_{j=1}^{B_k^{\te}}$ for $n\in[N_k]$.  Client $k$ first computes $\frac{1}{B_k^{\trn}}\sum_{i=1}^{B_k^{\trn}}\ell_1(r_k(\xbf_{k,n,i}^{\trn}))-\frac{KC_k}{B_k^{\te}}\sum_{j=1}^{B_k^{\te}}\ell_1(r_k(\xbf_{k,n,j}^{\te}))$. If it becomes negative, then we apply a gradient ascent step to increase this term.  We  may also opt to apply $1$-norm or $2$-norm regularizations. The details of the HDRM algorithm are shown in~\cref{HDRMalg}.

\section{Theoretical guarantees}\label{sec:theory}

To address learning on multiple clients in FL, it is essential to obtain per-client generalization bounds for a general nnBD DRM with imperfect estimates of $\overline r_k$'s. Even if we have access to perfect estimates of density ratios, it is still unclear the usefulness of importance weighting.
In this section, we firstly study the high-probability guarantees on ratio estimation error of nnBD DRM under imperfect estimate of $\overline r_k$ in terms of BD risk. We then show the benefit of importance weighting in term of excess risk through a refined bias-variance decomposition on a ridge regression problem. \cref{Thm:gen},~\cref{lemma:newbv},~\cref{thm:excessp} are proved in~\cref{app:thm:gen},~\cref{app:lemma:newbv}, and~\cref{app:thm:excessp}, respectively. 

\subsection{Ratio estimation error in terms of BD risk}
We establish a high-probability bound on the ratio estimation error  of nnBD DRM with an arbitrary $f$ for client $k$ in terms of BD risk given by 
\begin{align}\label{BrObj_mul}
\BD_f(r_k)= \tilde\E_{k}(\xbf)[\ell_1(r_k(\xbf))]+\sum_{l=1}^K\E_{p_l^{\te}}[\ell_2(r_k(\xbf))]\,
\end{align} where $\tilde\E_{k}:=\E_{p_k^{\trn}}-C_k\sum_{l=1}^K\E_{p_l^{\te}}$. Our bound for client $k$ depends on the Rademacher complexity~\citep{koltchinskii2001rademacher} of the hypothesis class for our density ratio model $\Hc_r\subset\{r:\Xc\ra \Bc_f\}$ w.r.t.\@ client $k$ train distribution $p_k^{\trn}$ and all client's test distributions $p_l^{\te}$ for $l\in[K]$. Let $R_n^p(\Hc)$ denotes the Rademacher complexity of function class $\Hc$ w.r.t.\@ distribution $p$, formally defined as follows:%

\begin{definition} Let $n\in\integers_+$ and $p$ be a distribution, $\Sc=\{\xbf_1,\ldots,\xbf_n\}$ be i.i.d.\@ random variables drawn from $p$, and $\Hc$ be a function class. The Rademacher complexity of $\Hc$ w.r.t.\@ $p$ is given by: 
\begin{align}\nn
R_n^p(\Hc)=\E_{\Sc}\E_{\sigmabf}\left[\sup_{r\in\Hc}\Big|\frac{1}{n}\sum_{i=1}^n\sigma_i r(\xbf_i)\Big|\right]
\end{align} where $\{ \sigma_i\}_{i=1}^n$ are Rademacher variables uniformly chosen from $\{-1,1\}$.
\end{definition}

We first make the following  assumptions on $\ell_1(z)=\Der f(z)z-f(z)$ and $\ell_2(z)=C(\Der f(z)z-f(z))-\Der f(z)$.
\begin{assumption}[Basic assumptions on $\ell_1$ and $\ell_2$]\label{Assum:gen}
\hspace{0pt}
We assume 1) $\sup_{z\in\Bc_f}\max_{i\in\{1,2\}}|\ell_i(z)|<\infty$; 2) $\ell_1$ is $L_1$-Lipschitz and $\ell_2$ is $L_2$-Lipschitz on $\Xc$; 3) $\inf_{r\in\Hc_r} \tilde\E_{k}[\ell_1(r_k(\xbf))]>0$ for $k\in[K]$.
\end{assumption}

The first two assumptions are satisfied if $\inf\{z|z\in\Bc_f\}>0$ for commonly used loss functions, e.g., unnormalized
Kullback– Leibler and  logistic regression. The third assumption is mild, commonly used in DRM literature~\citep{kiryo2017positive,lu2020mitigating,BD}. 
\bth[High-probability ratio estimation error bound for client $k$]\label{Thm:gen} Let $f$ be a strictly convex function with bounded gradient.~Denote $\Delta_\ell:=\sup_{z\in\Bc_f}\max_{i\in\{1,2\}}|\ell_i(z)|$, $\hat r_k:=\arg\min_{r_k\in\Hc_r} \hat\BD^+_f(r_k)$ and $r_k^*:=\arg\min_{r_k\in\Hc_r} \BD_f(r_k)$ where $\hat\BD^+_f$ and $\BD_f$ are defined in Eqs. \eqref{nBD_mul} and \eqref{BrObj_mul}, respectively.  Suppose that $\ell_1$ and $\ell_2$  satisfy~\cref{Assum:gen}, then for any $0<\delta<1$, with probability at least $1-\delta$:%
\begin{align}\small\label{genboundapprox}
	\!\!\!    \BD_f(\hat r_k) - \BD_f(r_k^*) \lesssim R_{n^{\trn}_k}^{p^{\trn}_k}(\Hc_r) + C_k \sum_{l=1}^K R_{n^{\te}}^{p^{\te}_l}(\Hc_r) +  \sqrt{\Upsilon\log \frac{1}{\delta}} + KC_k\Delta_\ell\exp\big(\frac{-1}{\Upsilon}\big)
\end{align} where $\Upsilon=\Delta_\ell^2({1}/{n^{\trn}_k} + C_k^2 {K}/{n^{\te}})$. %
\eth

\begin{remark} 
\cref{Thm:gen} provides generalization guarantees for a general nnBD DRM in  a  federated setting under a strictly convex $f$ with bounded gradient. 
We make the following remarks.\\
1) Typically, the required number of samples to accurately estimate density ratios scales  exponentially with the dimensionality of data due to the curse of dimensionality. \cref{Thm:gen} bounds estimation error without considering approximation error. The curse of dimensionality is avoided when e.g., the ratio model $\Hc_r$ is rich enough and contains the true ratio that is smooth enough.\\ 
2) Our results are general to cover various ratio models. For example, in~\cref{cr:gen_nn} of~\cref{app:gen_nn}, we consider  neural networks with depth $L$ and bounded Frobenius norm $\|\Wbf_i\|_F\leq \Delta_{\Wbf_i}$ and establish explicit ratio estimation error bounds for client $k$ in $\Oc\big(\sqrt{L}\prod_{i=1}^L\Delta_{\Wbf_i}(1/\sqrt{n^{\trn}_k}+K/\sqrt{n^{\te}})+\sqrt{\Upsilon\log \frac{1}{\delta}} + KC_k\Delta_\ell\exp\big(\frac{-1}{\Upsilon}\big)\big)$.\\ 
3) If the additional error due to estimation of $\ol r_k$ with HDRM in~\cref{sec:ratio} using $M$ bins is considered, it leads to $\Oc(K\Delta_\ell\big(\frac{1}{M}+\sqrt{\frac{M}{n^{\trn}_k}}\big))$ under mild assumptions. Refer to~\cref{app:adderr} for details. \\
4) Our error bound increases with $K$ due to the structure of BD risk. Note that $K$ is in a constant order. Our goal  is to show that nnBD DRM is guaranteed to generalize in a general federated setting.
\end{remark}

\subsection{ Excess risk  and benefit of \IW}\label{sec:biasvar}
In this section, we aim to demonstrate the benefit of importance weighting in term of excess risk through bias-variance decomposition.
We consider the classical least squares problem, a good starting point to understand the superiority of \IW over ERM with generalization guarantees.
We consider the single client setting $K=1$ for the ease of description, and our results can be extended to the multiple clients setting.

Let $(\xbf, y)$ denote the (test) data sampled from an unknown probability measure $\rho$. The least squares problem is to estimate the true parameter $\thetabf_*$, which is assumed to be the unique solution that minimizes the \emph{population risk} in a Hilbert space $\Hc$:
	$L(\thetabf_*) = \min_{\thetabf \in \Hc}L(\thetabf)$
where  $\ L(\thetabf):= \frac{1}{2} \mathbb{E}_{(\xbf, y) \sim \rho} [(y - \thetabf^{\!\top} \xbf)]^2$. 
Moreover, we have  $L(\thetabf_*) = \sigma_{\epsilon}^2$ corresponding to the noise level.
For an estimate $\thetabf$ found by a learning algorithm such as ridge regression, its performance is measured by the expected \emph{excess risk}, $R(\thetabf):= \E[L(\thetabf)] - L(\thetabf_*)$, where the expectation is over the random noise, randomness of the algorithm, and training data. In the following, we consider two settings: random-design setting and fixed-design settings where the training data matrix is random and given, respectively.

\paragraph{Bias variance decomposition.}~We need the following noise assumption for our proof.
\begin{assumption}
[{\citealt[bounded variance]{dhillon2013risk,zou2021benefits}}]
\label{Assum:eps}
Let $\epsilon: = y - \thetabf_*^\top\xbf$. We assume that $\mathbb{E}[\epsilon] = 0$ and $\mathbb{E}[\epsilon^2]=\sigma_{\epsilon}^2$.
\end{assumption}
We have the following lemma on the bias-variance decomposition of the ridge regression \IW estimate in the random-design setting.
\begin{lemma}\label{lemma:newbv}
Let $\Xbf \in \mathbb{R}^{n \times d}$ be the training data matrix. Let ${\Wbf}=\diag(w_1,\ldots,w_n)$ with $w_i = {p^{\te}(\xbf_i)}/{p^{\trn}(\xbf_i)}$ for $i\in[n]$, $\hat{\thetabf}$ be the regularized least square estimate with importance weighting: $\hat\thetabf= \arg\min_{\thetabf} \sum_{i=1}^n w_i (\thetabf^\top\xbf_i-y_i)^2+\lambda\|\thetabf\|_2^2$ where  $\lambda$ is the regularization parameter.
Denote $\thetabf_*$ be the true estimate, then the excess risk can be decomposed as the bias {\tt B} and the variance {\tt V}: $\mathbb{E}[L(\hat{\thetabf})]-L(\thetabf_*) = {\tt B} + {\tt V}\,,$ 
with
\begin{equation*}
{\tt B} := \lambda^{2} \E\left[\bm {\theta}_*^{\top} \Sigmabf_{\Wbf,\lambda}^{-1} \Sigmabf^{\te} \Sigmabf_{\Wbf,\lambda}^{-1} \bm {\theta}_* \! \right] \,, \quad
{\tt V} \!:=\! \sigma^2_{\epsilon} \mathbb{E} \l[\mathrm{tr} \l(\! \Sigmabf_{\Wbf,\lambda}^{-1} \Xbf^{\!\top} {\Wbf}^2 \Xbf \Sigmabf_{\Wbf,\lambda}^{-1}  \Sigmabf^{\te}\r)\! \r]\,
\end{equation*} 
where  $\Sigmabf_{\Wbf,\lambda} := \mathbf{X}^{\top} {\Wbf} \mathbf{X} \!+ \! \lambda \mathbf{I}$ and $\Sigmabf^{\te}  = \E_\xbf[\xbf\xbf^\top]$.
Note that the expectation is taken over the randomness of the training data matrix $\Xbf$ and label noise.
\end{lemma}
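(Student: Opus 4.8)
The plan is to reduce the expected excess risk to a single quadratic form in the estimation error $\hat\thetabf-\thetabf_*$, and then split that error into a deterministic part (the bias) and a zero-mean part (the variance). First I would write the minimizer of the strongly convex weighted ridge objective in closed form: setting $\Der_\thetabf$ of $(\Xbf\thetabf-\ybf)^\top\Wbf(\Xbf\thetabf-\ybf)+\lambda\|\thetabf\|_2^2$ to zero gives the normal equations $(\Xbf^\top\Wbf\Xbf+\lambda\Ibf)\thetabf=\Xbf^\top\Wbf\ybf$, i.e. $\hat\thetabf=\Sigmabf_{\Wbf,\lambda}^{-1}\Xbf^\top\Wbf\ybf$. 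Next, writing the test pair as $y=\thetabf_*^\top\xbf+\epsilon$ with $\epsilon$ as in \cref{Assum:eps}, I would expand $L$ over the test law $\rho$; since $\thetabf_*$ is the population minimizer, the normal equation $\E_\xbf[\xbf\epsilon]=\mathbf 0$ kills the cross term, leaving $L(\thetabf)-L(\thetabf_*)=(\thetabf-\thetabf_*)^\top\Sigmabf^{\te}(\thetabf-\thetabf_*)$ with $\Sigmabf^{\te}=\E_\xbf[\xbf\xbf^\top]$ (matching the normalization in which $L(\thetabf_*)=\sigma_\epsilon^2$). Taking the outer expectation over the training design and its noise then gives $\E[L(\hat\thetabf)]-L(\thetabf_*)=\E[(\hat\thetabf-\thetabf_*)^\top\Sigmabf^{\te}(\hat\thetabf-\thetabf_*)]$.

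The key algebraic step is to substitute $\ybf=\Xbf\thetabf_*+\epsilonbf$ into the closed form and use the identity $\Xbf^\top\Wbf\Xbf=\Sigmabf_{\Wbf,\lambda}-\lambda\Ibf$ to telescope the signal part:
\[
\hat\thetabf-\thetabf_*=\Sigmabf_{\Wbf,\lambda}^{-1}(\Sigmabf_{\Wbf,\lambda}-\lambda\Ibf)\thetabf_*-\thetabf_*+\Sigmabf_{\Wbf,\lambda}^{-1}\Xbf^\top\Wbf\epsilonbf=-\lambda\Sigmabf_{\Wbf,\lambda}^{-1}\thetabf_*+\Sigmabf_{\Wbf,\lambda}^{-1}\Xbf^\top\Wbf\epsilonbf.
\]
The first summand is deterministic given $\Xbf$ and yields the bias; the second is linear in $\epsilonbf$ with conditional mean zero and yields the variance.

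I would then plug this into the quadratic form and take the expectation over the label noise first, conditional on $\Xbf$. The bias--variance cross term vanishes because $\E[\epsilonbf\mid\Xbf]=\mathbf 0$, so the bias contributes $\lambda^2\thetabf_*^\top\Sigmabf_{\Wbf,\lambda}^{-1}\Sigmabf^{\te}\Sigmabf_{\Wbf,\lambda}^{-1}\thetabf_*$. For the variance term I would invoke $\E[\epsilonbf\epsilonbf^\top\mid\Xbf]=\sigma_\epsilon^2\Ibf$ together with the identity $\E[\epsilonbf^\top\Abf\epsilonbf]=\sigma_\epsilon^2\tr(\Abf)$ for $\Abf=\Wbf\Xbf\Sigmabf_{\Wbf,\lambda}^{-1}\Sigmabf^{\te}\Sigmabf_{\Wbf,\lambda}^{-1}\Xbf^\top\Wbf$, and then cyclic invariance of the trace to rewrite it as $\sigma_\epsilon^2\tr(\Sigmabf_{\Wbf,\lambda}^{-1}\Xbf^\top\Wbf^2\Xbf\Sigmabf_{\Wbf,\lambda}^{-1}\Sigmabf^{\te})$. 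Taking the outer expectation over $\Xbf$ recovers ${\tt B}$ and ${\tt V}$ exactly as stated.

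The main obstacle is bookkeeping of the two distinct layers of randomness rather than any deep estimate: the inner expectation inside $L$ runs over the test pair $(\xbf,y)\sim\rho$, while the outer expectation runs over the random training design $\Xbf$ and its label noise $\epsilonbf$, so I must be explicit that $\Sigmabf^{\te}$ is a fixed test-population matrix independent of $\Xbf$, and that the two separate cross terms (one in the population-risk expansion, one in the estimator decomposition) vanish for the correct reason. Concretely, the decomposition uses the conditional, homoscedastic reading of \cref{Assum:eps} --- $\E[\epsilonbf\mid\Xbf]=\mathbf 0$ and $\E[\epsilonbf\epsilonbf^\top\mid\Xbf]=\sigma_\epsilon^2\Ibf$ --- which is the standard form in the random-design bias--variance literature and holds under the well-specified covariate-shift model $\E[y\mid\xbf]=\thetabf_*^\top\xbf$; I would flag that no extra independence between $\Wbf$ and $\epsilonbf$ is needed, since each weight $w_i$ is a function of $\xbf_i$ alone.
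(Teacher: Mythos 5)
Your proposal is correct and follows essentially the same route as the paper's proof: closed-form normal equations for the weighted ridge estimator, substitution of $\ybf=\Xbf\thetabf_*+\epsilonbf$, reduction of the excess risk to $\E\|\hat\thetabf-\thetabf_*\|_{\Sigmabf^{\te}}^2$, the telescoping identity giving the $-\lambda\Sigmabf_{\Wbf,\lambda}^{-1}\thetabf_*$ bias term, and the trace computation for the variance. Your explicit handling of the two vanishing cross terms and of the conditional form of the noise assumption is slightly more careful than the paper's write-up but does not change the argument.
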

\begin{remark}
Our results in~\cref{lemma:newbv} hold under the fixed-design setting where the training data are given~\citep{dhillon2013risk,hsu2012random}, by omitting the expectations from {\tt B} and {\tt V}.
\end{remark}

\paragraph{One-hot case.} To theoretically prove that \IW outperforms ERM in non-trivial settings, we start from the one-hot case, along the lines of \citet{zou2021benefits},  and strictly show that, under which level of covariate shift, the excess risk of \IW is always smaller than the classical ERM.

To be specific, in the one-hot case, every training data $\xbf$ is sampled from the set of natural basis $\{\ebf_1,\ebf_2,\dots,\ebf_d\}$ according to the data distribution given by $\mathrm{Pr} \{ \xbf = \ebf_i\} = \lambda_i$ where $0 < \lambda_i \le 1$ and $\sum_i \lambda_i = 1$.
The class of one-hot least square instances is characterized by
the following problem set: $\big\{ (\thetabf_*; \lambda_1, \ldots, \lambda_d) :\ \thetabf_* \in \mathcal{H}, ~\textstyle{\sum_{i}} \lambda_i = 1 \big\}\,.$ %
It is not difficult to show that the population second momentum matrix  is  $\Sigmabf^{\tr} = \E[\xbf_i\xbf_i^\top]= \diag(\lambda_1, \dots, \lambda_d)$ for $i\in[n]$.
Similarly, we assume that each test data follows  the same scheme but with different probabilities $\mathrm{Pr} \{ \xbf = \bm e_i\} = \lambda'_i$, and hence, we have $\Sigmabf^{\te} = \diag(\lambda'_1, \dots, \lambda'_d)$. This is a relatively simple setting, which admits covariate shift.%
Take $\{ \mu_1, \mu_2, \ldots, \mu_d \}$ as the eigenvalues of $\Xbf^{\!\top} \Xbf$.
Since $\xbf_i$ can only take on natural basis, the eigenvalue $\mu_i$ can be understood as the number of training data that equals $\ebf_i$. 
For notational simplicity, we  rearrange the order of the training data following the decreasing order of the ratio, such that the $i$-th sample $\xbf_i$ corresponds to the ratio $w_i$ as the exact $i$-th largest value.

\begin{theorem}\label{thm:excessp}
	Let $\hat{\thetabf}$ be the estimate of \IW, $\thetabf^{\mathrm{v}}$ be the classical ERM, and $\xi_i := \frac{\lambda}{\lambda+ \mu_i}$. Under the fixed-design setting in the one-hot case, label noise assumption, and data correlation assumption, if the ratio $w_i := {p^{\te}(\xbf_i)}/{p^{\trn}(\xbf_i)}$ satisfies
\begin{equation}\label{eqcondratio}\small
	 \sqrt{ \frac{\lambda_i'}{\lambda_i} } - 1   \leq w_i \leq  \xi_i \sqrt{ \frac{\lambda_i}{\lambda_i'} }\,,
\end{equation}
then we have $R(\hat{\thetabf}) \leqslant R(\thetabf^{\mathrm{v}}) \,.$

\end{theorem}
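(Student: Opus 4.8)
The plan is to turn the matrix-level excess-risk expressions of \cref{lemma:newbv} into scalar, per-coordinate quantities using the one-hot structure, and then reduce the claim to a one-dimensional inequality in each $w_i$. First I would instantiate \cref{lemma:newbv} in the fixed-design setting (dropping the outer expectations, as in the remark following the lemma). Because every relevant matrix is diagonal in the one-hot case --- $\Sigmabf^{\tr}=\diag(\lambda_i)$, $\Sigmabf^{\te}=\diag(\lambda_i')$, $\Xbf^{\top}\Xbf=\diag(\mu_i)$, and, collecting the $\mu_i$ training points that equal $\ebf_i$ and all share ratio $w_i$, $\Xbf^{\top}\Wbf\Xbf=\diag(\mu_i w_i)$ and $\Xbf^{\top}\Wbf^{2}\Xbf=\diag(\mu_i w_i^{2})$ --- the matrix $\Sigmabf_{\Wbf,\lambda}=\diag(\mu_i w_i+\lambda)$ is diagonal and both ${\tt B}$ and ${\tt V}$ diagonalize. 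Carrying this out gives the closed form
\[
R(\hat{\thetabf})=\sum_{i=1}^{d}\lambda_i'\,\frac{\lambda^{2}\theta_{*,i}^{2}+\sigma_{\epsilon}^{2}\mu_i w_i^{2}}{(\mu_i w_i+\lambda)^{2}}\,,
\]
while the classical ERM estimate $\thetabf^{\mathrm v}$ is exactly the $w_i\equiv 1$ case, so $R(\thetabf^{\mathrm v})=\sum_i\lambda_i'\,(\lambda^{2}\theta_{*,i}^{2}+\sigma_{\epsilon}^{2}\mu_i)/(\mu_i+\lambda)^{2}$.

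Since each weight $\lambda_i'\ge 0$ and each summand depends only on its own $w_i$, it suffices to prove the comparison coordinatewise: writing $\phi_i(w):=(\lambda^{2}\theta_{*,i}^{2}+\sigma_{\epsilon}^{2}\mu_i w^{2})/(\mu_i w+\lambda)^{2}$, the goal becomes $\phi_i(w_i)\le\phi_i(1)$ for every $i$. This decoupling --- valid precisely because the one-hot design makes the train and test second-moment matrices simultaneously diagonal --- is what reduces the theorem to a scalar problem and is the reason the hypothesis \eqref{eqcondratio} can be imposed independently per coordinate.

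Next I would analyze the scalar map $\phi_i$. A direct differentiation gives $\phi_i'(w)=2\mu_i\lambda(\sigma_{\epsilon}^{2}w-\lambda\theta_{*,i}^{2})/(\mu_i w+\lambda)^{3}$, so $\phi_i$ is strictly quasiconvex with unique minimizer $w_i^{\star}=\lambda\theta_{*,i}^{2}/\sigma_{\epsilon}^{2}$, decreasing to its left and increasing to its right. Consequently $\phi_i(w_i)\le\phi_i(1)$ holds exactly on the interval bounded by $1$ and its mirror image through $w_i^{\star}$ (the second root of the quadratic $\phi_i(w)-\phi_i(1)$, of which $w=1$ is always a root). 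The remaining work is to show that the explicit window \eqref{eqcondratio} lies inside this favorable interval. Here the \emph{data correlation assumption} enters: it pins down (or bounds) the per-coordinate signal-to-noise level $\lambda^{2}\theta_{*,i}^{2}/\sigma_{\epsilon}^{2}$ in terms of the train/test masses $\lambda_i,\lambda_i'$, which is exactly what converts the $\theta_{*,i},\sigma_{\epsilon}$-dependent mirror point into the clean, distribution-only bounds $\sqrt{\lambda_i'/\lambda_i}-1$ and $\xi_i\sqrt{\lambda_i/\lambda_i'}$. After substituting this relation and clearing the positive denominators, $\phi_i(w_i)\le\phi_i(1)$ becomes a factorable quadratic inequality in $w_i$ whose admissible set contains \eqref{eqcondratio}.

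The main obstacle is this last step. The bias contribution $\lambda^{2}\theta_{*,i}^{2}/(\mu_i w+\lambda)^{2}$ is decreasing in $w$ while the variance contribution $\sigma_{\epsilon}^{2}\mu_i w^{2}/(\mu_i w+\lambda)^{2}$ is increasing, so neither piece improves over ERM on its own and the two effects must be traded off quantitatively. Making this trade-off explicit --- showing that across the whole window \eqref{eqcondratio} the variance reduction dominates the bias inflation in the regime $w_i<1$ (governed by the lower bound) and conversely that the bias reduction dominates the variance inflation in the regime $w_i>1$ (governed by the upper bound $\xi_i\sqrt{\lambda_i/\lambda_i'}$, which is where the factor $\xi_i=\lambda/(\lambda+\mu_i)$ surfaces) --- is the delicate part, and it is where the two-sided structure of \eqref{eqcondratio} is genuinely needed. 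Verifying that the data correlation assumption is exactly strong enough to close this quadratic inequality without over-constraining the admissible shift is the crux; by contrast, the diagonalization and the per-coordinate reduction of the earlier steps are essentially bookkeeping once the fixed-design form of \cref{lemma:newbv} is in hand.
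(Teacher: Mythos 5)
Your reduction breaks down at the point where you identify the ERM baseline. You take $R(\thetabf^{\mathrm{v}})$ to be the $w_i\equiv 1$ specialization of the same test-weighted formula, i.e.\ $\sum_i \lambda_i'\bigl(\lambda^2\theta_{*,i}^2+\sigma_\epsilon^2\mu_i\bigr)/(\mu_i+\lambda)^2$, so that each coordinate reduces to comparing a single scalar map $\phi_i$ at $w_i$ and at $1$ --- and in that map the test mass $\lambda_i'$ cancels entirely. The paper instead compares against ERM's bias and variance weighted by the \emph{train} eigenvalues $\lambda_i$: it states explicitly that in ${\tt B}(\thetabf^{\mathrm{v}})=\lambda^2\sum_i[(\thetabf_*)_i]^2\lambda_i/(\mu_i+\lambda)^2$ the $\lambda_i$ are the eigenvalues of $\Sigmabf^{\tr}$, and likewise for ${\tt V}(\thetabf^{\mathrm{v}})$. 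The factor $\sqrt{\lambda_i'/\lambda_i}$ in \eqref{eqcondratio} arises precisely from this train/test mismatch between the two sides of the comparison; it cannot emerge from your decoupled $\phi_i$, where $\lambda_i'$ is a common factor. You sense this and defer the missing link to the ``data correlation assumption,'' hoping it ties $\lambda^2\theta_{*,i}^2/\sigma_\epsilon^2$ to $\lambda_i,\lambda_i'$ --- but no such assumption is formulated anywhere in the paper, so your route cannot reach the stated window. Worse, under your convention the claim is actually false inside \eqref{eqcondratio}: take $\theta_{*,i}=0$ and $\lambda=\mu_i$ (so $\xi_i=1/2$) with $\lambda_i/\lambda_i'=9$; then $w_i=1.4$ satisfies \eqref{eqcondratio}, yet $\phi_i$ is strictly increasing when $\theta_{*,i}=0$, so $\phi_i(1.4)>\phi_i(1)$.

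The paper's actual argument is also structurally simpler than your quasiconvexity analysis: it never combines bias and variance into one function. It shows ${\tt B}(\hat{\thetabf})\le{\tt B}(\thetabf^{\mathrm{v}})$ term by term from $\frac{\mu_i+\lambda}{\mu_i w_i+\lambda}\le\sqrt{\lambda_i/\lambda_i'}$, which follows from the lower bound $w_i\ge\sqrt{\lambda_i'/\lambda_i}-1$ via the elementary inequality $\frac{a+c}{b+c}\le\frac{a}{b}+1$, and separately shows ${\tt V}(\hat{\thetabf})\le{\tt V}(\thetabf^{\mathrm{v}})$ from $\frac{\lambda/w_i}{\mu_i+\lambda}\ge\sqrt{\lambda_i'/\lambda_i}$, which is exactly the upper bound $w_i\le\xi_i\sqrt{\lambda_i/\lambda_i'}$. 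Your diagonalization of \cref{lemma:newbv} and the closed form for $R(\hat{\thetabf})$ are correct and match the paper; the gap is entirely in the choice of baseline and in the unproven bridge you assign to an undefined assumption.
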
 
\begin{remark}
We have the following remarks:\\
1) The condition (\ref{eqcondratio}) is equivalent to $\sqrt{\frac{\lambda_i'}{\lambda_i}} \in \Big(0, \frac{1+\sqrt{1+4 \xi_i}}{2}\Big)$, which requires the training and test data to behave similarly in terms of eigenvalues, avoiding significant differences under distribution shifts for learnability.
Other metrics, e.g., similarity on eigenvectors \citep{tripuraneni2021overparameterization} also coincide with the spirit of our assumption. \\
2) The ratio matrix is ${\Wbf} \in \mathbb{R}^{n \times n}$. However, we only need its top $d$ eigenvalue, i.e., the top-$d$ ratios.
In particular, the last $n-d$ ratios have no effect on the final excess risk. 
This makes our algorithm robust to noise and domain shift.\\
3) For the special case by taking the ratio as $w_i := \sqrt{\frac{\lambda_i'}{\lambda_i}}$,  we have
\begin{equation*}
	{\tt B}(\hat{\thetabf}) = \lambda^2 \sum_{i=1}^d \frac{ [(\bm {\theta}_*)_{i}]^2 \lambda_i' }{\left[  \mu_i w_i + \lambda \right]^2 } = \lambda^2 \sum_{i=1}^d \frac{ [(\bm {\theta}_*)_{i}]^2 \lambda_i }{\left[  \mu_i  + \sqrt{\frac{\lambda_i}{\lambda_i'}} \lambda \right]^2 }\,,
\end{equation*}
which implies that the ratio can be regarded as an implicit regularization~\citep{zou2021benefits}.

\end{remark}

\section{Experimental evaluation}\label{sec:experiment}

\begin{table*}[!tb]
\setlength{\tabcolsep}{5pt}
\centering
\caption{Fashion MNIST with label shift across five clients, where each client receives different fractions of examples from each class. In this case, \IW achieves a better average accuracy than the baselines.}
\label{fig:target-shift}
\begin{tabular}{llll}
\toprule
{} &                   \textbf{\IW} &                  \textbf{\IIW} &      \textbf{FedAvg} \\
\midrule
Average accuracy  &  {\bfseries 0.8245} $\pm$ 0.0111 &              0.7942 $\pm$ 0.0096 &  0.5475 $\pm$ 0.0093 \\
Client 1 accuracy &  {\bfseries 0.8627} $\pm$ 0.0175 &              0.8336 $\pm$ 0.0066 &  0.3978 $\pm$ 0.0215 \\
Client 2 accuracy &  {\bfseries 0.9308} $\pm$ 0.0057 &              0.8896 $\pm$ 0.0124 &  0.9143 $\pm$ 0.0048 \\
Client 3 accuracy &  {\bfseries 0.7742} $\pm$ 0.0618 &              0.7275 $\pm$ 0.0261 &  0.3677 $\pm$ 0.0297 \\
Client 4 accuracy &              0.7933 $\pm$ 0.0598 &  {\bfseries 0.8204} $\pm$ 0.0152 &  0.6566 $\pm$ 0.0447 \\
Client 5 accuracy &  {\bfseries 0.7616} $\pm$ 0.0593 &              0.6998 $\pm$ 0.0649 &  0.4009 $\pm$ 0.0642 \\
\bottomrule
\end{tabular}
\end{table*}

\begin{table*}[!tb]
\centering
\caption{Average, worst-case, and best-case client accuracies  of CIFAR10 target shift experiment across 100 clients where 5 randomly sampled clients participate in every round of training.}
\label{app:fig:target-shift:cifar10:client100:results}
\begin{tabular}{llll}
\toprule
{} &      \textbf{\IW} & \textbf{FedAvg} & \textbf{FedBN} \\
\midrule
Average client accuracy &  {\bfseries 0.7658} &          0.7237 &         0.4934 \\
Worst client accuracy   &  {\bfseries 0.6163} &          0.5403 &         0.1678 \\
Best client accuracy    &  {\bfseries 0.9016} &          0.8904 &         0.8233 \\
\bottomrule
\end{tabular}
\end{table*}

In this section, we illustrate conditions under which \IW is favored over both Federated Averaging (FedAvg) \citep{FedAvg},  FedBN~\citep{li2021fedbn}, and \IIW.
For MNIST-based experiments we use a LeNet~\citep{lecun1989backpropagation} with cross entropy loss and compute standard deviations over \emph{5 independent executions}.
For CIFAR10-based experiments we use the larger ResNet-18 network \citep{he2016deep}.
Further implementation details can be found in \cref{app:exp}.
\paragraph{Target shift.}~We consider the case of target shift where the label distribution $p(y)$ changes but the conditional distribution $p(\xbf|y)$ remains invariant.
We split the 10-class Fashion MNIST dataset between 5 clients and simulate a target shift by including different fractions of examples from each class across the training data and test data.
We further consider the separable case in order to compute the exact ratio for \IW and \IIW in closed form.
The specific distribution and the construction of the ratio can be found in \cref{app:target-shift}.
The results in \cref{fig:target-shift} illustrate that \IIW can outperform FedAvg on average while preserving the same level of privacy.
By relaxing the privacy slightly the proposed \IW improves on FedAvg uniformly across all clients. %
Even though the proportions of the classes have been artificially created, we believe that this demonstrates a realistic scenario where clients have a different fraction of samples per class. %
Additional experiments using larger models on the CIFAR10 dataset under a challenging target shift setting can be found in \Cref{app:target-shift} where \IW is observed to improve uniformly over FedAvg. %

To model a scenario closer to real-world FL, we consider a setting  with 100 clients on CIFAR10 under challenging distribution shifts and partial participation of clients, which is a requirement for cross-device FL~\citep{FL,FL21}. We sub-sample 5 clients uniformly at random at every round for $200,000$ iterations. The target distribution is described in \Cref{app:fig:target-shift:cifar10:client100:dist} and experimental results can be found in \Cref{app:fig:target-shift:cifar10:client100:results}.
We observe that \IW uniformly improves the test accuracy when compared with FedAvg and FedBN \cite{li2021fedbn} and that the gap is especially large between the worst-performing clients. The difficulty of FedBN under partial participation is most likely due how the method performs batch normalization. The batch normalization parameters are only maintained locally on each client and are consequently only updated when the given client is sampled. For experiments under full participation see \Cref{app:fig:target-shift:cifar10}.

\begin{table*}[!tb]
\setlength{\tabcolsep}{10pt}
\renewcommand{\arraystretch}{0.5}
\centering
\caption{A challenging binary classification task on Colored MNIST with covariate shift across two clients.
\IW is close to the idealised baseline that ignores the spurious correlation (Grayscale).}%
\label{fig:covariate-shift}
\begin{tabular}{lllll}
\toprule
{} &  \textbf{Upper Bound (Grayscale)} &   \textbf{\IW} &              \textbf{\IIW} &              \textbf{FedAvg} \\
\midrule
Average accuracy  &   0.68 $\pm$ 0.01 &  {\bfseries 0.66} $\pm$ 0.01 &              0.63 $\pm$ 0.00 &              0.58 $\pm$ 0.01 \\
\bottomrule
\end{tabular}

\end{table*}

\paragraph{Covariate shift.}~We now focus on covariate shift, where $p(\xbf)$ undergoes a shift while $p(y|\xbf)$ remains unchanged.
For this setting, we extend the Colored MNIST dataset in \citet{arjovsky2019invariant} to the multi-client setting.
The dataset is constructed by first assigning a binary label 0 to digits from 0-4 and label 1 for digits between 5-9.
The label is then flipped with probability $0.25$ to make the dataset non-separable.
A spurious correlation is introduced by coloring the digits according to their assigned labels and then flipping the colors according to a different probability for each distribution (see \Cref{app:covariate-shift}).
For this experimental setup, we introduce an idealized scheme, which ignores the color and thus the spurious correlation, i.e., provides an upper bound, and is referred to as \emph{Grayscale}.
\IW outperforms both baselines in terms of the average accuracy even in a two-client setting. 
\IW is also close to  Grayscale upper bound that by construction ignores the spurious correlations.

\begin{figure}
    \centering
    \includegraphics[width=0.49\textwidth]{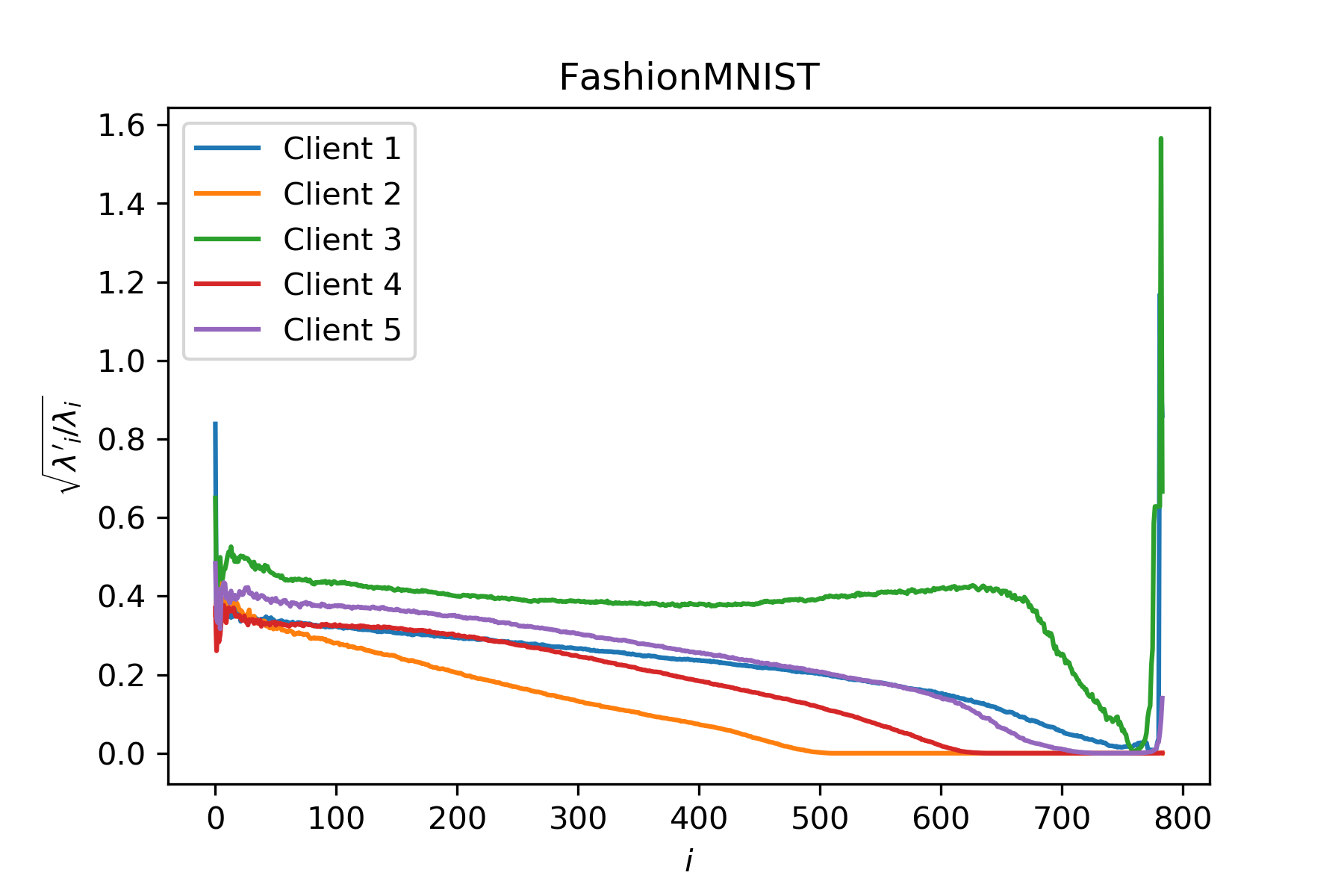}
    \includegraphics[width=0.49\textwidth]{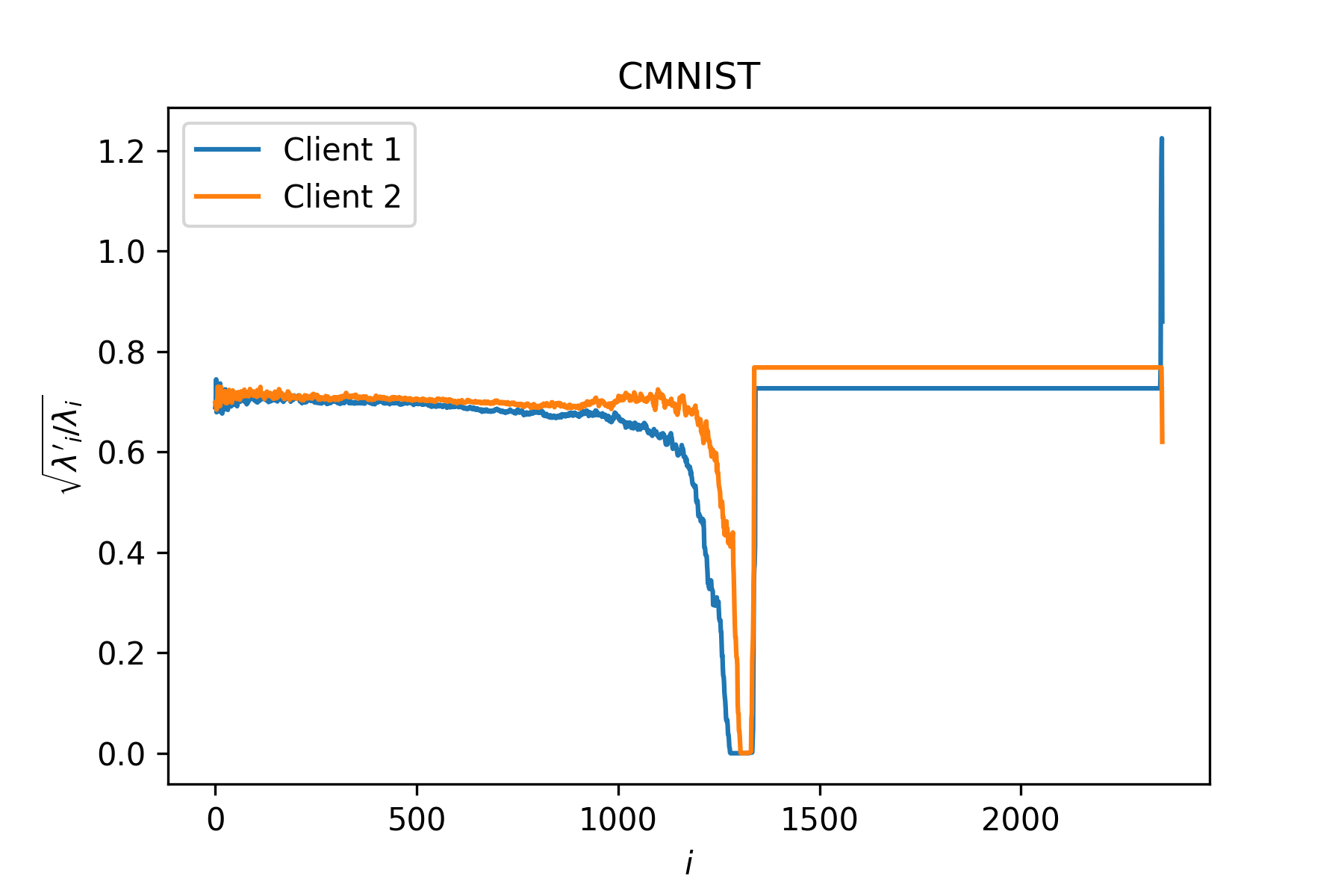}
    \caption{The squared ratio of eigenvalues ordered in descending order are all below 1 thus satisfying $\sqrt{\frac{\lambda_i'}{\lambda_i}} \in \Big(0, \frac{1+\sqrt{1+4 \xi_i}}{2}\Big)$ in~\Cref{thm:excessp}. The sudden increase in the ratio for the lowest eigenvalues are most likely due to numerical error occuring when the eigenvalues are close to zero (cf. \Cref{fig:eig}).}
    \label{fig:exp-asm-verify}
\end{figure}

\paragraph{Verifying assumptions.}%

Consider the two datasets used for the main experiments in \Cref{fig:target-shift} and \Cref{fig:covariate-shift}.
We verify in \Cref{fig:exp-asm-verify} that the eigenvalues of the training distribution and test distribution for each client satisfy $\sqrt{\frac{\lambda_i'}{\lambda_i}} \in \Big(0, \frac{1+\sqrt{1+4 \xi_i}}{2}\Big)$ in~\Cref{thm:excessp}.

\section{Conclusions and future work}\label{sec:conc}
In this work, we focus on FL under both intra-client and inter-client distribution shifts and propose \IW to improve the overall generalization performance. We establish  high-probability ratio estimation guarantees for a general DRM method in a federated setting.
We further show the benefit of importance weighting
in term of excess risk through bias-variance
decomposition in a ridge regression problem.
Our theoretical guarantees indicate how  \IW can provably solve a learning task under distribution shifts. We experimentally evaluate  \IW under both label shift and covariate shift cases. Our experimental results validate that under certain covariate and target shifts, the proposed method can learn the task, while baselines such as vanilla federated averaging fails to do so. We anticipate that our methods to be applicable in learning from e.g., medical data, where there might be arbitrary skews on the distribution. 
In addition, we believe our study can further encourage the investigation of distribution shifts in FL, as this is a critical subject for learning across clients.

\subsubsection*{Acknowledgments}
This work was supported by the Hasler Foundation Program: Hasler Responsible AI (project number 21043) and by the Swiss National Science Foundation (SNSF) under grant number 200021\_205011.

The work of Ali Ramezani-Kebrya was supported by the Research Council of Norway, through its Centre for Research-based Innovation funding scheme (Visual Intelligence under grant no. 309439),
and Consortium Partners.

\bibliography{Ref}

\begin{thebibliography}{103}
\providecommand{\natexlab}[1]{#1}
\providecommand{\url}[1]{\texttt{#1}}
\expandafter\ifx\csname urlstyle\endcsname\relax
  \providecommand{\doi}[1]{doi: #1}\else
  \providecommand{\doi}{doi: \begingroup \urlstyle{rm}\Url}\fi

\bibitem[Adlam \& Pennington(2020)Adlam and Pennington]{adlam2020understanding}
Ben Adlam and Jeffrey Pennington.
\newblock Understanding double descent requires a fine-grained bias-variance
  decomposition.
\newblock In \emph{Advances in Neural Information Processing Systems
  (NeurIPS)}, 2020.

\bibitem[Alistarh et~al.(2017)Alistarh, Grubic, Li, Tomioka, and
  Vojnovic]{QSGD}
Dan Alistarh, Demjan Grubic, Jerry Li, Ryota Tomioka, and Milan Vojnovic.
\newblock {QSGD}: Communication-efficient {SGD} via gradient quantization and
  encoding.
\newblock In \emph{Advances in Neural Information Processing Systems
  (NeurIPS)}, 2017.

\bibitem[Arjovsky et~al.(2019)Arjovsky, Bottou, Gulrajani, and
  Lopez-Paz]{arjovsky2019invariant}
Martin Arjovsky, L{\'e}on Bottou, Ishaan Gulrajani, and David Lopez-Paz.
\newblock Invariant risk minimization.
\newblock \emph{arXiv preprint arXiv:1907.02893}, 2019.

\bibitem[Azizzadenesheli(2022)]{kamyarIWERM}
Kamyar Azizzadenesheli.
\newblock Importance weight estimation and generalization in domain adaptation
  under label shift.
\newblock \emph{{IEEE} Transactions on Pattern Analysis and Machine
  Intelligence (T-PAMI)}, 44\penalty0 (10):\penalty0 6578--6584, 2022.

\bibitem[Baktashmotlagh et~al.(2014)Baktashmotlagh, Harandi, Lovell, and
  Salzmann]{baktashmotlagh2014domain}
Mahsa Baktashmotlagh, Mehrtash~T Harandi, Brian~C Lovell, and Mathieu Salzmann.
\newblock Domain adaptation on the statistical manifold.
\newblock In \emph{Conference on Computer Vision and Pattern Recognition
  (CVPR)}, pp.\  2481--2488, 2014.

\bibitem[Bartlett(1998)]{bartlett1998sample}
Peter~L Bartlett.
\newblock The sample complexity of pattern classification with neural networks:
  the size of the weights is more important than the size of the network.
\newblock \emph{IEEE transactions on Information Theory}, 44\penalty0
  (2):\penalty0 525--536, 1998.

\bibitem[Bartlett \& Mendelson(2002)Bartlett and
  Mendelson]{bartlett2002rademacher}
Peter~L Bartlett and Shahar Mendelson.
\newblock Rademacher and gaussian complexities: Risk bounds and structural
  results.
\newblock \emph{Journal of Machine Learning Research (JMLR)}, 3\penalty0
  (Nov.):\penalty0 463--482, 2002.

\bibitem[Bartlett et~al.(2005)Bartlett, Bousquet, and
  Mendelson]{bartlett2005local}
Peter~L Bartlett, Olivier Bousquet, and Shahar Mendelson.
\newblock Local rademacher complexities.
\newblock \emph{Annals of Statistics}, 33\penalty0 (4):\penalty0 1497--1537,
  2005.

\bibitem[Basu et~al.(1998)Basu, Harris, Hjort, and Jones]{basu1998robust}
Ayanendranath Basu, Ian~R Harris, Nils~L Hjort, and MC~Jones.
\newblock Robust and efficient estimation by minimising a density power
  divergence.
\newblock \emph{Biometrika}, 85\penalty0 (3):\penalty0 549--559, 1998.

\bibitem[Bickel \& Scheffer(2007)Bickel and Scheffer]{bickel2007dirichlet}
Steffen Bickel and Tobias Scheffer.
\newblock Dirichlet-enhanced spam filtering based on biased samples.
\newblock In \emph{Advances in Neural Information Processing Systems
  (NeurIPS)}, 2007.

\bibitem[Bousmalis et~al.(2017)Bousmalis, Silberman, Dohan, Erhan, and
  Krishnan]{bousmalis2017unsupervised}
Konstantinos Bousmalis, Nathan Silberman, David Dohan, Dumitru Erhan, and Dilip
  Krishnan.
\newblock Unsupervised pixel-level domain adaptation with generative
  adversarial networks.
\newblock In \emph{Conference on Computer Vision and Pattern Recognition
  (CVPR)}, pp.\  3722--3731, 2017.

\bibitem[Bousquet \& Elisseeff(2002)Bousquet and Elisseeff]{Bousquet}
Olivier Bousquet and Andr{\'e} Elisseeff.
\newblock Stability and generalization.
\newblock \emph{Journal of Machine Learning Research (JMLR)}, 2:\penalty0
  499--526, 2002.

\bibitem[Bregman(1967)]{bregman1967relaxation}
Lev~M. Bregman.
\newblock The relaxation method of finding the common point of convex sets and
  its application to the solution of problems in convex programming.
\newblock \emph{USSR computational mathematics and mathematical physics},
  7\penalty0 (3):\penalty0 200--217, 1967.

\bibitem[Caldas et~al.(2019)Caldas, Duddu, Wu, Li, Kone{\v{c}}n{\`y}, McMahan,
  Smith, and Talwalkar]{LEAF}
Sebastian Caldas, Sai Meher~Karthik Duddu, Peter Wu, Tian Li, Jakub
  Kone{\v{c}}n{\`y}, H~Brendan McMahan, Virginia Smith, and Ameet Talwalkar.
\newblock {LEAF}: A benchmark for federated settings.
\newblock In \emph{Advances in Neural Information Processing Systems
  (NeurIPS)}, 2019.

\bibitem[Cheng \& Chu(2004)Cheng and Chu]{cheng2004semiparametric}
Kuang~Fu Cheng and Chih-Kang Chu.
\newblock Semiparametric density estimation under a two-sample density ratio
  model.
\newblock \emph{Bernoulli}, 10\penalty0 (4):\penalty0 583--604, 2004.

\bibitem[Damodaran et~al.(2018)Damodaran, Kellenberger, Flamary, Tuia, and
  Courty]{damodaran2018deepjdot}
Bharath~Bhushan Damodaran, Benjamin Kellenberger, R{\'e}mi Flamary, Devis Tuia,
  and Nicolas Courty.
\newblock Deep{JDOT}: Deep joint distribution optimal transport for
  unsupervised domain adaptation.
\newblock In \emph{European Conference on Computer Vision (ECCV)}, 2018.

\bibitem[Das \& Lee(2018)Das and Lee]{das2018unsupervised}
Debasmit Das and CS~George Lee.
\newblock Unsupervised domain adaptation using regularized hyper-graph
  matching.
\newblock In \emph{IEEE International Conference on Image Processing (ICIP)},
  2018.

\bibitem[de~Luca et~al.(2022)de~Luca, Zhang, Chen, and Yu]{de2022mitigating}
Artur~Back de~Luca, Guojun Zhang, Xi~Chen, and Yaoliang Yu.
\newblock Mitigating data heterogeneity in federated learning with data
  augmentation.
\newblock \emph{arXiv preprint arXiv:2206.09979}, 2022.

\bibitem[Dhillon et~al.(2013)Dhillon, Foster, Kakade, and
  Ungar]{dhillon2013risk}
Paramveer~S. Dhillon, Dean~P. Foster, Sham~M. Kakade, and Lyle~H. Ungar.
\newblock A risk comparison of ordinary least squares vs ridge regression.
\newblock \emph{Journal of Machine Learning Research}, 14\penalty0
  (1):\penalty0 1505--1511, 2013.

\bibitem[Dieuleveut et~al.(2017)Dieuleveut, Flammarion, and
  Bach]{dieuleveut2017harder}
Aymeric Dieuleveut, Nicolas Flammarion, and Francis Bach.
\newblock Harder, better, faster, stronger convergence rates for least-squares
  regression.
\newblock \emph{Journal of Machine Learning Research}, 18\penalty0
  (1):\penalty0 3520--3570, 2017.

\bibitem[Duan et~al.(2021)Duan, Li, and Lu]{duan2021feddna}
Jian-Hui Duan, Wenzhong Li, and Sanglu Lu.
\newblock {FedDNA}: Federated learning with decoupled normalization-layer
  aggregation for non-{IID} data.
\newblock In \emph{Joint European Conference on Machine Learning and Knowledge
  Discovery in Databases}, 2021.

\bibitem[Faghri et~al.(2020)Faghri, Tabrizian, Markov, Alistarh, Roy, and
  Ramezani-Kebrya]{ALQ}
Fartash Faghri, Iman Tabrizian, Ilia Markov, Dan Alistarh, Daniel~M. Roy, and
  Ali Ramezani-Kebrya.
\newblock Adaptive gradient quantization for data-parallel {SGD}.
\newblock In \emph{Advances in Neural Information Processing Systems
  (NeurIPS)}, 2020.

\bibitem[Fang et~al.(2020)Fang, Lu, Niu, and Sugiyama]{fang2020rethinking}
Tongtong Fang, Nan Lu, Gang Niu, and Masashi Sugiyama.
\newblock Rethinking importance weighting for deep learning under distribution
  shift.
\newblock In \emph{Advances in Neural Information Processing Systems
  (NeurIPS)}, 2020.

\bibitem[Ganin et~al.(2016)Ganin, Ustinova, Ajakan, Germain, Larochelle,
  Laviolette, Marchand, and Lempitsky]{ganin2016domain}
Yaroslav Ganin, Evgeniya Ustinova, Hana Ajakan, Pascal Germain, Hugo
  Larochelle, Fran{\c{c}}ois Laviolette, Mario Marchand, and Victor Lempitsky.
\newblock Domain-adversarial training of neural networks.
\newblock \emph{Journal of Machine Learning Research (JMLR)}, 17\penalty0
  (1):\penalty0 2096--2030, 2016.

\bibitem[Gasanov et~al.(2022)Gasanov, Khaled, Horv{\'a}th, and
  Richt{\'a}rik]{Gasanov22}
Elnur Gasanov, Ahmed Khaled, Samuel Horv{\'a}th, and Peter Richt{\'a}rik.
\newblock {FLIX}: A simple and communication-efficient alternative to local
  methods in federated learning.
\newblock In \emph{International Conference on Artificial Intelligence and
  Statistics (AISTATS)}, 2022.

\bibitem[Geman et~al.(1992)Geman, Bienenstock, and Doursat]{geman1992neural}
Stuart Geman, Elie Bienenstock, and Ren{\'e} Doursat.
\newblock Neural networks and the bias/variance dilemma.
\newblock \emph{Neural computation}, 4\penalty0 (1):\penalty0 1--58, 1992.

\bibitem[Golowich et~al.(2018)Golowich, Rakhlin, and Shamir]{golowich2018size}
Noah Golowich, Alexander Rakhlin, and Ohad Shamir.
\newblock Size-independent sample complexity of neural networks.
\newblock In \emph{Conference On Learning Theory}, pp.\  297--299, 2018.

\bibitem[Goodfellow et~al.(2020)Goodfellow, Pouget-Abadie, Mirza, Xu,
  Warde-Farley, Ozair, Courville, and Bengio]{goodfellow2020generative}
Ian Goodfellow, Jean Pouget-Abadie, Mehdi Mirza, Bing Xu, David Warde-Farley,
  Sherjil Ozair, Aaron Courville, and Yoshua Bengio.
\newblock Generative adversarial networks.
\newblock \emph{Communications of the ACM}, 63\penalty0 (11):\penalty0
  139--144, 2020.

\bibitem[Gretton et~al.(2009)Gretton, Smola, Huang, Schmittfull, Borgwardt, and
  Sch{\"o}lkopf]{gretton2009covariate}
Arthur Gretton, Alex Smola, Jiayuan Huang, Marcel Schmittfull, Karsten
  Borgwardt, and Bernhard Sch{\"o}lkopf.
\newblock Covariate shift by kernel mean matching.
\newblock \emph{Dataset shift in machine learning}, 3\penalty0 (4):\penalty0 5,
  2009.

\bibitem[Gupta et~al.(2022)Gupta, Ahuja, Havaei, Chatterjee, and
  Bengio]{gupta2022fl}
Sharut Gupta, Kartik Ahuja, Mohammad Havaei, Niladri Chatterjee, and Yoshua
  Bengio.
\newblock {FL Games}: A federated learning framework for distribution shifts.
\newblock \emph{arXiv preprint arXiv:2205.11101}, 2022.

\bibitem[Hachiya et~al.(2012)Hachiya, Sugiyama, and
  Ueda]{hachiya2012importance}
Hirotaka Hachiya, Masashi Sugiyama, and Naonori Ueda.
\newblock Importance-weighted least-squares probabilistic classifier for
  covariate shift adaptation with application to human activity recognition.
\newblock \emph{Neurocomputing}, 80:\penalty0 93--101, 2012.

\bibitem[Hanin \& Rolnick(2019)Hanin and Rolnick]{hanin2019deep}
Boris Hanin and David Rolnick.
\newblock Deep {ReLU} networks have surprisingly few activation patterns.
\newblock In \emph{Advances in Neural Information Processing Systems
  (NeurIPS)}, 2019.

\bibitem[Hanzely et~al.(2020)Hanzely, Hanzely, Horv{\'a}th, and
  Richt{\'a}rik]{Hanzely20}
Filip Hanzely, Slavom{\'\i}r Hanzely, Samuel Horv{\'a}th, and Peter
  Richt{\'a}rik.
\newblock Lower bounds and optimal algorithms for personalized federated
  learning.
\newblock In \emph{Advances in Neural Information Processing Systems
  (NeurIPS)}, 2020.

\bibitem[Hastie et~al.(2001)Hastie, Tibshirani, and Friedman]{Hastie2001}
Trevor Hastie, Robert Tibshirani, and Jerome Friedman.
\newblock \emph{The elements of statistical learning: data mining, inference
  and prediction}.
\newblock Springer, 2001.

\bibitem[He et~al.(2016)He, Zhang, Ren, and Sun]{he2016deep}
Kaiming He, Xiangyu Zhang, Shaoqing Ren, and Jian Sun.
\newblock Deep residual learning for image recognition.
\newblock In \emph{Conference on Computer Vision and Pattern Recognition
  (CVPR)}, 2016.

\bibitem[Hido et~al.(2011)Hido, Tsuboi, Kashima, Sugiyama, and
  Kanamori]{hido2011statistical}
Shohei Hido, Yuta Tsuboi, Hisashi Kashima, Masashi Sugiyama, and Takafumi
  Kanamori.
\newblock Statistical outlier detection using direct density ratio estimation.
\newblock \emph{Knowledge and Information Systems}, 26\penalty0 (2):\penalty0
  309--336, 2011.

\bibitem[Hsu et~al.(2012)Hsu, Kakade, and Zhang]{hsu2012random}
Daniel Hsu, Sham~M. Kakade, and Tong Zhang.
\newblock Random design analysis of ridge regression.
\newblock In \emph{Conference on Learning Theory}, 2012.

\bibitem[Huang et~al.(2006)Huang, Gretton, Borgwardt, Sch{\"o}lkopf, and
  Smola]{huang2006correcting}
Jiayuan Huang, Arthur Gretton, Karsten Borgwardt, Bernhard Sch{\"o}lkopf, and
  Alex Smola.
\newblock Correcting sample selection bias by unlabeled data.
\newblock In \emph{Advances in Neural Information Processing Systems
  (NeurIPS)}, 2006.

\bibitem[Jain et~al.(2018)Jain, Kakade, Kidambi, Netrapalli, and
  Sidford]{jain2018parallelizing}
Prateek Jain, Sham Kakade, Rahul Kidambi, Praneeth Netrapalli, and Aaron
  Sidford.
\newblock Parallelizing stochastic gradient descent for least squares
  regression: mini-batching, averaging, and model misspecification.
\newblock \emph{Journal of Machine Learning Research}, 18, 2018.

\bibitem[Jiang \& Zhai(2007)Jiang and Zhai]{jiang2007instance}
Jing Jiang and ChengXiang Zhai.
\newblock Instance weighting for domain adaptation in {NLP}.
\newblock In \emph{Annual Meeting of the Association Computational Linguistics
  (ACL)}, 2007.

\bibitem[Jirayucharoensak et~al.(2014)Jirayucharoensak, Pan-Ngum, and
  Israsena]{jirayucharoensak2014eeg}
Suwicha Jirayucharoensak, Setha Pan-Ngum, and Pasin Israsena.
\newblock {EEG}-based emotion recognition using deep learning network with
  principal component based covariate shift adaptation.
\newblock \emph{The Scientific World Journal}, 2014.

\bibitem[Kairouz et~al.(2021)Kairouz, McMahan, Avent, Bellet, Bennis, Bhagoji,
  Bonawitz, Charles, Cormode, Cummings, D'Oliveira, Rouayheb, Evans, Gardner,
  Garrett, Gasc\'{o}n, Ghazi, Gibbons, Gruteser, Harchaoui, He, He, Huo,
  Hutchinson, Hsu, Jaggi, Javidi, Joshi, Khodak, Kone\u{c}n\'{y}, Korolova,
  Koushanfar, Koyejo, Lepoint, Liu, P., Mohri, Nock, \"{O}zg\"{u}r, Pagh,
  Raykova, Qi, Ramage, Raskar, Song, Song, Stich, Sun, Suresh, Tram\`{e}r,
  Vepakomma, Wang, Xiong, Xu, Yang, Yu, Yu, and Zhao]{FL}
P.~Kairouz, H.~B. McMahan, B.~Avent, A.~Bellet, M.~Bennis, A.~N. Bhagoji,
  K.~Bonawitz, Z.~Charles, G.~Cormode, R.~Cummings, R.~G.~L. D'Oliveira, S.~E.
  Rouayheb, D.~Evans, J.~Gardner, Z.~Garrett, A.~Gasc\'{o}n, B.~Ghazi, P.~B.
  Gibbons, M.~Gruteser, Z.~Harchaoui, C.~He, L.~He, Z.~Huo, B.~Hutchinson,
  J.~Hsu, M.~Jaggi, T.~Javidi, G.~Joshi, M.~Khodak, J.~Kone\u{c}n\'{y},
  A.~Korolova, F.~Koushanfar, S.~Koyejo, T.~Lepoint, Y.~Liu, P., M.~Mohri,
  R.~Nock, A.~\"{O}zg\"{u}r, R.~Pagh, M.~Raykova, H.~Qi, D.~Ramage, R.~Raskar,
  D.~Song, W.~Song, S.~U. Stich, Z.~Sun, A.~T. Suresh, F.~Tram\`{e}r,
  P.~Vepakomma, J.~Wang, L.~Xiong, Z.~Xu, Q.~Yang, F.~X. Yu, H.~Yu, and
  S.~Zhao.
\newblock Advances and open problems in federated learning.
\newblock \emph{Foundations and Trends\textsuperscript{\textregistered} in
  Machine Learning}, 14\penalty0 (1–2):\penalty0 1--210, 2021.

\bibitem[Kanamori et~al.(2009)Kanamori, Hido, and Sugiyama]{kanamori2009least}
Takafumi Kanamori, Shohei Hido, and Masashi Sugiyama.
\newblock A least-squares approach to direct importance estimation.
\newblock \emph{Journal of Machine Learning Research}, 10:\penalty0 1391--1445,
  2009.

\bibitem[Kanamori et~al.(2011)Kanamori, Suzuki, and Sugiyama]{kanamori2011f}
Takafumi Kanamori, Taiji Suzuki, and Masashi Sugiyama.
\newblock $f$-divergence estimation and two-sample homogeneity test under
  semiparametric density-ratio models.
\newblock \emph{IEEE Transactions on Information Theory}, 58\penalty0
  (2):\penalty0 708--720, 2011.

\bibitem[Kato \& Teshima(2021)Kato and Teshima]{BD}
Masahiro Kato and Takeshi Teshima.
\newblock Non-negative {Bregman} divergence minimization for deep direct
  density ratio estimation.
\newblock In \emph{International Conference on Machine Learning (ICML)}, 2021.

\bibitem[Kato et~al.(2019)Kato, Teshima, and Honda]{kato2018learning}
Masahiro Kato, Takeshi Teshima, and Junya Honda.
\newblock Learning from positive and unlabeled data with a selection bias.
\newblock In \emph{International Conference on Learning Representations
  (ICLR)}, 2019.

\bibitem[Kawahara \& Sugiyama(2009)Kawahara and Sugiyama]{kawahara2009change}
Yoshinobu Kawahara and Masashi Sugiyama.
\newblock Change-point detection in time-series data by direct density-ratio
  estimation.
\newblock In \emph{Proceedings of the SIAM international conference on data
  mining}, 2009.

\bibitem[Keziou \& Leoni-Aubin(2005)Keziou and Leoni-Aubin]{keziou2005test}
Amor Keziou and Samuela Leoni-Aubin.
\newblock Test of homogeneity in semiparametric two-sample density ratio
  models.
\newblock \emph{Comptes Rendus Math{\'e}matique}, 340\penalty0 (12):\penalty0
  905--910, 2005.

\bibitem[Khodak et~al.(2019)Khodak, Balcan, and Talwalkar]{Khodak}
Mikhail Khodak, Maria-Florina Balcan, and Ameet Talwalkar.
\newblock Adaptive gradient-based meta-learning methods.
\newblock In \emph{Advances in Neural Information Processing Systems
  (NeurIPS)}, 2019.

\bibitem[Kiryo et~al.(2017)Kiryo, Niu, Plessis, and
  Sugiyama]{kiryo2017positive}
Ryuichi Kiryo, Gang Niu, Marthinus C.~du Plessis, and Masashi Sugiyama.
\newblock Positive-unlabeled learning with non-negative risk estimator.
\newblock In \emph{Advances in Neural Information Processing Systems
  (NeurIPS)}, 2017.

\bibitem[Koh et~al.(2021)Koh, Sagawa, Marklund, Xie, Zhang, Balsubramani, Hu,
  Yasunaga, Phillips, Gao, et~al.]{WILDS}
Pang~Wei Koh, Shiori Sagawa, Henrik Marklund, Sang~Michael Xie, Marvin Zhang,
  Akshay Balsubramani, Weihua Hu, Michihiro Yasunaga, Richard~Lanas Phillips,
  Irena Gao, et~al.
\newblock {WILDS}: A benchmark of in-the-wild distribution shifts.
\newblock In \emph{International Conference on Machine Learning (ICML)}, 2021.

\bibitem[Koltchinskii(2001)]{koltchinskii2001rademacher}
Vladimir Koltchinskii.
\newblock Rademacher penalties and structural risk minimization.
\newblock \emph{{IEEE} Transactions on Information Theory}, 47\penalty0
  (5):\penalty0 1902--1914, 2001.

\bibitem[Kouw \& Loog(2019)Kouw and Loog]{kouw2019review}
Wouter~M Kouw and Marco Loog.
\newblock A review of domain adaptation without target labels.
\newblock \emph{{IEEE} Transactions on Pattern Analysis and Machine
  Intelligence (T-PAMI)}, 43\penalty0 (3):\penalty0 766--785, 2019.

\bibitem[Krizhevsky()]{CIFAR10}
Alex Krizhevsky.
\newblock Learning multiple layers of features from tiny images.
\newblock Technical report, University of Toronto, 2009.

\bibitem[LeCun et~al.(1989)LeCun, Boser, Denker, Henderson, Howard, Hubbard,
  and Jackel]{lecun1989backpropagation}
Yann LeCun, Bernhard Boser, John~S. Denker, Donnie Henderson, Richard~E.
  Howard, Wayne Hubbard, and Lawrence~D. Jackel.
\newblock Backpropagation applied to handwritten zip code recognition.
\newblock \emph{Neural computation}, 1\penalty0 (4):\penalty0 541--551, 1989.

\bibitem[LeCun et~al.(1998)LeCun, Bottou, Bengio, Haffner,
  et~al.]{lecun1998gradient}
Yann LeCun, L{\'e}on Bottou, Yoshua Bengio, Patrick Haffner, et~al.
\newblock Gradient-based learning applied to document recognition.
\newblock \emph{Proceedings of the IEEE}, 86\penalty0 (11):\penalty0
  2278--2324, 1998.

\bibitem[Ledoux \& Talagrand(1991)Ledoux and Talagrand]{ledoux1991probability}
Michel Ledoux and Michel Talagrand.
\newblock \emph{Probability in {Banach} Spaces: Isoperimetry and Processes},
  volume~23.
\newblock Springer Science \& Business Media, 1991.

\bibitem[Li et~al.(2020)Li, Sahu, Talwalkar, and Smith]{Li20}
Tian Li, Anit~Kumar Sahu, Ameet Talwalkar, and Virginia Smith.
\newblock Federated learning: Challenges, methods, and future directions.
\newblock \emph{{IEEE} Signal Processing Magazine}, 37:\penalty0 50--60, 2020.

\bibitem[Li et~al.(2021{\natexlab{a}})Li, Beirami, Sanjabi, and
  Smith]{li2021tilted}
Tian Li, Ahmad Beirami, Maziar Sanjabi, and Virginia Smith.
\newblock Tilted empirical risk minimization.
\newblock In \emph{International Conference on Learning Representations
  (ICLR)}, 2021{\natexlab{a}}.

\bibitem[Li et~al.(2021{\natexlab{b}})Li, Hu, Beirami, and Smith]{li2021ditto}
Tian Li, Shengyuan Hu, Ahmad Beirami, and Virginia Smith.
\newblock Ditto: {Fair} and robust federated learning through personalization.
\newblock In \emph{International Conference on Machine Learning (ICML)},
  2021{\natexlab{b}}.

\bibitem[Li et~al.(2021{\natexlab{c}})Li, Jiang, Zhang, Kamp, and
  Dou]{li2021fedbn}
Xiaoxiao Li, Meirui Jiang, Xiaofei Zhang, Michael Kamp, and Qi~Dou.
\newblock {FedBN}: Federated learning on non-{IID} features via local batch
  normalization.
\newblock In \emph{International Conference on Learning Representations
  (ICLR)}, 2021{\natexlab{c}}.

\bibitem[Li et~al.(2010)Li, Kambara, Koike, and Sugiyama]{li2010application}
Yan Li, Hiroyuki Kambara, Yasuharu Koike, and Masashi Sugiyama.
\newblock Application of covariate shift adaptation techniques in
  brain--computer interfaces.
\newblock \emph{IEEE Transactions on Biomedical Engineering}, 57\penalty0
  (6):\penalty0 1318--1324, 2010.

\bibitem[Liu \& Tao(2015)Liu and Tao]{liu2015classification}
Tongliang Liu and Dacheng Tao.
\newblock Classification with noisy labels by importance reweighting.
\newblock \emph{{IEEE} Transactions on Pattern Analysis and Machine
  Intelligence (T-PAMI)}, 38\penalty0 (3):\penalty0 447--461, 2015.

\bibitem[Lloyd(1982)]{lloyd1982least}
Stuart Lloyd.
\newblock Least squares quantization in {PCM}.
\newblock \emph{IEEE Transactions on Information Theory}, 28\penalty0
  (2):\penalty0 129--137, 1982.

\bibitem[Lu et~al.(2020)Lu, Zhang, Niu, and Sugiyama]{lu2020mitigating}
Nan Lu, Tianyi Zhang, Gang Niu, and Masashi Sugiyama.
\newblock Mitigating overfitting in supervised classification from two
  unlabeled datasets: A consistent risk correction approach.
\newblock In \emph{International Conference on Artificial Intelligence and
  Statistics (AISTATS)}, 2020.

\bibitem[Mansour et~al.(2020)Mansour, Mohri, Ro, and Suresh]{mansour2020three}
Yishay Mansour, Mehryar Mohri, Jae Ro, and Ananda~Theertha Suresh.
\newblock Three approaches for personalization with applications to federated
  learning.
\newblock \emph{arXiv preprint arXiv:2002.10619}, 2020.

\bibitem[McAllester(1999)]{mcallester1999some}
David~A McAllester.
\newblock Some {PAC-Bayesian} theorems.
\newblock \emph{Machine Learning}, 37\penalty0 (3):\penalty0 355--363, 1999.

\bibitem[McDiarmid et~al.(1989)]{mcdiarmid}
Colin McDiarmid et~al.
\newblock On the method of bounded differences.
\newblock \emph{Surveys in combinatorics}, 141\penalty0 (1):\penalty0 148--188,
  1989.

\bibitem[McMahan et~al.(2017)McMahan, Moore, Ramage, Hampson, and
  y~Arcas]{FedAvg}
Brendan McMahan, Eider Moore, Daniel Ramage, Seth Hampson, and Blaise~Aguera
  y~Arcas.
\newblock Communication-efficient learning of deep networks from decentralized
  data.
\newblock In \emph{International Conference on Artificial Intelligence and
  Statistics (AISTATS)}, 2017.

\bibitem[Mohri et~al.(2019)Mohri, Sivek, and Suresh]{mohri2019agnostic}
Mehryar Mohri, Gary Sivek, and Ananda~Theertha Suresh.
\newblock Agnostic federated learning.
\newblock In \emph{International Conference on Machine Learning (ICML)}, 2019.

\bibitem[Nemirovski(1998)]{Nemirovski}
Arkadi Nemirovski.
\newblock \emph{Topics in Non-parametric Statistics}.
\newblock Ecole d'E\'{t}e de Probabili\'{t}es de Saint-Flour XXVIII, 1998.

\bibitem[Nguyen et~al.(2010)Nguyen, Wainwright, and
  Jordan]{nguyen2010estimating}
XuanLong Nguyen, Martin~J Wainwright, and Michael~I Jordan.
\newblock Estimating divergence functionals and the likelihood ratio by convex
  risk minimization.
\newblock \emph{IEEE Transactions on Information Theory}, 56\penalty0
  (11):\penalty0 5847--5861, 2010.

\bibitem[Qin(1998)]{qin1998inferences}
Jing Qin.
\newblock Inferences for case-control and semiparametric two-sample density
  ratio models.
\newblock \emph{Biometrika}, 85\penalty0 (3):\penalty0 619--630, 1998.

\bibitem[Ramezani-Kebrya et~al.(2021)Ramezani-Kebrya, Faghri, Markov, Aksenov,
  Alistarh, and Roy]{NUQSGD}
Ali Ramezani-Kebrya, Fartash Faghri, Ilya Markov, Vitalii Aksenov, Dan
  Alistarh, and Daniel~M. Roy.
\newblock {NUQSGD}: Provably communication-efficient data-parallel {SGD} via
  nonuniform quantization.
\newblock \emph{Journal of Machine Learning Research (JMLR)}, 22\penalty0
  (114):\penalty0 1--43, 2021.

\bibitem[Ramezani-Kebrya et~al.(2023)Ramezani-Kebrya, Antonakopoulos, Krawczuk,
  Deschenaux, and Cevher]{QGenX}
Ali Ramezani-Kebrya, Kimon Antonakopoulos, Igor Krawczuk, Justin Deschenaux,
  and Volkan Cevher.
\newblock Distributed extra-gradient with optimal complexity and communication
  guarantees.
\newblock In \emph{International Conference on Learning Representations
  (ICLR)}, 2023.

\bibitem[Reddi et~al.(2015)Reddi, P\'{o}czos, and Smola]{reddi2015doubly}
Sashank Reddi, Barnabas P\'{o}czos, and Alex Smola.
\newblock Doubly robust covariate shift correction.
\newblock In \emph{AAAI Conference on Artificial Intelligence}, 2015.

\bibitem[Russ(1980)]{russ1980translation}
SB~Russ.
\newblock A translation of {Bolzano}'s paper on the intermediate value theorem.
\newblock \emph{Historia Mathematica}, 7\penalty0 (2):\penalty0 156--185, 1980.

\bibitem[Shalev-Shwartz et~al.(2010)Shalev-Shwartz, Shamir, Srebro, and
  Sridharan]{SSSS}
Shai Shalev-Shwartz, Ohad Shamir, Nathan Srebro, and Karthik Sridharan.
\newblock Learnability, stability and uniform convergence.
\newblock \emph{Journal of Machine Learning Research (JMLR)}, 11:\penalty0
  2635--2670, 2010.

\bibitem[Shimodaira(2000)]{shimodaira2000improving}
Hidetoshi Shimodaira.
\newblock Improving predictive inference under covariate shift by weighting the
  log-likelihood function.
\newblock \emph{Journal of Statistical Planning and Inference}, 90\penalty0
  (2):\penalty0 227--244, 2000.

\bibitem[Smith et~al.(2017)Smith, Chiang, Sanjabi, and Talwalkar]{FLMultiTask}
Virginia Smith, Chao-Kai Chiang, Maziar Sanjabi, and Ameet~S. Talwalkar.
\newblock Federated multi-task learning.
\newblock In \emph{Advances in Neural Information Processing Systems
  (NeurIPS)}, 2017.

\bibitem[Smola et~al.(2009)Smola, Song, and Teo]{smola2009relative}
Alex Smola, Le~Song, and Choon~Hui Teo.
\newblock Relative novelty detection.
\newblock In \emph{International Conference on Artificial Intelligence and
  Statistics (AISTATS)}, 2009.

\bibitem[Sugiyama \& M{\"u}ller(2005)Sugiyama and
  M{\"u}ller]{sugiyama2005input}
Masashi Sugiyama and Klaus-Robert M{\"u}ller.
\newblock Input-dependent estimation of generalization error under covariate
  shift.
\newblock \emph{Statistics \& Decisions}, 2005.

\bibitem[Sugiyama et~al.(2007)Sugiyama, Krauledat, and
  M{\"u}ller]{sugiyama2007covariate}
Masashi Sugiyama, Matthias Krauledat, and Klaus-Robert M{\"u}ller.
\newblock Covariate shift adaptation by importance weighted cross validation.
\newblock \emph{Journal of Machine Learning Research}, 8\penalty0 (5), 2007.

\bibitem[Sugiyama et~al.(2011)Sugiyama, Suzuki, Itoh, Kanamori, and
  Kimura]{sugiyama2011least}
Masashi Sugiyama, Taiji Suzuki, Yuta Itoh, Takafumi Kanamori, and Manabu
  Kimura.
\newblock Least-squares two-sample test.
\newblock \emph{Neural networks}, 24\penalty0 (7):\penalty0 735--751, 2011.

\bibitem[Sugiyama et~al.(2012)Sugiyama, Suzuki, and
  Kanamori]{sugiyama2012density}
Masashi Sugiyama, Taiji Suzuki, and Takafumi Kanamori.
\newblock Density-ratio matching under the {Bregman} divergence: a unified
  framework of density-ratio estimation.
\newblock \emph{Annals of the Institute of Statistical Mathematics},
  64\penalty0 (5):\penalty0 1009--1044, 2012.

\bibitem[Tripuraneni et~al.(2021)Tripuraneni, Adlam, and
  Pennington]{tripuraneni2021overparameterization}
Nilesh Tripuraneni, Ben Adlam, and Jeffrey Pennington.
\newblock Overparameterization improves robustness to covariate shift in high
  dimensions.
\newblock In \emph{Advances in Neural Information Processing Systems
  (NeurIPS)}, 2021.

\bibitem[Tsybakov(2008)]{Tsybakov}
Alexandre~B. Tsybakov.
\newblock \emph{Introduction to Nonparametric Estimation}.
\newblock Springer Series in Statistics, 2008.

\bibitem[Uehara et~al.(2020)Uehara, Kato, and Yasui]{uehara2020off}
Masatoshi Uehara, Masahiro Kato, and Shota Yasui.
\newblock Off-policy evaluation and learning for external validity under a
  covariate shift.
\newblock In \emph{Advances in Neural Information Processing Systems
  (NeurIPS)}, 2020.

\bibitem[Vapnik(1999)]{vapnik1999overview}
Vladimir~N Vapnik.
\newblock An overview of statistical learning theory.
\newblock \emph{{IEEE} Transactions on Neural Networks and Learning Systems
  (T-NN)}, 10\penalty0 (5):\penalty0 988--999, 1999.

\bibitem[Wang et~al.(2020)Wang, Liu, Liang, Joshi, and Poor]{wang2020tackling}
Jianyu Wang, Qinghua Liu, Hao Liang, Gauri Joshi, and H~Vincent Poor.
\newblock Tackling the objective inconsistency problem in heterogeneous
  federated optimization.
\newblock In \emph{Advances in Neural Information Processing Systems
  (NeurIPS)}, 2020.

\bibitem[Wang et~al.(2021)Wang, Charles, Xu, Joshi, McMahan, y~Arcas,
  Al-Shedivat, Andrew, Avestimehr, Daly, Data, Diggavi, Eichner, Gadhikar,
  Garrett, Girgis, Hanzely, Hard, He, Horvath, Huo, Ingerman, Jaggi, Javidi,
  Kairouz, Kale, Karimireddy, Konecny, Koyejo, Li, Liu, Mohri, Qi, Reddi,
  Richtarik, Singhal, Smith, Soltanolkotabi, Song, Suresh, Stich, Talwalkar,
  Wang, Woodworth, Wu, Yu, Yuan, Zaheer, Zhang, Zhang, Zheng, Zhu, and
  Zhu]{FL21}
Jianyu Wang, Zachary Charles, Zheng Xu, Gauri Joshi, H.~Brendan McMahan,
  Blaise~Aguera y~Arcas, Maruan Al-Shedivat, Galen Andrew, Salman Avestimehr,
  Katharine Daly, Deepesh Data, Suhas Diggavi, Hubert Eichner, Advait Gadhikar,
  Zachary Garrett, Antonious~M. Girgis, Filip Hanzely, Andrew Hard, Chaoyang
  He, Samuel Horvath, Zhouyuan Huo, Alex Ingerman, Martin Jaggi, Tara Javidi,
  Peter Kairouz, Satyen Kale, Sai~Praneeth Karimireddy, Jakub Konecny, Sanmi
  Koyejo, Tian Li, Luyang Liu, Mehryar Mohri, Hang Qi, Sashank~J. Reddi, Peter
  Richtarik, Karan Singhal, Virginia Smith, Mahdi Soltanolkotabi, Weikang Song,
  Ananda~Theertha Suresh, Sebastian~U. Stich, Ameet Talwalkar, Hongyi Wang,
  Blake Woodworth, Shanshan Wu, Felix~X. Yu, Honglin Yuan, Manzil Zaheer,
  Mi~Zhang, Tong Zhang, Chunxiang Zheng, Chen Zhu, and Wennan Zhu.
\newblock A field guide to federated optimization.
\newblock \emph{arXiv preprint arXiv:2107.06917}, 2021.

\bibitem[Wang \& Deng(2018)Wang and Deng]{wang2018deep}
Mei Wang and Weihong Deng.
\newblock Deep visual domain adaptation: A survey.
\newblock \emph{Neurocomputing}, 312:\penalty0 135--153, 2018.

\bibitem[Wasserman(2006)]{wasserman2006all}
Larry Wasserman.
\newblock \emph{All of nonparametric statistics}.
\newblock Springer Science \& Business Media, 2006.

\bibitem[Xiao et~al.(2017)Xiao, Rasul, and Vollgraf]{xiao2017fashion}
Han Xiao, Kashif Rasul, and Roland Vollgraf.
\newblock Fashion-{MNIST}: a novel image dataset for benchmarking machine
  learning algorithms.
\newblock \emph{arXiv preprint arXiv:1708.07747}, 2017.

\bibitem[Yagli et~al.(2020)Yagli, Dytso, and Poor]{yagli2020information}
Semih Yagli, Alex Dytso, and H~Vincent Poor.
\newblock Information-theoretic bounds on the generalization error and privacy
  leakage in federated learning.
\newblock In \emph{International Workshop on Signal Processing Advances in
  Wireless Communications (SPAWC)}, 2020.

\bibitem[Yamada et~al.(2010)Yamada, Sugiyama, and Matsui]{yamada2010semi}
Makoto Yamada, Masashi Sugiyama, and Tomoko Matsui.
\newblock Semi-supervised speaker identification under covariate shift.
\newblock \emph{Signal Processing}, 90\penalty0 (8):\penalty0 2353--2361, 2010.

\bibitem[Yamada et~al.(2011)Yamada, Suzuki, Kanamori, Hachiya, and
  Sugiyama]{yamada2013relative}
Makoto Yamada, Taiji Suzuki, Takafumi Kanamori, Hirotaka Hachiya, and Masashi
  Sugiyama.
\newblock Relative density-ratio estimation for robust distribution comparison.
\newblock In \emph{Advances in Neural Information Processing Systems
  (NeurIPS)}, 2011.

\bibitem[Yuan et~al.(2022)Yuan, Morningstar, Ning, and Singhal]{yuan2021we}
Honglin Yuan, Warren Morningstar, Lin Ning, and Karan Singhal.
\newblock What do we mean by generalization in federated learning?
\newblock In \emph{International Conference on Learning Representations
  (ICLR)}, 2022.

\bibitem[Zadrozny(2004)]{zadrozny2004learning}
Bianca Zadrozny.
\newblock Learning and evaluating classifiers under sample selection bias.
\newblock In \emph{International Conference on Machine Learning (ICML)}, 2004.

\bibitem[Zhang et~al.(2020)Zhang, Yamane, Lu, and Sugiyama]{zhang2020one}
Tianyi Zhang, Ikko Yamane, Nan Lu, and Masashi Sugiyama.
\newblock A one-step approach to covariate shift adaptation.
\newblock In \emph{Asian Conference on Machine Learning}, 2020.

\bibitem[Zhou \& Levine(2021)Zhou and Levine]{zhou2021training}
Aurick Zhou and Sergey Levine.
\newblock Training on test data with bayesian adaptation for covariate shift.
\newblock In \emph{Advances in Neural Information Processing Systems
  (NeurIPS)}, 2021.

\bibitem[Zhu et~al.(2019)Zhu, Liu, and Han]{deepleakage}
Ligeng Zhu, Zhijian Liu, and Song Han.
\newblock Deep leakage from gradients.
\newblock In \emph{Advances in Neural Information Processing Systems
  (NeurIPS)}, 2019.

\bibitem[Zou et~al.(2021)Zou, Wu, Gu, Foster, Kakade, et~al.]{zou2021benefits}
Difan Zou, Jingfeng Wu, Quanquan Gu, Dean~P Foster, Sham Kakade, et~al.
\newblock The benefits of implicit regularization from {SGD} in least squares
  problems.
\newblock In \emph{Advances in Neural Information Processing Systems
  (NeurIPS)}, 2021.

\end{thebibliography}
\bibliographystyle{tmlr}

\appendix
\section{Appendix}\label{app:content}

The appendix is organized as follows: 
\begin{itemize}
    \item Examples of $f$ for BD-based DRM are provided in~\cref{app:content}.
    \item Complete related work is provided in~\cref{app:relatedwork}. 
    \item \IW with a focus on minmizing $R_1$ are provided  in~\cref{app:prop:IWERM}.
    \item Details of density ratio estimation are provided in \cref{app:ratio}.
    \item Communication costs of \IW and \IIW are analyzed in in~\cref{app:IIWERM}.
    \item UKL, LR,  and PU variants of nnBD are provided in~\cref{app:nnBDvars}.
    \item Convergence of $\tilde r$ and $k$-means clustering for HDRM are provided in~\cref{app:tilder}.
   \item  UKL, LR,  and PU variants of nnBD for multiple clients are provided in~\cref{app:nnBDvars_mul}.
    \item The proof of the core~\cref{Thm:gen} exists in~\cref{app:thm:gen}.
    \item High-probability ratio estimation error bounds for multi-layer perceptron and multiple clients are established in~\cref{app:gen_nn}. 
    \item Additional error due to estimation of $\ol r_k$ with HDRM is analyzed in~\cref{app:adderr}.
    \item \cref{lemma:newbv} is proved in~\cref{app:lemma:newbv}.
    \item \cref{thm:excessp} is proved in~\cref{app:thm:excessp}.
    \item A counterexample under which \IW cannot outperform ERM is provided in~\cref{app:example}.
    \item Additional experimental details are included in~\cref{app:exp}. 
    \item Computational complexity of~\cref{HDRMalg} is analyzed in~\cref{app:complexity}.
    \item The limitations of our work are described in~\cref{app:limitation}.
\end{itemize}

\begin{table}
\centering
\bgroup
\def\arraystretch{1.3}
  \begin{tabular}{c c c }
    \hline
  \textbf{Reference} & \textbf{Algorithm} & \textbf{$f(z)$} \\
  \hline
    	\citet{basu1998robust}   &  Robust & $\frac{z^{\alpha+1}-z}{\alpha},~\alpha>0$ \\
  	\citet{Hastie2001}   & LR (BKL) &  $z\log(z)-(z+1)\log(z+1)$\\
 \citet{kanamori2009least}      & LSIF & $\frac{(z-1)^2}{2}$  \\     
     \citet{gretton2009covariate}   & KMM  &  $\frac{(z-1)^2}{2}$\\  
  	  \citet{nguyen2010estimating}  & KLIEP & $z\log(z)-z$\\
  	  \citet{nguyen2010estimating}  & UKL & $z\log(z)-z$\\
 	 \citet{kato2018learning}   & PULogLoss & $C\log(1-z)+Cz(\log(z)-\log(1-z)),~z\in(0,1),C\leq \ol r$ \\
 	  \hline
 \end{tabular}
  \egroup
 \caption{Examples of $f$ for BD-based methods~\citep{sugiyama2012density,BD}, LSIF = least-squares importance fitting,  LR = logistic regression, BKL = binary Kullback–Leibler,  UKL = unnormalized Kullback–
Leibler, KLIEP = Kullback–
Leibler importance estimation procedure, KMM = kernel mean matching , PULogLoss = positive and unlabeled learning with log Loss.} \label{tab:f_BD}
\end{table}

\section{Complete related work}\label{app:relatedwork}

\paragraph{Federated learning.} One well-known method  in FL is  FedAvg~\citep{FedAvg}. FedAvg and its variants are extensively studied in optimization with a focus on communication efficiency and partial participation of clients while preserving privacy. 

Indeed, a host of techniques, such as gradient quantization, sparsification, and local updating rules, have been proposed to improve  communication efficiency in FL~\citep{QSGD,ALQ,NUQSGD,FL,QGenX}. 
Furthermore, robust and secure aggregation schemes have been also proposed to provide robustness against  training-time attacks launched by an adversary, and to compute aggregated values without being able to inspect the clients' local models and data, respectively~\citep{Li20,FL,FL21}. 

Taken together, these work largely focus on minimizing the empirical risk in the optimization objective, under the same training/test data distribution assumption over each client. Differences across clients are handled using personalization methods based on heuristics and currently do not have a statistical learning theoretical support~\citep{FLMultiTask,Khodak,li2021ditto}. 

In contrast, we focus on learning and overall generalization performance under both intra-client and inter-client distribution shifts. Communication-efficient, robust, and secure aggregations can be viewed as complementary technologies, which can be used along with our proposed \IW method to improve the generalization performance. In our setting,  clients can also all participate in every training iteration, such as cross-silo FL.

We note that~\citep{Hanzely20,Gasanov22} focus on minimizing the {\it empirical risk}, under the {\it same training/test data distribution assumption} over each client. Our formulation in~\ref{glob:obj} does not require specific assumptions on function $F_k$’s for $k\in [K]$ to provide an unbiased estimate of true risk minimizer. Under strong convexity and smoothness assumptions w.r.t.\@ model parameters, similar optimal algorithms to those proposed in~\citep{Hanzely20,Gasanov22} will be optimal for~\ref{glob:obj}.

Different from recent FL work by~\citet{duan2021feddna} and \citet{li2021fedbn}, our work introduces new \IW formulation and shows {\it statistical consistency}.

\paragraph{Importance-weighted ERM and density ratio matching.}
Density ratio estimation is an important step in various machine learning problems such as learning under \cs, learning under noisy labels, anomaly detection, two-sample testing, causal inference, change-pint detection, and classification from positive and unlabelled data~\citep{qin1998inferences,shimodaira2000improving,cheng2004semiparametric,keziou2005test,sugiyama2007covariate,kawahara2009change,smola2009relative,hido2011statistical,kanamori2011f,sugiyama2011least,yamada2013relative,reddi2015doubly,liu2015classification,kato2018learning,fang2020rethinking,uehara2020off,zhang2020one,BD}. In particular, \cs has been observed in  real-world applications including brain-computer interfacing, emotion recognition, human activity recognition, spam filtering, and speaker identification~\citep{bickel2007dirichlet,li2010application,yamada2010semi,hachiya2012importance,jirayucharoensak2014eeg}. \citet{shimodaira2000improving} introduced \cs where the input train and test distributions are different while the conditional distribution of the output variable given the input variable remains unchanged. Importance-weighted ERM is widely used to improve generalization performance under \cs~\citep{zadrozny2004learning,sugiyama2005input,huang2006correcting,sugiyama2007covariate,kanamori2009least,sugiyama2012density,fang2020rethinking,zhang2020one,BD}. \citet{zhang2020one} proposed a one-step approach that jointly learns the predictive model and the corresponding weights in one optimization problem. \citet{sugiyama2012density} proposed a Bregman divergence-based DRM, which unifies various DRMs. \citet{BD} proposed a  non-negative Bregman divergence-based DRM to resolve the overfitting issue when using  deep neural networks for density ratio estimation. While this line of work focuses on DRM with a single train and test distributions, we consider a federated setting with multiple clients in this paper. 

\paragraph{Domain adaptation.} Distribution shifts between a source and a target domain have been a prominent problem in machine learning for several decades~\citep{wang2018deep,kouw2019review}. %
The premise behind such shifts is that data is frequently biased, and this results in distribution shifts that can be estimated by assuming some (unlabelled) knowledge of the target distribution. The following categories of domain adaptation methods are most closely related to our work: a) sample-based, and b) feature-based methods. In feature-based methods, the goal is to find a transformation that maps the source samples to target samples~\citep{ganin2016domain, bousmalis2017unsupervised, das2018unsupervised, damodaran2018deepjdot}. Sample-based methods aim at minimizing the target risk through data in the source domain. Importance weighting is often used in sample-based methods~\citep{shimodaira2000improving, jiang2007instance, baktashmotlagh2014domain}.  However, the focus on domain adaptation has been mainly to adapt to a single target distribution, not the overall generalization performance on multiple
clients, which is addressed in this paper.

\paragraph{Statistical generalization and excess risk bounds.} Understanding generalization performance of learning algorithms is one essential topic in modern machine learning.
Typical techniques to establish generalization guarantees include  uniform convergence by Rademacher complexity \citep{bartlett1998sample}, and its variants \citep{bartlett2005local}, bias-variance decomposition \citep{geman1992neural,adlam2020understanding}, PAC-Bayes \citep{mcallester1999some}, and stability-based analysis \citep{Bousquet,SSSS}.
Our work employs the first two techniques to analyze our density ratio estimation method in a federated setting and establish generalization guarantees for \IW, respectively.
Rademacher complexity has been used in FL to obtain theoretical guarantees on the centralized model \citep{mohri2019agnostic} and personalized model \citep{mansour2020three}. \citet{mohri2019agnostic} considered a scenario where a single target distribution is modeled as an unknown mixture of multiple domain distributions and obtained a global modal by minimizing the worst-case loss. %
This is different from our setting where we consider multiple test distributions for clients and focus on the overall test error.  \citet{mansour2020three} studied personalization under the same training/test data distribution assumption over each client, which is different from our setting. Bias-variance decomposition provides a relatively refined characterization of generalization error (or excess risk), where a large bias indicates that a model is not flexible enough to learn from the data and a high variance indicates that the model performs unstably.
Bias-variance decomposition is typically studied in two settings, i.e., the fixed and random design setting, which is categorized by whether the (training) data are fixed or random.
This technique has been extensively applied in least squares \citep{hsu2012random,dieuleveut2017harder}, analysis of SGD \citep{jain2018parallelizing,zou2021benefits}, and double descent \citep{adlam2020understanding}.

Information-theoretic bounds on the generalization error and privacy leakage in federated settings were established in~\citep{yagli2020information}. Under partial participation of clients, \citet{yuan2021we} proposed a framework, which distinguishes performance gaps due to unseen client data from performance gap due to unseen client distributions. Still, these work study FL under the same training/test data distribution assumption over each client.

\section{\IW with a focus on minmizing $R_1$}\label{app:prop:IWERM}

\begin{table*}
\footnotesize
\centering
\setlength{\tabcolsep}{2pt}
\renewcommand{\arraystretch}{0.5}

  \begin{tabular}{c c c c}
    \hline
  \textbf{Scenario} & \textbf{\#Clients} & \textbf{Assumptions on Distributions} & \textbf{What client 1 Knows} \\
  \hline
    	{No-CS} in~\eqref{IWERM:nocovar}  &  2 & $p_1^{\trn}(\xbf)=p_1^{\te}(\xbf)$,~ $p_1^{\trn}(\xbf)\neq p_2^{\trn}(\xbf)$  & ${p_1^{\trn}(\xbf)}/{p_2^{\trn}(\xbf)}$  \\
  	 {CS on one} in~\eqref{IWERM:covar1}   & 2 &  $p_1^{\trn}(\xbf)\neq p_1^{\te}(\xbf)$,~ $p_2^{\trn}(\xbf)= p_2^{\te}(\xbf)$ & ${p_1^{\te}(\xbf)}/{p_1^{\trn}(\xbf)}$,~ ${p_1^{\te}(\xbf)}/{p_2^{\trn}(\xbf)}$\\
 {CS on both} in~\eqref{IWERM:covar1}      & 2 & $p_1^{\trn}(\xbf)\neq p_1^{\te}(\xbf)$,~ $p_2^{\trn}(\xbf)\neq p_2^{\te}(\xbf)$ & ${p_1^{\te}(\xbf)}/{p_1^{\trn}(\xbf)}$,~ ${p_1^{\te}(\xbf)}/{p_2^{\trn}(\xbf)}$ \\     
   {CS on multi.} in~\eqref{IWERM:gen}  & $K$  &  $p_k^{\trn}(\xbf)\neq p_1^{\te}(\xbf)$ for all $k$ & ${p_1^{\te}(\xbf)}/{p_k^{\trn}(\xbf)}$ for all $k$ \\  
 	  \hline
 \end{tabular}
 \caption{Details of scenarios described in~\cref{sec:prelim}.} \label{tab:scenario}
\end{table*}

Without loss of generality and for simplicity of notation, in this section, we set $l=1$.
We consider four typical scenarios under various distribution shifts and formulate their \IW with a focus on minmizing $R_1$. The details of these scenarios are summarized in~\cref{tab:scenario}.

\begin{remark}
Covariance shift (as well as its assumption) is the {\it most commonly used and studied} in {\it theory} and {\it practice} in distribution shifts \citep{sugiyama2007covariate,kanamori2009least,BD,uehara2020off,tripuraneni2021overparameterization,zhou2021training}. Handling \cs is a challenging issue, especially in federated settings~\citep{FL}.
\end{remark}

{\bf No intra-client \cs:}  ({No-CS})
For description simplicity, we assume that there are only 2 clients but our results can be directly extended to multiple clients. 
This scenario assumes $p_k^{\trn}(\xbf)=p_k^{\te}(\xbf)$ for $k=1,2$. 
Client 1 aims to learn $h_{\wbf}$ assuming $\frac{p_1^{\trn}(x)}{p_2^{\trn}(x)}$ is given. We consider the following \IW that is proved to be consistent in terms of minimizing minimizing $R_1$:\vspace{-0.4cm}

{\small\begin{align}\label{IWERM:nocovar}
\min_{\wbf\in\reals^d}\frac{1}{n_1^{\trn}}\sum_{i=1}^{n_1^{\trn}}\ell(h_{\wbf}(\xbf_{1,i}^{\trn}),\ybf_{1,i}^{\trn})+\frac{1}{n_2^{\trn}}\sum_{i=1}^{n_2^{\trn}} \frac{p_1^{\trn}(\xbf_{2,i}^{\trn})}{p_2^{\trn}(\xbf_{2,i}^{\trn})}\ell(h_{\wbf}(\xbf_{2,i}^{\trn}),\ybf_{2,i}^{\trn}).
\end{align}}

{\bf Covariate shift only for client 1:} ({CS on one})
We now consider \cs only for client 1, i.e., $p_1^{\trn}(\xbf)\neq p_1^{\te}(\xbf)$ and $p_2^{\trn}(\xbf)= p_2^{\te}(\xbf)$.
We consider the following \IW
\begin{align}\label{IWERM:covar1}
\min_{\wbf\in\reals^d}  \frac{1}{n_1^{\trn}}\sum_{i=1}^{n_1^{\trn}} \frac{p_1^{\te}(\xbf_{1,i}^{\trn})}{p_1^{\trn}(\xbf_{1,i}^{\trn})}\ell(h_{\wbf}(\xbf_{1,i}^{\trn}),\ybf_{1,i}^{\trn})+\frac{1}{n_2^{\trn}}\sum_{i=1}^{n_2^{\trn}} \frac{p_1^{\te}(\xbf_{2,i}^{\trn})}{p_2^{\trn}(\xbf_{2,i}^{\trn})}\ell(h_{\wbf}(\xbf_{2,i}^{\trn}),\ybf_{2,i}^{\trn}).
\end{align}

{\bf Covariate shift for both clients:} ({CS on both})
We assume $p_1^{\trn}(\xbf)\neq p_1^{\te}(\xbf)$ and $p_2^{\trn}(\xbf)\neq p_2^{\te}(\xbf)$, \ie \cs for both clients.
The corresponding \IW is the same as~\cref{IWERM:covar1}.

{\bf Multiple clients:} ({CS on multi.})
Finally, we consider a general scenario with $K$ clients. We assume both intra-client and inter-client \css by the following \IW:%
\begin{align}\label{IWERM:gen}
\min_{\wbf\in\reals^d} \sum_{k=1}^K \frac{\lambda_k}{n_k^{\trn}}\sum_{i=1}^{n_k^{\trn}} \frac{p_1^{\te}(\xbf_{k,i}^{\trn})}{p_k^{\trn}(\xbf_{k,i}^{\trn})}\ell(h_{\wbf}(\xbf_{k,i}^{\trn}),\ybf_{k,i}^{\trn})
\end{align} where $\sum_{k=1}^K\lambda_k=1$ and $\lambda_k\geq 0$.

\bprop\label{Prop:IWERMapp} Let $l\in[K]$. In  above settings,  \IW defined in Eqs.~\eqref{IWERM:nocovar},~\eqref{IWERM:covar1}, and~\eqref{IWERM:gen} is consistent.  \ie the learned function converges in probability to the optimal function in terms of minimizing $R_1$.  %
\eprop

\cref{Prop:IWERMapp} implies that, under various settings, \IW outputs an unbiased estimate of a {\it minimizer of the true risk}.

\bpr
For the scenario without intra-client \cs,  \IW in~\cref{IWERM:nocovar} can be expressed as
\begin{align}\nn
\frac{1}{n_2^{\trn}}\sum_{i=1}^{n_2^{\trn}} \frac{p_1^{\trn}(\xbf_{2,i}^{\trn})}{p_2^{\trn}(\xbf_{2,i}^{\trn})}\ell(h_{\wbf}(\xbf_{2,i}^{\trn}),\ybf_{2,i}^{\trn})&\xrightarrow{n_2^{\trn}\ra \infty}
\E_{p_2^{\trn}(\xbf,\ybf)}\l[\frac{p_1^{\trn}(\xbf)}{p_2^{\trn}(\xbf)}\ell(h_{\wbf}(\xbf),\ybf)\r]\\
&= \int_{\Xc}\frac{p_1^{\trn}(\xbf)}{p_2^{\trn}(\xbf)}\E_{p(\ybf|\xbf)}\l[\ell(h_{\wbf}(\xbf),\ybf)\r]p_2^{\trn}(\xbf)\ud \xbf\nn\\
&= \int_{\Xc}p_1^{\trn}(\xbf)\E_{p(\ybf|\xbf)}\l[\ell(h_{\wbf}(\xbf),\ybf)\r]\ud \xbf\nn\\
&= \int_{\Xc}p_1^{\te}(\xbf)\E_{p(\ybf|\xbf)}\l[\ell(h_{\wbf}(\xbf),\ybf)\r]\ud \xbf\nn\\
&= \E_{p_1^{\te}(\xbf,\ybf)}\l[\ell(h_{\wbf}(\xbf),\ybf)\r]\nn\\
&= R_1(h_{\wbf}). \nn
\end{align}

For the scenario with \cs only for client 1 or for both clients, \IW in~\cref{IWERM:covar1} admits 
\begin{align}\nn
\frac{1}{n_2^{\trn}}\sum_{i=1}^{n_2^{\trn}} \frac{p_1^{\te}(\xbf_{2,i}^{\trn})}{p_2^{\trn}(\xbf_{2,i}^{\trn})}\ell(h_{\wbf}(\xbf_{2,i}^{\trn}),\ybf_{2,i}^{\trn})&\xrightarrow{n_2^{\trn}\ra \infty}
\E_{p_2^{\trn}(\xbf,\ybf)}\l[\frac{p_1^{\te}(\xbf)}{p_2^{\trn}(\xbf)}\ell(h_{\wbf}(\xbf),\ybf)\r]\\
&= \int_{\Xc}\frac{p_1^{\te}(\xbf)}{p_2^{\trn}(\xbf)}\E_{p(\ybf|\xbf)}\l[\ell(h_{\wbf}(\xbf),\ybf)\r]p_2^{\trn}(\xbf)\ud \xbf\nn\\
&= \int_{\Xc}p_1^{\te}(\xbf)\E_{p(\ybf|\xbf)}\l[\ell(h_{\wbf}(\xbf),\ybf)\r]\ud \xbf\nn\\
&= \E_{p_1^{\te}(\xbf,\ybf)}\l[\ell(h_{\wbf}(\xbf),\ybf)\r]\nn\\
&= R_1(h_{\wbf}). \nn
\end{align}

We note that $\frac{p_1^{\te}(\xbf)}{p_2^{\trn}(\xbf)}=\frac{p_1^{\te}(\xbf)}{p_1^{\trn}(\xbf)}\frac{p_1^{\trn}(\xbf)}{p_2^{\trn}(\xbf)}$, which is the product of ratios due to intra-client \cs on client 1 and inter-client \cs.

For multiple clients, let $k\in[K]$. Similarly, we have 
\begin{align}\nn
\frac{1}{n_k^{\trn}}\sum_{i=1}^{n_k^{\trn}} \frac{p_1^{\te}(\xbf_{k,i}^{\trn})}{p_k^{\trn}(\xbf_{k,i}^{\trn})}\ell(h_{\wbf}(\xbf_{k,i}^{\trn}),\ybf_{k,i}^{\trn})&\xrightarrow{n_k^{\trn}\ra \infty}
 R_1(h_{\wbf}). \nn
\end{align}
Then we have 
\begin{align}\nn
    \sum_{k=1}^K \frac{\lambda_k}{n_k^{\trn}}\sum_{i=1}^{n_k^{\trn}} \frac{p_1^{\te}(\xbf_{k,i}^{\trn})}{p_k^{\trn}(\xbf_{k,i}^{\trn})}\ell(h_{\wbf}(\xbf_{k,i}^{\trn}),\ybf_{k,i}^{\trn})\xrightarrow{n_1^{\trn},\ldots,n_K^{\trn}\ra \infty}
 R_1(h_{\wbf}).
\end{align}

The consistency of \IW, i.e., convergence in probability, is immediately followed the standard arguments in e.g.,~\citep{shimodaira2000improving}[Section 3] and~\citep{sugiyama2007covariate}[Section 2.2] using the law of large numbers.
\epr
Note that to solve~\cref{IWERM:gen}, client 1 needs to estimate $\frac{p_1^{\te}(\xbf)}{p_k^{\trn}(\xbf)}$ for all clients $k$ with $\lambda_k>0$ in \eqref{IWERM:gen}. 

\begin{remark}Scaling $\sum_{k=1}^K\lambda_k$ does not affect the optimal parameters in~\cref{IWERM:gen}. For rotational simplicity, we set $\lambda_k=1$ for $k\in[K]$.
\end{remark}

\subsection{No intra-client shift}\label{app:nointra}

In this section, we consider the important and special case of the setting described in~\cref{sec:probsetting} under no intra-client \css but inter-client \css. For simplicity, we consider a two clients with train/test distributions $P$ and $Q$ whose train/test densities  are denoted by $p$ and $q$, respectively. We also suppose that we have a sample $z\sim P$ and $z'\sim Q$ to learn with the goal is to find an unbiased estimate of the overall risk with the smallest variance. In this setting, the classical ERM (FedAvg) objective $\ell(z,\theta)+\ell(z',\theta)$ is an unbiased estimate for the overall risk $L(\theta)=\E_P[\ell(z,\theta)]+\E_Q[\ell(z',\theta)]$\footnote{For notational simplicity, we overload $\ell(z,\theta)$ to denote the loss of model $\theta$ on example $z$.}. In this setting, the objective of \IW, i.e., $\frac{1}{2}(\hat L_P(\theta)+\hat L_Q(\theta))$ with $\hat L_P(\theta)=\big(1+\frac{q(z)}{p(z)}\big)\ell(z,\theta)$ and $\hat L_Q(\theta)=\big(1+\frac{p(z')}{q(z')}\big)\ell(z',\theta)$ is an unbiased estimate for the overall risk $L(\theta)$. 

We now show that the our method (\IW) has a smaller variance than FedAvg under certain conditions. Let $\E_P[(\ell(z,\theta)-\E_P[\ell(z,\theta)])^2]=\s^2_P$ and $\E_Q[(\ell(z',\theta)-\E_Q[\ell(z',\theta)])^2]=\s^2_Q$.

For FedAvg, the variance is given by  
\begin{align}\nn%
\E_{P,Q}[(\ell(z,\theta)+\ell(z',\theta)-L(\theta))^2]=\s^2_P+\s^2_Q.     
\end{align}

For \IW, the variance is given by 
\begin{align}\nn%
\E_{P,Q}[(\frac{1}{2}(\hat L_P(\theta)+\hat L_Q(\theta))-L(\theta))^2]=V_P+V_Q
\end{align} where $V_P= \frac{1}{4}\mathbb{E}_P[(\hat L_P(\theta)-L(\theta))^2]$ and $V_Q= \frac{1}{4}\mathbb{E}_Q[(\hat L_Q(\theta)-L(\theta))^2]$. 

We now expand each term $V_P$ and $V_Q$. We can show that  
\begin{align}\nn%
V_P=\frac{1}{4}\mathbb{E}_P\Big[\Big((1+\frac{q(z)}{p(z)})\ell(z,\theta)-\mathbb{E}_P[\ell(z,\theta)]-\mathbb{E}_Q[\ell(z',\theta)]\Big)^2\Big]=\frac{\sigma^2_P+\tilde\sigma^2_P}{4}    
\end{align}
 where 
$\tilde\sigma^2_P=\mathbb{E}_P\Big[\Big(\frac{q(z)}{p(z)}\ell(z,\theta)-\mathbb{E}_Q[\ell(z',\theta)]\Big)^2\Big]+2\mathbb{E}_P\Big[\Big(\ell(z,\theta)-\mathbb{E}_P[\ell(z,\theta)]\Big)\Big(\frac{q(z)}{p(z)}\ell(z,\theta)-\mathbb{E}_Q[\ell(z',\theta)]\Big)\Big]$. Similarly, we have 
\begin{align}\nn%
V_Q=\frac{1}{4}\mathbb{E}_Q\Big[\Big((1+\frac{p(z')}{q(z')})\ell(z',\theta)-\mathbb{E}_P[\ell(z,\theta)]-\mathbb{E}_Q[\ell(z',\theta)]\Big)^2\Big]=\frac{\sigma^2_Q+\tilde\sigma^2_Q}{4}
\end{align} where 
$\tilde\sigma^2_Q=\mathbb{E}_Q\Big[\Big(\frac{p(z')}{q(z')}\ell(z',\theta)-\mathbb{E}_P[\ell(z,\theta)]\Big)^2\Big]+2\mathbb{E}_Q\Big[\Big(\ell(z',\theta)-\mathbb{E}_Q[\ell(z',\theta)]\Big)\Big(\frac{p(z')}{q(z')}\ell(z',\theta)-\mathbb{E}_P[\ell(z,\theta)]\Big)\Big]$. 

We note if $\tilde\sigma^2_P+\tilde\sigma^2_Q\leq 3(\sigma^2_P+\sigma^2_Q)$ then, \IW will have smaller variance than FedAvg, i.e., $V_P+V_Q\leq \sigma^2_P+\sigma^2_Q$. The exact condition depends on the loss and densities. To show a concrete example, for the more general and practical case with both intra/inter-client , in~\cref{sec:biasvar}, we show that \IW results in smaller excess risk compared to FedAvg through a refined bias-variance decomposition. Given two distributions, considering the case of no intra-client shift is a special case, where it is true that FedAvg is an unbiased estimate of the overall risk. However, this unbiasedness breaks as soon as there is only one client whose test and train distributions are different, which is very common in theory and practice. Please note that \IW is an unbiased estimate of the overall risk in a general FL setting without requiring any prior knowledge/assumptions on the potential~\css.
\section{Ratio estimation}\label{app:ratio}
\subsection{nnBD DRM for a single client}
For simplicity, we firstly focus on the problem of estimating $r(\xbf)=\frac{p^{\te}(\xbf)}{p^{\trn}(\xbf)}$ and then extend our consideration to the estimation of $r_k(\xbf)$ in~\cref{ratio_k}.  Let $r^*$ denote the true density ratio.  Our goal is to estimate $r^*$ by optimizing our ratio model $r$. The discrepancy between $r$ and $r^*$ is measured by $\E_{p^{\trn}}[\Br_f(r^*(\xbf) \parallel r(\xbf))]$. We note that $\E_{p^{\trn}}[\Br_f(r^*(\xbf) \parallel r(\xbf))]=\BD_f(r)+\E_{p^{\trn}}[f(r^*(\xbf))]$ where 
$\BD_f(r)= \E_{p^{\trn}}[\Der f(r(\xbf))r(\xbf)-f(r(\xbf))]-\E_{p^{\te}}[\Der f(r(\xbf))].$
Note that $\E_{p^{\trn}}[f(r^*(\xbf))]$ is constant w.r.t.\@ $r$.  Let $\{\xbf_i^{\trn}\}_{i=1}^{n^{\trn}}$ and $\{\xbf_j^{\te}\}_{j=1}^{n^{\te}}$ denote unlabelled samples drawn  
i.i.d.\@ from distributions $p^{\trn}$ and $p^{\te}$, respectively.  %
Let $\Hc_r\subset\{r:\Xc\ra \Bc_f\}$ denote a hypothesis class for our model $r$.  %
Using an empirical approximation of  $\BD_f(r^*(\xbf) \parallel r(\xbf))$, ~\citet{sugiyama2012density} formulated 
BD-based DRM problem as $\min_{r\in \Hc_r}  \hat\BD_f(r)$ where
\begin{align}\label{BDDRMapp}
\hat\BD_f(r)= \frac{1}{n^{\trn}}\sum_{i=1}^{n^{\trn}}\Big(\Der f(r(\xbf_i^{\trn}))r(\xbf_i^{\trn})-f(r(\xbf_i^{\trn}))\Big)-\frac{1}{n^{\te}}\sum_{j=1}^{n^{\te}}\Der f(r(\xbf_j^{\te})). \vspace{-4mm}
\end{align}
\citet{sugiyama2012density} showed that  BD-based DRM unifies well-known density ratio estimation methods by substituting an appropriate $f$ in \eqref{BDDRMapp}.  However, it is shown that solving BD-based DRM with highly flexible models such as neural networks typically leads to an over-fitting issue~\citep{BD,kiryo2017positive}. 
In particular, \citet{BD} called such issue ``train-loss hacking'' where $-\frac{1}{n^{\te}}\sum_{j=1}^{n^{\te}}\Der f(r(\xbf_j^{\te}))$ in \eqref{BDDRMapp}  diverges if there is no lower bound on this term.  Even when there exists a lower bound, the model $r$ tends to increase to the largest possible values of its output range at points $\{\xbf_j^{\te}\}_{j=1}^{n^{\te}}$.  To resolve such issue, \citet{BD} proposed to use non-negative BD (nnBD) DRM, i.e.,  $\min_{r\in \Hc_r}  \hat\BD^+_f(r)$ where 
\begin{align}\label{nBDapp}%
\hat\BD^+_f(r) = \RU\big(\frac{1}{n^{\trn}}\sum_{i=1}^{n^{\trn}}\ell_1(r(\xbf_i^{\trn}))-\frac{C }{n^{\te}}\sum_{j=1}^{n^{\te}}\ell_1(r(\xbf_j^{\te}))\big)+ \frac{1}{n^{\te}}\sum_{j=1}^{n^{\te}}\ell_2(r(\xbf_j^{\te})),
\end{align} $\RU(z)=\max\{0,z\}$, $0<C<\frac{1}{\ol r}$, $\ol r=\sup_{\xbf\in\Xc^{\trn}}r^*(\xbf)$,  $\ell_1(z)=\Der f(z)z-f(z)$, and $\ell_2(z)=C(\Der f(z)z-f(z))-\Der f(z)$. Substituting $f(z)=\frac{(z-1)^2}{2}$ into \eqref{nBDapp}, the least-squares importance fitting (LSIF) variant of nnBD is given by 
\begin{align}\small\nn%
\hat\BD^+_\LSIF(r)= \RU\Big(\frac{1}{2n^{\trn}}\sum_{i=1}^{n^{\trn}}r^2(\xbf_i^{\trn})-\frac{C }{2n^{\te}}\sum_{j=1}^{n^{\te}}r^2(\xbf_j^{\te})\Big)-\frac{1}{n^{\te}}\sum_{j=1}^{n^{\te}}\big(r(\xbf_j^{\te})-\frac{C}{2}r^2(\xbf_j^{\te})\big).
\end{align}
In~\cref{app:nnBDvars}, we show explicit expressions for unnormalized Kullback–Leibler (UKL),  logistic regression (LR),  and positive and unlabeled learning (PU) variants of nnBD. 

Estimating $\ol r=\sup_{\xbf\in\Xc^{\trn}}r^*(\xbf)$ is a key step for density ratio estimation. It is shown that underestimating $C$ leads to significant performance degradation~\citep[Section 5]{BD}. \citet{BD} considered $C$ as a hyper-parameter, which can be tuned. However, obtaining an efficient estimate of $\ol r$ is desirable, in particular when training a deep model. 

Let $\Bc\subset\Xc^{\trn}$.  Assume $p^{\trn}$ and $p^{\te}$ are continuous. Since $\Bc$ is connected and  Lebesgue-measurable with finite measure,   by applying intermediate value theorem~\citep{russ1980translation}, there exist $\tilde x^{\trn}$ and $\hat x^{\te}$ such that $\Pr\{X^{\trn}\in\Bc\}=p^{\trn}(\tilde x^{\trn})\Vol(\Bc)$ and $\Pr\{X^{\te}\in \Bc\}=p^{\te}(\hat x^{\te})\Vol(\Bc)$ where $\Vol(\Bc)=\int_{\xbf\in\Bc} \ud \xbf$.  We note that  $\sup_{\xbf\in \Bc}r^*(\xbf)\leq \frac{\sup_{\xbf\in \Bc}p^{\te}(\xbf)}{\inf_{\xbf\in \Bc}p^{\trn}(\xbf)}$ and $\frac{p^{\te}(\hat x^{\te})}{p^{\trn}(\tilde x^{\trn})}\leq \frac{\sup_{\xbf\in \Bc}p^{\te}(\xbf)}{\inf_{\xbf\in \Bc}p^{\trn}(\xbf)}.$
We partition $\Xc^{\trn}$ into $M$ bins where for each bin $\Bc_m$, if there exists some $\xbf_i^{\trn}\in \Bc_m$, then we define $\tilde r_m := \frac{\Pr\{X^{\te}\in \Bc_m\}}{\Pr\{X^{\trn}\in \Bc_m\}}\simeq \frac{\frac{1}{n^{\te}}\sum_{j=1}^{n^{\te}}\ind(\xbf_j^{\te}\in \Bc_m)}{\frac{1}{n^{\trn}}\sum_{i=1}^{n^{\trn}}\ind(\xbf_i^{\trn}\in \Bc_m)}$
for $m\in[M]$. Otherwise, $\tilde r_m=0$. Finally, we propose to use $C\leq\frac{1}{\tilde r}$  where 
$\tilde r= \max\{\tilde r_1,\ldots,\tilde r_M\}.$ Convergence of $\tilde r$ to $\ol r$ is established in~\cref{app:tilder}.

Now, suppose there are $K$ clients where each client provides $n^{\te}$ unlabelled test samples to the pool of samples.  Our goal is to estimate $r_k$ in~\cref{ratio_k} for $k=1,\ldots,K$.  The BD-based DRM for client $k$ is given by  $\min_{r_k\in \Hc_r}  \hat\BD_f(r_k)$ where $\hat\BD_f(r_k) = \frac{1}{n^{\trn}_k}\sum_{i=1}^{n^{\trn}_k}\Big(\Der f(r_k(\xbf_{k,i}^{\trn}))r_k(\xbf_{k,i}^{\trn})-f(r_k(\xbf_{k,i}^{\trn}))\Big)-\frac{1}{n^{\te}}\sum_{j=1}^{n^{\te}}\sum_{l=1}^K \Der f(r_k(\xbf_{l,j}^{\te})).$
The nnBD DRM problem for client $k$ is   $\min_{r_k\in \Hc_r}  \hat\BD^+_f(r_k)$ where 
\begin{align}\label{nBD_mulapp}
\begin{split}
\!\!\!\!\hat\BD^+_f(r_k)= \RU(\hat S_{1,\ell_1})+ \frac{1}{n^{\te}}\sum_{j=1}^{n^{\te}}\sum_{l=1}^K\ell_2(r_k(\xbf_{l,j}^{\te})),
\end{split}
\end{align} $\hat S_{1,\ell_1}=\frac{1}{n_k^{\trn}}\sum_{i=1}^{n_k^{\trn}}\ell_1(r_k(\xbf_{k,i}^{\trn}))-\frac{C_k}{n^{\te}}\sum_{j=1}^{n^{\te}}\sum_{l=1}^K\ell_1(r_k(\xbf_{l,j}^{\te}))$,  $0<C_k<\frac{1}{\ol r_k}$, and $\ol r_k=\sup_{\xbf\in\Xc^{\trn}}r_k^*(\xbf)$. Substituting $f(z)=\frac{(z-1)^2}{2}$ into \eqref{nBD_mulapp},  the LSIF variant of nnBD for client $k$ is given by $\min_{r_k\in \Hc_r}  \hat\BD^+_\LSIF(r_k)$ where
\begin{align}\label{nBDLS_mulapp}%
\hat\BD^+_\LSIF(r_k) = \RU(\hat S_{\LSIF}) -\frac{1}{n^{\te}}\sum_{j=1}^{n^{\te}}\sum_{l=1}^K\big(r_k(\xbf_{l,j}^{\te})-\frac{C_k}{2}r_k^2(\xbf_{l,j}^{\te})\big),
\end{align} and $\hat S_{\LSIF}=\frac{1}{2n_k^{\trn}}\sum_{i=1}^{n_k^{\trn}}r_k^2(\xbf_{k,i}^{\trn})-\frac{C_k}{2n^{\te}}\sum_{j=1}^{n^{\te}}\sum_{l=1}^K r_k^2(\xbf_{l,j}^{\te})$.  We provide explicit expressions for UKL,  LR,  and PU variants of nnBD for client $k$ in~\cref{app:nnBDvars_mul}.  

Our goal is to estimate $\ol r_k=\sup_{\xbf\in\Xc^{\trn}}\frac{\sum_{l=1}^K p_l^{\te}(\xbf)}{p_k^{\trn}(\xbf)}.$  %
For  HDRM method, we first partition $\Xc^{\trn}$ into $M$ bins where for each bin $\Bc_m$, if there exists some $\xbf_{k,i}^{\trn}\in \Bc_m$, then we define $\tilde r_{k,m} := \frac{\sum_{l=1}^K\Pr\{X_l^{\te}\in \Bc_m\}}{\Pr\{X_k^{\trn}\in \Bc_m\}}\simeq \frac{\frac{1}{n^{\te}}\sum_{j=1}^{n^{\te}}\sum_{l=1}^K\ind(\xbf_{l,j}^{\te}\in \Bc_m)}{\frac{1}{n_k^{\trn}}\sum_{i=1}^{n_k^{\trn}}\ind(\xbf_{k,i}^{\trn}\in \Bc_m)}$
for $m\in[M]$. Otherwise, $\tilde r_{k,m}=0$. Finally, we propose to use $C_k=\frac{1}{\tilde r_k}$ where $\tilde r_k= \max\{\tilde r_{k,1},\ldots,\tilde r_{k,M}\}.$
\subsection{BD-based DRM for FL}\label{app:BDDRM}
Our goal is to estimate  $r_k$ by minimizing the discrepancy $\E_{p_k^{\trn}}[\Br_f(r_k^*(\xbf) \parallel r_k(\xbf))]$, which is equivalent to $\min_{r_k \in \mathcal{H}_r}  \BD_f(r_k)$ where
\begin{equation}\label{eq:nnbd_motiapp}
  \BD_f(r_k)= \E_{p_k^{\trn}}[\Der f(r_k(\xbf))r_k(\xbf)-f(r_k(\xbf))]-\sum_{l=1}^K\E_{p_l^{\te}}[\Der f(r_k(\xbf))]\,,
\end{equation}
since $\E_{p_k^{\trn}}[\Br_f(r_k^*(\xbf) \parallel r_k(\xbf))] = \BD_f(r_k) +\E_{p_k^{\trn}}[f(r_k^*(\xbf))]$ and $\E_{p_k^{\trn}}[f(r_k^*(\xbf))]$ is constant w.r.t.\@ $r_k$. Let $\{\xbf_{k,i}^{\trn}\}_{i=1}^{n^{\trn}_k}$ and $\{\xbf_{l,j}^{\te}\}_{j=1}^{n^{\te}}$ denote unlabelled samples drawn  
i.i.d.\@ from distributions $p_k^{\trn}$ and $p_l^{\te}$, respectively, for $l\in[K]$. A natural way to solve $ \min_{r_k \in \mathcal{H}_r}  \BD_f(r_k)$ is to substitute empirical averages in~\cref{eq:nnbd_motiapp}~\citep{sugiyama2012density}, leading to BD-based DRM for FL: $\min_{r_k\in \Hc_r}  \hat\BD_f(r_k)$ where
\begin{equation*}\label{eq:bddrmapp}
 \hat\BD_f(r_k) = \frac{1}{n^{\trn}_k}\sum_{i=1}^{n^{\trn}_k}\Big(\Der f(r_k(\xbf_{k,i}^{\trn}))r_k(\xbf_{k,i}^{\trn})-f(r_k(\xbf_{k,i}^{\trn}))\Big)-\frac{1}{n^{\te}}\sum_{j=1}^{n^{\te}}\sum_{l=1}^K \Der f(r_k(\xbf_{l,j}^{\te})).
\end{equation*}
\section{Communication costs and \IIW}\label{app:IIWERM}
To estimate density ratios for \IW, clients require to send a few unlabelled test samples {\it only once}. The server shuffles those samples and broadcasts the shuffled version to clients {\it only once}. The communication overhead to estimate ratios is negligible compared to the communication costs for sharing high-dimensional stochastic gradients over the course of training.

Consider the example of CIFAR10 consisting of 32 by 32 images with 3 channels represented by 8 bits. If one shares 1000 unlabelled images\footnote{ 
A total number of 1000 images are shown to be sufficient to learn density ratios on CIFAR10~\citep{BD}[10,  Section 5.1].}, the communication amounts to sharing roughly $3\times 10^6$ values each with 8 bits, i.e., $25\times 10^6$ total communication bits or $3.1$MB. In contrast, during training, the network size alone easily surpasses this size (e.g. the common ResNet-18 has 11 million parameters, each represented by a 32-bit floating point). Standard training of ResNet-18 requires $8\times 10^4$ iterations and aggregations, which amounts to $2.816\times 10^{13}$ total communicated bits per client, i.e., $3.5$TB during training.

In other words, the number of communication bits needed during training in standard federated learning is usually many {\it orders of magnitudes} larger than the size of samples shared for estimating the ratios. To further reduce communication costs of density ratio estimation and gradient aggregation, compression methods such as quantization, sparsification, and local updating rules, can be used along with~\ref{glob:obj} {\it on the fly}~\citep{QSGD}.

Alternatively, to eliminate any privacy risks, clients may minimize the following  surrogate objective, which we name \IIW: 
\begin{align}\label{glob:Surobjapp}
\min_{\wbf\in\reals^d} \tilde F(\wbf):= \sum_{k=1}^K \tilde F_k(\wbf)
\end{align} where $\tilde F_k(\wbf)=\frac{1}{n^{\trn}_k}\sum_{i=1}^{n^{\trn}_k} \frac{p_k^{\te}(\xbf_{k,i}^{\trn})}{p_k^{\trn}(\xbf_{k,i}^{\trn})}\ell(h_\wbf(\xbf_{k,i}^{\trn}),\ybf_{k,i}^{\trn}).$

We note that privacy risks are eliminated by solving~\ref{glob:Surobjapp}. However, to exploit  the entire data distributed among all clients and achieve the optimal global model in terms of overall test error, clients need to compromise some level of privacy and share unlabelled samples generated from their test distribution with the server.  Hence, in this paper, we focus on the original objective $F(\wbf)$ in~\ref{glob:obj}, which is different from $\tilde F(\wbf)$.

\section{Variants of nnBD}\label{app:nnBDvars}
In this section, we show explicit expressions for unnormalized Kullback–Leibler (UKL),  logistic regression (LR),  and positive and unlabeled learning (PU) variants of nnBD. 

Substituting $f(z)=z\log(z)-z$ into~\cref{nBDapp}, we have $\ell_1(z)=z$ and $\ell_2(z)=zC-\log(z)$, and the UKL variant of nnBD is given by
\begin{align}\label{nBDUKL}
\begin{split}
\hat\BD^+_\UKL(r)&= \RU\Big(\frac{1}{n^{\trn}}\sum_{i=1}^{n^{\trn}}r(\xbf_i^{\trn})-\frac{C }{n^{\te}}\sum_{j=1}^{n^{\te}}r(\xbf_j^{\te})\Big) \\&\quad-\frac{1}{n^{\te}}\sum_{j=1}^{n^{\te}}\big(\log(r(\xbf_j^{\te}))-{C}r(\xbf_j^{\te})\big).
\end{split}
\end{align}

Substituting $f(z)=z\log(z)-(z+1)\log(z+1)$ into \cref{nBDapp}, we have $\ell_1(z)=\log(z+1)$ and $\ell_2(z)=C\log(z+1)+\log\big(\frac{z+1}{z}\big)$, and the LR (BKL) variant of nnBD is given by
\begin{align}\label{nBDLR}
\begin{split}
\hat\BD^+_\LR(r) &= \RU\Big(\frac{1}{n^{\trn}}\sum_{i=1}^{n^{\trn}}\log(r(\xbf_i^{\trn})+1)-\frac{C }{n^{\te}}\sum_{j=1}^{n^{\te}}\log(r(\xbf_j^{\te})+1)\Big)\\
&\quad -\frac{1}{n^{\te}}\sum_{j=1}^{n^{\te}}\l(\log\l(\frac{r(\xbf_j^{\te})}{r(\xbf_j^{\te})+1}\r)-{C}\log(r(\xbf_j^{\te})+1)\r).
\end{split}
\end{align}

Substituting $f(z)=C\log(1-z)+Cz(\log(z)-\log(1-z))$ into \cref{nBDapp}, we have $\ell_1(z)=-C\log(1-z)$ and $\ell_2(z)=-C\log(z)+(C-C^2)\log(1-z)$, and the PU variant of nnBD is given by
\begin{align}\label{nBDPU}
\begin{split}
\hat\BD^+_\PU(r) &= \RU\Big(\frac{-C}{n^{\trn}}\sum_{i=1}^{n^{\trn}}\log(1-r(\xbf_i^{\trn}))+\frac{C^2 }{n^{\te}}\sum_{j=1}^{n^{\te}}\log(1-r(\xbf_j^{\te}))\Big)\\
&\quad -\frac{1}{n^{\te}}\sum_{j=1}^{n^{\te}}\l(C\log(r(\xbf_j^{\te}))-(C-C^2)\log(1-r(\xbf_j^{\te}))\r).
\end{split}
\end{align}

\section{Convergence of $\tilde r$ and $k$-means clustering}\label{app:tilder}

\blm If $n_k^{\trn}$, $n^{\te}$, and $M$  go to infinity with $\sup_m \Vol(\Bc_m)\ra 	0$, then $\tilde r_k\ra \ol r_k$.  
\elm
\bpr
Let $\xbf\in\Xc^{\trn}$. Note that when $n_k^{\trn}$, $n^{\te}$, and $M$  go to infinity, the numerator and denominator of $\tilde r_k$ become $\sum_{l=1}^Kp_l^{\te}(\xbf)\Vol(\Bc_m)$ and  $p_k^{\trn}(\xbf)\Vol(\Bc_m)$, respectively, where $\xbf\in \Bc_m$. 
\epr

Please note that our density ratio in~\cref{ratio_k} is in the form of a sum of test densities over own train density. So even if one or a few number of ratios are poorly estimated, it will not impact the entire ratio in~\cref{ratio_k} as nested estimation errors. The error does not propagate in a multiplicative manner but in an additive way.

\paragraph{$k$-means clustering for HDRM.} We note that partitioning the space and counting the number of samples in each bin is not necessarily an easy task when data is high dimensional. In practice, one simple method is to cluster train and test samples using an efficient implementation of $k$-means clustering with $M$ clusters and count the number of train and test samples in each cluster~\citep{lloyd1982least}.  To estimate the ratios, we need a batch of samples from the test distribution of each client in addition to a batch of samples from the train distribution for each estimating client.  The running time of Lloyd's algorithm with $M$ clusters is $O(n\dx M)$ where $n$ is  the total number of samples with dimension $\dx$.

\section{UKL, LR,  and PU variants of nnBD for multiple clients}\label{app:nnBDvars_mul}
In this section, we provide explicit expressions for UKL,  LR,  and PU variants of nnBD for client $k$.  

The UKL variant of nnBD for client $k$ is given by $\min_{r_k\in \Hc_r}  \hat\BD^+_\UKL(r_k)$ where
\begin{align}\label{nBDUKL_mul}
\begin{split}
\hat\BD^+_\UKL(r_k) &= \RU\Big(\frac{1}{n_k^{\trn}}\sum_{i=1}^{n_k^{\trn}}r_k(\xbf_{k,i}^{\trn})-\frac{C_k}{n^{\te}}\sum_{j=1}^{n^{\te}}\sum_{l=1}^K r_k(\xbf_{l,j}^{\te})\Big)\\ &\quad -\frac{1}{n^{\te}}\sum_{j=1}^{n^{\te}}\sum_{l=1}^K\big(\log(r_k(\xbf_{l,j}^{\te}))-C_k r_k(\xbf_{l,j}^{\te})\big).
\end{split}
\end{align}

The LR variant of nnBD for client $k$ is given by $\min_{r_k\in \Hc_r}  \hat\BD^+_\LR(r_k)$ where
\begin{align}\label{nBDLR_mul}
\begin{split}
\hat\BD^+_\LR(r_k) &= \RU\Big(\frac{1}{n_k^{\trn}}\sum_{i=1}^{n_k^{\trn}}\log(r_k(\xbf_{k,i}^{\trn})+1)-\frac{C_k}{n^{\te}}\sum_{j=1}^{n^{\te}}\sum_{l=1}^K \log(r_k(\xbf_{l,j}^{\te})+1)\Big)\\ &\quad -\frac{1}{n^{\te}}\sum_{j=1}^{n^{\te}}\sum_{l=1}^K\l(\log\l(\frac{r_k(\xbf_{l,j}^{\te})}{r_k(\xbf_{l,j}^{\te})+1}\r)-C_k\log(r_k(\xbf_{l,j}^{\te})+1)\r).
\end{split}
\end{align}

The PU variant of nnBD for client $k$ is given by $\min_{r_k\in \Hc_r}  \hat\BD^+_\PU(r_k)$ where
\begin{align}\label{nBDPU_mul}
\begin{split}
\hat\BD^+_\PU(r_k) &= \RU\Big(\frac{-C_k}{n_k^{\trn}}\sum_{i=1}^{n_k^{\trn}}\log(1-r_k(\xbf_{k,i}^{\trn}))+\frac{C_k^2}{n^{\te}}\sum_{j=1}^{n^{\te}}\sum_{l=1}^K \log(1-r_k(\xbf_{l,j}^{\te}))\Big)\\ &\quad -\frac{1}{n^{\te}}\sum_{j=1}^{n^{\te}}\sum_{l=1}^K\l(C_k \log(r_k(\xbf_{l,j}^{\te}))-(C_k-C_k^2)\log(1-r_k(\xbf_{l,j}^{\te}))\r).
\end{split}
\end{align}

\section{Proof of~\cref{Thm:gen}}\label{app:thm:gen}
In this section, we prove~\cref{Thm:gen}, which establishes an upper bound on the ratio estimation error  of nnBD DRM (HDRM method with  an arbitrary $f$) for client $k$ in terms of BD risk, which holds with high probability along the lines of~\citep{kiryo2017positive,lu2020mitigating,BD}.

We remind that client $k$'s goal is to estimate this ratio: 
\begin{align}\label{app:ratio_k}
r_k(\xbf)=\frac{\sum_{l=1}^K p_l^{\te}(\xbf)}{p_k^{\trn}(\xbf)}.
\end{align}

For client $k$, the BD risk is given by 
\begin{align}\label{app:BrObj_mul}
\begin{split}
\BD_f(r_k)= \tilde\E_{k}(\xbf)[\ell_1(r_k(\xbf))]+\sum_{l=1}^K\E_{p_l^{\te}}[\ell_2(r_k(\xbf))]
\end{split}
\end{align} where $\tilde\E_{k}:=\E_{p_k^{\trn}}-C_k\sum_{l=1}^K\E_{p_l^{\te}}$,  $0<C_k<\frac{1}{\ol r_k}$,  $\ol r_k=\sup_{\xbf\in\Xc^{\trn}}=\frac{\sum_{l=1}^K p_l^{\te}(\xbf)}{p_k^{\trn}(\xbf)}$, $\ell_1(z)=\Der f(z)z-f(z)$, and $\ell_2(z)=C(\Der f(z)z-f(z))-\Der f(z)$. We note that the definition of $C_k$ implies $\tilde p_k =p_k^{\trn} -C_k\sum_{l=1}^K p_l^{\te}>0$.  We remind that $f:\Bc_f\ra \R$ is a strictly convex function with bounded gradient $\Der f$ where $\Bc_f\subset [0,\infty)$, and $\Hc_r\subset\{r:\Xc\ra \Bc_f\}$ denotes a hypothesis class for our model $r$. 

The nnBD DRM problem for client $k$ is   $\min_{r_k\in \Hc_r}  \hat\BD^+_f(r_k)$ where 
\begin{align}\label{app:nBD_mul}
\begin{split}
\hat\BD^+_f(r_k)= \RU\Big((\hat\E_{p_k^{\trn}}-C_k\sum_{l=1}^K\hat\E_{p_l^{\te}})[\ell_1(r_k(\xbf))]\Big)+\sum_{l=1}^K\hat\E_{p_l^{\te}}[\ell_2(r_k(\xbf))]
\end{split}
\end{align} 
with $\hat\E_{p_k^{\trn}}$ is the sample average over $\{\xbf_{k,i}^{\trn}\}_{i=1}^{n_k^{\trn}}$, and $\hat\E_{p_l^{\te}}$ is the sample average over $\{\xbf_{l,j}^{\te}\}_{j=1}^{n^{\te}}$. In the following, we denote 
$\hat\E_{k}:=\hat\E_{p_k^{\trn}}-C_k\sum_{l=1}^K\hat\E_{p_l^{\te}}$ for notational simplicity.  

Let $\hat r_k:=\arg\min_{r_k\in\Hc_r} \hat\BD^+_f(r_k)$ and $r_k^*:=\arg\min_{r_k\in\Hc_r} \BD_f(r_k)$.  We first decompose the ratio estimation error into maximal deviation and bias terms: 

\begin{align}\label{gen_decom}\small
\begin{split}
	\BD_f(\hat r_k) - \BD_f(r_k^*)&\leq \BD_f(\hat r_k) - \hat\BD^+_f(\hat r_k) + \hat\BD^+_f(\hat r_k)-  \BD_f(r_k^*)\\
	& \leq \BD_f(\hat r_k) - \hat\BD^+_f(\hat r_k) + \hat\BD^+_f(r_k^*)-  \BD_f(r_k^*)\\
	&\leq 2\sup_{r_k\in\Hc_r} | \BD_f(r_k) - \hat\BD^+_f(r_k)|\\	
	&\leq 2\sup_{r_k\in\Hc_r} | \hat\BD^+_f(r_k)-\E[\hat\BD^+_f(r_k)]| + 2\sup_{r_k\in\Hc_r} |\E[\hat\BD^+_f(r_k)]- \BD_f(r_k)|
\end{split}	
\end{align} where the second inequality holds since $\hat r_k:=\arg	\min_{r_k\in\Hc_r} \hat\BD^+_f(r_k)$. 
The first term in the RHS of \eqref{gen_decom} is the maximal derivation and the second term is the bias.

In the following two lemmas, we find an upper bound on the maximal deviation $\sup_{r_k\in\Hc_r} | \hat\BD^+_f(r_k)-\E[\hat\BD^+_f(r_k)]|$ and bias $\sup_{r_k\in\Hc_r} |\E[\hat\BD^+_f(r_k)]- \BD_f(r_k)|$, respectively. 

\blm[Maximal deviation bound]\label{lm:gen_maxdev} 
Denote $\Delta_\ell:=\sup_{z\in\Bc_f}\max_{i\in\{1,2\}}|\ell_i(z)|$,  then for any $0<\delta<1$, the maximal deviation term is upper bounded with probability at least $1-\delta$
\begin{align}\label{gen_maxdev}
\begin{split}
\sup_{r_k\in\Hc_r} | \hat\BD^+_f(r_k)-\E[\hat\BD^+_f(r_k)]|\leq&~
4L_1 R_{n^{\trn}_k}^{p^{\trn}_k}(\Hc_r)+4(C_k L_1+L_2)\sum_{l=1}^K R_{n^{\te}}^{p^{\te}_l}(\Hc_r)\\&+\Delta_\ell\sqrt{2\Big(\frac{1}{n^{\trn}_k}+\frac{K(1+C_k)^2}{n^{\te}}\Big)\log{\frac{1}{\delta}}}. 
\end{split}
\end{align}
\elm 
\bpr Denote $\Phi(\Sc_k):=\sup_{r_k\in\Hc_r} | \hat\BD^+_f(r_k)-\E[\hat\BD^+_f(r_k)]|$ with $\Sc_k=\{\xbf_{k,1}^{\trn},\ldots,\xbf_{n_k^{\trn},1}^{\trn},\xbf_{1,1}^{\te},\ldots,\xbf_{K,n^{\te}}^{\te}\}$.  Let $\Sc_k^{(i)}$ be obtained by replacing element $i$ of set $\Sc_k$ by an independent data point taking values from the set $\Xc^{\trn}$.  We now measure the absolute value of the difference caused by changing one data point in the maximal deviation term \eqref{gen_maxdev}, i.e., $|\Phi(\Sc_k)-\Phi(\Sc_k^{(i)})|$.  If the changed point is sampled from $p_k^{\trn}$, then the absolute value of the difference caused in the maximal deviation term is upper bounded by $\frac{2\Delta_\ell}{n_k^{\trn}}$. If the changed point is sampled from $p_l^{\te}$, the the absolute value of the difference caused in the maximal deviation term is upper bounded by $\frac{2\Delta_\ell(C_k+1)}{n^{\te}}$ for $l=1,\ldots,K$.   Applying McDiarmid’s inequality~\citep{mcdiarmid}, with probability at least $1-\delta$, we have 
\begin{align}\nn
\begin{split}
\sup_{r_k\in\Hc_r} | \hat\BD^+_f(r_k)-\E[\hat\BD^+_f(r_k)]|&\leq \E[\sup_{r_k\in\Hc_r} | \hat\BD^+_f(r_k)-\E[\hat\BD^+_f(r_k)]|]\\&\quad+\Delta_\ell\sqrt{2\Big(\frac{1}{n^{\trn}_k}+\frac{K(1+C_k)^2}{n^{\te}}\Big)\log{\frac{1}{\delta}}}.
\end{split}
\end{align}
In the following, we establish an upper bound on the expected maximal deviation $\E[\sup_{r_k\in\Hc_r} | \hat\BD^+_f(r_k)-\E[\hat\BD^+_f(r_k)]|]$ by generalization the symmetrization argument in \citep{kiryo2017positive,lu2020mitigating} followed by applying Talagrand’s contraction lemma for two-sided Rademacher complexity. 

Let $m\in[M]$  and $N_m\in\integers_+$ for $M\in\integers_+$. Let $g_m:\reals\ra\reals$ be a $L_{g_m}$-Lipschitz function. Let $p_{m,p}$  denote a probability distribution over $\Xc^{\tr}$. Suppose that $\{\xbf_i\}_{i=1}^{n_{m,p}}$ are drawn i.i.d. from $p_{m,p}$ for $p\in[N_m]$ and $m\in[M]$. Let $\ell_{m,p}:\Bc_f\ra \reals_+$ be a $L_{m,p}$-Lipschitz function and $\tilde C_{m,p}$ be a constant $\forall~m,p$.   Consider the following stochastic process: 
\begin{align}\nn
\hat R_k(r_k):=\sum_{m=1}^M g_m\big( \sum_{p=1}^{N_m}\tilde C_{m,p} \hat\E_{m,p} [\ell_{(m,p)}(r_k(\xbf))]
\big)
\end{align} where $\hat\E_{m,p}$ denotes sample average over  $\{\xbf_i\}_{i=1}^{n_{m,p}}$.  In the rest of the proof, we show that 
\begin{align}\label{sym}
\E[\sup_{r_k\in \Hc_r}|\hat R_k(r_k)-\E[\hat R_k(r_k)]|]\leq 4\sum_{m=1}^M\sum_{p=1}^{N_m} L_{g_m}|\tilde C_{m,p}|L_{m,p} R_{n_{m,p}}^{p_{m,p}}(\Hc_r). 
\end{align}

To prove~\eqref{sym}, we consider a continuous extension of $\ell_{(m,p)}$ defined on the origin. We note that such extension does not change $\hat R_k(r_k)$ since $\ell_{(m,p)}$ takes values only in  $\Bc_f$.  If $\Bc_f=\{(z_1,z_2)\}$ for some $0\leq z_1<z_2$, then for any $z\in[0,z_1]$, we define $\ell_{(m,p)}(z)=\lim_{z\da z_1} \ell_{(m,p)}(z)$ where $\lim_{z\da z_1} \ell_{(m,p)}(z)$ exists since $\ell_{(m,p)}$ is uniformly continuous due to Lipschitz continuity.  Then  
 $\ell_{(m,p)}$ will be $L_{m,p}$-Lipschitz on $z\in[0,z_2]$. Let $\{\tilde \xbf_i\}_{i=1}^{n_{m,p}}$ be an independent copy of $\{\xbf_i\}_{i=1}^{n_{m,p}}$. Let denote $\delta_{\hat R}:=\E[\sup_{r_k\in \Hc_r}|\hat R_k(r_k)-\E[\hat R_k(r_k)]|]$.  Following a symmetrization argument~\citep{vapnik1999overview}, an upper bound on the symmetrized process can be established by Rademacher complexity: %
 
{\footnotesize\begin{align}\label{sym_chain}
\begin{split}
\delta_{\hat R}&\leq \E\l[\sup_{r_k\in \Hc_r}\sum_{m=1}^M\l| g_m\big( \sum_{p=1}^{N_m}\tilde C_{m,p} \hat\E_{m,p} [\ell_{(m,p)}(r_k(\xbf))]\big)-{\tilde\E}\big[ g_m\big( \sum_{p=1}^{N_m}\tilde C_{m,p} \hat{\tilde\E}_{m,p} [\ell_{(m,p)}(r_k(\xbf))]\big)\big]\r|\r]\\
&\leq \E{\tilde\E}\l[\sup_{r_k\in \Hc_r}\sum_{m=1}^M\l| g_m\big( \sum_{p=1}^{N_m}\tilde C_{m,p} \hat\E_{m,p} [\ell_{(m,p)}(r_k(\xbf))]\big)- g_m\big( \sum_{p=1}^{N_m}\tilde C_{m,p} \hat{\tilde\E}_{m,p} [\ell_{(m,p)}(r_k(\xbf))]\big)\r|\r]\\
&\leq \sum_{m=1}^M L_{g_m}\sum_{p=1}^{N_m}|\tilde C_{m,p}|\E{\tilde\E}\l[\sup_{r_k\in \Hc_r}\l|  \hat\E_{m,p} [\ell_{(m,p)}(r_k(\xbf))]-  \hat{\tilde\E}_{m,p} [\ell_{(m,p)}(r_k(\xbf))]\r|\r]\\
&= \sum_{m=1}^M \!\! L_{g_m}\!\!\sum_{p=1}^{N_m}|\tilde C_{m,p}|\E{\tilde\E}\big[\!\!\sup_{r_k\in \Hc_r}\!\!\big|  \hat\E_{m,p} [\ell_{(m,p)}(r_k(\xbf))-\ell_{(m,p)}(0)]-  \hat{\tilde\E}_{m,p} [\ell_{(m,p)}(r_k(\xbf))-\ell_{(m,p)}(0)]\big|\big]\\
&\leq 4\sum_{m=1}^M L_{g_m}\sum_{p=1}^{N_m}|\tilde C_{m,p}|\E\l[\sup_{r_k\in \Hc_r}\l|  \hat\E_{m,p} [\sigma_{m,p}(\ell_{(m,p)}(r_k(\xbf))-\ell_{(m,p)}(0))]\r|\r]\\
&\leq 4\sum_{m=1}^M L_{g_m}\sum_{p=1}^{N_m}|\tilde C_{m,p}|R_{n_{m,p}}^{p_{m,p}}(\Hc_r)
\end{split}
\end{align}}where $\sigma_{m,p}$  are Rademacher variables uniformly chosen from $\{-1,1\}$, $\tilde\E$ and $\hat{\tilde\E}_{m,p}$ denote the expectation and sample average over data distribution $p_{m,p}$ and the independent copy $\{\tilde \xbf_i\}_{i=1}^{n_{m,p}}$, respectively, the third inequality holds by the Lipschitz continuous property of $g_m$, and the last inequality is obtained by applying Talagrand’s contraction lemma for two-sided Rademacher complexity \citep{ledoux1991probability,bartlett2002rademacher}.

Applying~\eqref{sym}, we can show that 
\begin{align}\nn
\E[\sup_{r_k\in\Hc_r} | \hat\BD^+_f(r_k)-\E[\hat\BD^+_f(r_k)]|]\leq 4L_1 R_{n^{\trn}_k}^{p^{\trn}_k}(\Hc_r)+4(C_k L_1+L_2)\sum_{l=1}^K R_{n^{\te}}^{p^{\te}_l}(\Hc_r),
\end{align}
which completes the proof. 
 \epr 

Next we find an upper bound on the bias $\sup_{r_k\in\Hc_r} |\E[\hat\BD^+_f(r_k)]- \BD_f(r_k)|$.

\blm[Bias bound]\label{lm:gen_bias} 
Denote $\Delta_\ell:=\sup_{z\in\Bc_f}\max_{i\in\{1,2\}}|\ell_i(z)|$.  Assume $\inf_{r\in\Hc_r} \E[\hat\E_{k}[\ell_1(r_k(\xbf))]]>0$ for $k\in[K]$.  Then, an upper bound on the bias term is given by 
\begin{align}\label{gen_bias}
\sup_{r_k\in\Hc_r} |\E[\hat\BD^+_f(r_k)]- \BD_f(r_k)|\leq 
(1+KC_k)\Delta_\ell\exp\Big(\frac{-2\eta_k^2}{\Delta_\ell^2/n^{\trn}_k+KC_k^2\Delta_\ell^2/n^{\te}}\Big) 
\end{align}
\elm  for some constant $\eta_k>0$.
\bpr Let $\hat{\tilde\E}_k:=\hat\E_{p_k^{\trn}}-C_k\sum_{l=1}^K\hat\E_{p_l^{\te}}.$
We first note that 
\begin{align}\nn
\begin{split}
|\E[\hat\BD^+_f(r_k)]- \BD_f(r_k)|&=|\E[\hat\BD^+_f(r_k)- \hat\BD_f(r_k)]|\\
&=\l|\E\l[\RU\l(\hat{\tilde\E}_k[\ell_1(r_k(\xbf))]\r)-\hat{\tilde\E}_k[\ell_1(r_k(\xbf))]\r]\r|\\
&\leq \E\l[\l|\RU\l(\hat{\tilde\E}_k[\ell_1(r_k(\xbf))]\r)-\hat{\tilde\E}_k[\ell_1(r_k(\xbf))]\r|\r]\\
&= \E\l[\indic\l\{\RU\l(\hat{\tilde\E}_k[\ell_1(r_k(\xbf))]\r)\neq\hat{\tilde\E}_k[\ell_1(r_k(\xbf))]\r\}\r]\\
&\quad \cdot \l|\RU\l(\hat{\tilde\E}_k[\ell_1(r_k(\xbf))]\r)-\hat{\tilde\E}_k[\ell_1(r_k(\xbf))]\r|\\
&= \E\l[\indic\l\{\RU\l(\hat{\tilde\E}_k[\ell_1(r_k(\xbf))]\r)\neq\hat{\tilde\E}_k[\ell_1(r_k(\xbf))]\r\}\r]\sup_{z:|z|\leq (1+KC_k)\Delta_\ell}(\RU(z)-z)
\end{split}
\end{align} where the third inequality holds due to Jensen's inequality. 

We note that $\hat{\tilde\E}_k[\ell_1(r_k(\xbf))]\leq (1+KC_k)\Delta_\ell$ implies
\begin{align*}
\sup_{z:|z|\leq (1+KC_k)\Delta_\ell}(\RU(z)-z)\leq (1+KC_k)\Delta_\ell. 
\end{align*} 

Due to the assumption $\inf_{r\in\Hc_r} \E[\hat\E_{k}[\ell_1(r_k(\xbf))]]>0$,  there exists an $\eta_k>0$ such that 
$\E[\hat\E_{k}[\ell_1(r_k(\xbf))]]\geq\eta_k$ for all $r_k\in\Hc_r$. Then we have 

\begin{align}\nn
\begin{split}
\E\l[\indic\l\{\RU\l(\hat{\tilde\E}_k[\ell_1(r_k(\xbf))]\r)\neq\hat{\tilde\E}_k[\ell_1(r_k(\xbf))]\r\}\r]&= \Pr\big\{\hat{\tilde\E}_k[\ell_1(r_k(\xbf))]\in\supp(\widetilde\RU)\big\}\\
&= \Pr\big\{\hat{\tilde\E}_k[\ell_1(r_k(\xbf))]<0\big\}\\
&= \Pr\big\{\hat{\tilde\E}_k[\ell_1(r_k(\xbf))]<\E[\hat{\tilde\E}_k[\ell_1(r_k(\xbf))]]-\eta_k\big\}
\end{split}
\end{align} where $\widetilde\RU(z)=\RU(z)-z$. 

Denote $\tilde\Phi(\Sc_k):=\hat{\tilde\E}_k[\ell_1(r_k(\xbf))]$ where $\Sc_k=\{\xbf_{k,1}^{\trn},\ldots,\xbf_{n_k^{\trn},1}^{\trn},\xbf_{1,1}^{\te},\ldots,\xbf_{K,n^{\te}}^{\te}\}$.  Let $\Sc_k^{(i)}$ be obtained by replacing element $i$ of set $\Sc_k$ by an independent data point taking values from the set $\Xc^{\trn}$.  We now measure the absolute value of the difference caused by changing one data point in $|\tilde\Phi(\Sc_k)-\tilde\Phi(\Sc_k^{(i)})|$.  If the changed point is sampled from $p_k^{\trn}$, the the absolute value of the difference caused in the maximal deviation term is upper bounded by $\frac{\Delta_\ell}{n_k^{\trn}}$. If the changed point is sampled from $p_l^{\te}$, the the absolute value of the difference caused in the maximal deviation term is upper bounded by $\frac{\Delta_\ell C_k}{n^{\te}}$ for $l=1,\ldots,K$.   Finally, McDiarmid’s inequality~\citep{mcdiarmid} implies: 

\begin{align}\nn
\Pr\big\{\hat{\tilde\E}_k[\ell_1(r_k(\xbf))]<\E[\hat{\tilde\E}_k[\ell_1(r_k(\xbf))]]-\eta_k\big\}\leq \exp\Big(\frac{-2\eta_k^2}{\Delta_\ell^2/n^{\trn}_k+KC_k^2\Delta_\ell^2/n^{\te}}\Big),
\end{align} which completes the proof. 
\epr 
Substituting the upper bounds in \eqref{gen_maxdev} and \eqref{gen_bias} into~\eqref{gen_decom}, with probability at least $1-\delta$, we have

\begin{align}\label{genbound}\small
	\!\!\!\BD_f(\hat r_k) - \BD_f(r_k^*)\leq 8L_1 R_{n^{\trn}_k}^{p^{\trn}_k}(\Hc_r)+\Psi(\delta,\Delta_\ell,n^{\trn}_k,n^{\te})+8(C_k L_1+L_2)\sum_{l=1}^K R_{n^{\te}}^{p^{\te}_l}(\Hc_r)\,
\end{align} where $\Psi=\Delta_\ell\sqrt{8(\frac{1}{n^{\trn}_k}+\frac{K(1+C_k)^2}{n^{\te}})\log{\frac{1}{\delta}}}+2(1+KC_k)\Delta_\ell\exp\big(\frac{-2\eta_k^2}{\Delta_\ell^2/n^{\trn}_k+KC_k^2\Delta_\ell^2/n^{\te}}\big)$ for some constant $\eta_k>0$. This completes the proof. 

\section{Ratio estimation error bound for multi-layer perceptron and multiple clients}\label{app:gen_nn}
Our high-probability ratio estimation error bound for client $k$ depends on the Rademacher complexity of the hypothesis class for our density ratio model $\Hc_r\subset\{r:\Xc\ra \Bc_f\}$ w.r.t.\@ client $k$ train distribution $p_k^{\trn}$ and all client's test distributions $p_l^{\te}$ for $l\in[K]$.  By restricting a function class for density ratios and substituting an upper bounds on its Rademacher complexity, we can obtain explicit ratio estimation error bounds in terms of $n^{\trn}_k,n^{\te}$ in a special case. As an example, the following corollary establishes a ratio estimation error bound for multi-layer perceptron density ratio models in terms of the Frobenius norms of weight matrices.

\begin{example}[Complexity for multi-layer perceptron class~\citep{golowich2018size}]\label{ex:nn}
Assume that distribution $p$ has a bounded support $S_p:=\sup_{\xbf\in\supp(p)}\|\xbf\|<\infty$.  Let $\Hc$ be the class of real-valued neural networks with depth $L$ over the domain $\Xc^{\trn}$, $\Wbf_i$ be the network weight matrix $i$.  Suppose that each weight matrix has a bounded Frobenius norm $\|\Wbf_i\|_F\leq \Delta_{\Wbf_i}$ for $i\in[L]$ and the activation $\phi$ is 1-Lipschitz,  and positive-homogeneous function, \ie $\phi(\alpha z)=\alpha\phi(z)$, which is applied element-wise. Then we have 
\begin{align}\nn
R_n^p(\Hc)\leq\frac{S_p(\sqrt{2L\log2}+1)\prod_{i=1}^L\Delta_{\Wbf_i}}{\sqrt{n}}.
\end{align}
\end{example}
\begin{remark} To control the upper bound $\Delta_{\Wbf_i}$ for $i\in[L]$, it is natural to employ the sparsity of the weights, \eg \citep[Section 4]{golowich2018size} and~\citep{hanin2019deep}. We consider a special network architecture where  $\diag(\Wbf_i)$'s are close to 1-sparse unit vectors for $i\in[L]$, which implies that the matrices $\Wbf_i$'s will be {\it almost} rank-$1$. Then $\|\Wbf_i\|_F$ is upper bounded by 1 for $i\in[L]$. 
\end{remark}
\bcr[High-probability ratio estimation error bound under~\cref{ex:nn}] \label{cr:gen_nn} For~\cref{ex:nn} and loss functions described in~\cref{Thm:gen},  with probability at least $1-\delta$, we have 
\begin{align}\nn%
\begin{split}
	\!\!\!\BD_f(\hat r_k) - \BD_f(r_k^*)\leq \frac{K^{\trn}_{k}}{\sqrt{n^{\trn}_k}}+\sum_{l=1}^K\frac{K^{\te}_{l}}{\sqrt{n^{\te}}}
	+\Psi(\delta,\Delta_\ell,n^{\trn}_k,n^{\te})
\end{split}	
\end{align} where $K^{\trn}_{k}=O(L_1S_{p^{\trn}_k}\sqrt{L}\prod_{i=1}^L\Delta_{\Wbf_i})$, $K^{\te}_{l}=O(\max\{L_1,L_2\}S_{p^{\te}_l}\sqrt{L}\prod_{i=1}^L\Delta_{\Wbf_i})$, and 
$\Psi=\Delta_\ell\sqrt{8(\frac{1}{n^{\trn}_k}+\frac{K(1+C_k)^2}{n^{\te}})\log{\frac{1}{\delta}}}+2(1+KC_k)\Delta_\ell\exp\big(\frac{-2\eta_k^2}{\Delta_\ell^2/n^{\trn}_k+KC_k^2\Delta_\ell^2/n^{\te}}\big)$ for some constant $\eta_k>0$.
\ecr 
Finally, we apply union bound and obtain a global ratio estimation error bound that holds for all clients: 
\bcr[High-probability ratio estimation error bound for multiple clients]\label{cr:gen_mul} Let $0<\delta_k<1$ for $k\in[K]$. Let $\ol K^{\trn}= \max_{k\in[K]}K^{\trn}_{k}$. For~\cref{ex:nn} and loss functions described in~\cref{Thm:gen},  with probability at least $1-\sum_{k=1}^K\delta_k$, we have 
\begin{align}\nn%
\begin{split}
	\!\!\!\max_{k\in[K]}\{\BD_f(\hat r_k) - \BD_f(r_k^*)\}&\leq \frac{\ol K^{\trn}}{\sqrt{\ul n^{\trn}}}+\sum_{l=1}^K\frac{K^{\te}_{l}}{\sqrt{n^{\te}}}
	\\
	&\quad +\ol\Psi(\delta,\Delta_\ell,\ul n^{\trn},n^{\te})
\end{split}	
\end{align}
where $\ol\Psi=\Delta_\ell\sqrt{8(\frac{1}{\ul n^{\trn}}+\frac{K(1+\ol C)^2}{n^{\te}})\log{\frac{1}{\ul\delta}}}+2(1+K\ol C)\Delta_\ell\exp\big(\frac{-2\ul\eta^2}{\Delta_\ell^2/\ul n^{\trn}+K\ol C^2\Delta_\ell^2/n^{\te}}\big)$, $\ol C= \max_{k\in[K]}C_k$,  $\ul n^{\trn}=\min_{k\in[K]}n^{\trn}_k$,  $\ul\delta=\min_{k\in[K]}\delta_k$, and $\ul\eta=\min_{k\in[K]}\eta_k$.
\ecr 

The rates match the optimal minimax rates for example for a density estimation problem when the density belongs to the H\"{o}lder function class~\citep{Tsybakov}[Section 2] with a sufficiently large $\beta$ based on  Definition 1.2 of~\citet{Tsybakov}. The $\Omega(1/\sqrt{n})$ lower bounds are obtained for important problems including nonparametric regression, estimation of functionals, nonparametric testing, and finding a linear combination of $M$ functions to be as close as the target data generating function \citep{Nemirovski}[Section 5.3].

\section{Additional error due to estimation of $\ol r_k$}\label{app:adderr}
In this section, we consider a practical scenario where we have access to only in imperfect estimate of $\ol r_k=\sup_{\xbf\in\Xc^{\trn}}r_k^*(\xbf)$ to find $C_k$ in~\cref{nBD_mul}. In particular, we find additional error when using $\tilde C_k=\frac{1}{\tilde r_k}$ where $\tilde r_k$ is obtained by HDRM in~\cref{sec:ratio}. The nnBD DRM problem for client $k$ using $\tilde C_k$ is   $\min_{r_k\in \Hc_r}  \hat\BD^+_f(r_k)$ where 
\begin{align}\label{app:nBD_adderr}
\begin{split}
\hat\BD^+_f(r_k)= \RU\Big((\hat\E_{p_k^{\trn}}-\tilde C_k\sum_{l=1}^K\hat\E_{p_l^{\te}})[\ell_1(r_k(\xbf))]\Big)+\sum_{l=1}^K\hat\E_{p_l^{\te}}[\ell_2(r_k(\xbf))].
\end{split}
\end{align}
Along the lines of the proof of~\cref{lm:gen_maxdev}, we can show that the maximal deviation term using $\tilde C_k$ is upper bounded with probability at least $1-\delta$: 
\begin{align}\label{gen_maxdev_adderr}
\begin{split}
\sup_{r_k\in\Hc_r} | \hat\BD^+_f(r_k)-\E[\hat\BD^+_f(r_k)]|\leq&~
4L_1 R_{n^{\trn}_k}^{p^{\trn}_k}(\Hc_r)+4(\tilde C_k L_1+L_2)\sum_{l=1}^K R_{n^{\te}}^{p^{\te}_l}(\Hc_r)\\&+\Delta_\ell\sqrt{2\Big(\frac{1}{n^{\trn}_k}+\frac{K(1+\tilde C_k)^2}{n^{\te}}\Big)\log{\frac{1}{\delta}}}. 
\end{split}
\end{align}

Under perfect estimate of $\ol r_k=\sup_{\xbf\in\Xc^{\trn}}r_k^*(\xbf)$ with $C_k=\frac{1}{\ol r_k}$, the nnBD DRM problem for client $k$ is $\min_{r_k\in \Hc_r}  \hat\BD^{++}_f(r_k)$ where 
\begin{align}\label{app:++_adderr}
\begin{split}
\hat\BD^{++}_f(r_k)= \RU\Big((\hat\E_{p_k^{\trn}}- C_k\sum_{l=1}^K\hat\E_{p_l^{\te}})[\ell_1(r_k(\xbf))]\Big)+\sum_{l=1}^K\hat\E_{p_l^{\te}}[\ell_2(r_k(\xbf))].
\end{split}
\end{align}

Applying triangle inequality, we first decompose the bias term 

\begin{align}\label{gen_bias_adderr}
\begin{split}
\sup_{r_k\in\Hc_r} |\E[\hat\BD^+_f(r_k)]- \BD_f(r_k)|&\leq 
\sup_{r_k\in\Hc_r} |\E[\hat\BD^+_f(r_k)-\hat\BD^{++}_f(r_k)]|\\&\quad+\sup_{r_k\in\Hc_r} |\E[\hat\BD^{++}_f(r_k)]- \BD_f(r_k)|.
\end{split}
\end{align}

An upper bound on $\sup_{r_k\in\Hc_r} |\E[\hat\BD^{++}_f(r_k)]- \BD_f(r_k)|$ is established similar to the proof of~\cref{lm:gen_bias}:
\begin{align}\nn%
\sup_{r_k\in\Hc_r} |\E[\hat\BD^{++}_f(r_k)]- \BD_f(r_k)|\leq 
(1+KC_k)\Delta_\ell\exp\Big(\frac{-2\eta_k^2}{\Delta_\ell^2/n^{\trn}_k+KC_k^2\Delta_\ell^2/n^{\te}}\Big).
\end{align}

Substituting~\cref{app:nBD_adderr} and~\cref{app:++_adderr} into $|\E[\hat\BD^+_f(r_k)-\hat\BD^{++}_f(r_k)]|$, we have
\begin{align}\nn
\begin{split}
    &|\E[\hat\BD^+_f(r_k)-\hat\BD^{++}_f(r_k)]|\\&=
        |\E[\RU((\hat\E_{p_k^{\trn}}-\tilde C_k\sum_{l=1}^K\hat\E_{p_l^{\te}})[\ell_1(r_k(\xbf))])-\RU((\hat\E_{p_k^{\trn}}- C_k\sum_{l=1}^K\hat\E_{p_l^{\te}})[\ell_1(r_k(\xbf))])]|\,,    
\end{split}
\end{align}
which together with $\RU(a) - \RU(b) \leq |a-b|$ is used to establish the following upper bound:  
\begin{align}\small\label{bias1_adderr}
    \begin{split}
        \Big|\E\Big[\hat\BD^+_f(r_k)-\hat\BD^{++}_f(r_k)\Big]\Big|&\leq
        K\Delta_\ell|\tilde C_k-C_k|. 
    \end{split}
\end{align}

Let $m^*=\arg\max_{m\in[M]}\tilde r_{k,m}$. We note that  by the construction of HDRM, there is a constant lower bound on the numerator of $\tilde r_k$, i.e., $\frac{1}{n^{\te}}\sum_{j=1}^{n^{\te}}\sum_{l=1}^K\ind(\xbf_{l,j}^{\te}\in \Bc_{m^*})\geq \ol c$,  that is achieved when $\{\xbf_{l,j}^{\te}\}_{j=1}^{n^{\te}}$ are distributed uniformly across $M$ bins. Let $\xbf\in\Xc$ and let $\hat p_k^{\trn}(\xbf;M)$ denote a histogram-based density estimate of $p_k^{\trn}(\xbf)$ with $M$ bins. The maximum value of $\tilde C_k$ is attained when $\frac{1}{n^{\te}}\sum_{j=1}^{n^{\te}}\sum_{l=1}^K\ind(\xbf_{l,j}^{\te}\in \Bc_{m^*})$ meets its lower bound, which leads to the maximum deviation from  $C_k\leq \tilde C_k$. Assuming $p_k^{\trn}(\xbf)$ is $L_k$-Lipschitz with $\sup_{\xbf\in\Xc}p_k^{\trn}(\xbf)<\infty$, the mean squared error of a histogram-based density estimate with $M$ bins is upper bounded by~\citep[Section 6]{wasserman2006all}: $\E|\hat p_k^{\trn}(\xbf;M)-p_k^{\trn}(\xbf)|^2= \mathcal{O}(L_k^2/M^2+M/n^{\trn}_k)$. Putting together with a constant lower bound on the numerator of $\tilde r_k$ and applying Jensen's inequality, we have: 
\begin{align}\nn
    \E[|\tilde C_k-C_k|]\lesssim \frac{1}{M}+\sqrt{\frac{M}{n^{\trn}_k}}\,. 
\end{align}

\section{Proof of~\cref{lemma:newbv}}\label{app:lemma:newbv}
We first note that 
	\begin{equation*}
		\mathbb{E}[L(\hat{\thetabf})]-L(\thetabf_*) = \mathbb{E}  \| \hat\thetabf-\thetabf_*\|^2_{\Sigmabf^{\te}} = \text{Bias} + \text{Variance}\,.
	\end{equation*}
We first find the expression for $\hat\thetabf$ considering the ridge regression problem assuming $\frac{p^{\te}(\xbf)}{p^{\trn}(\xbf)}$ is given. \IW problem with Tikhonov regularization is given by 

\begin{align}\nn
\hat\thetabf= \arg\min_{\thetabf} \sum_{i=1}^n w_i (\thetabf^\top\xbf_i-y_i)^2+\lambda\|\thetabf\|_2^2   
\end{align} where  $w_i = \frac{p^{\te}(\xbf_i)}{p^{\trn}(\xbf_i)}$ and $\lambda$ is the regularization parameter. This is a reweighted least squares problem.

The objective function above is strongly convex and differentiable. Applying the fist-order condition, the unique minimum is as follows:
\begin{align}\label{eq:RidgeSol}
\hat\thetabf = \l(\Xbf^\top\Wbf\Xbf+\lambda\Ibf_d\r)^{-1}\Xbf^\top\Wbf\ybf    
\end{align}
where $\Wbf=\diag(w_1,\ldots,w_n)$. 

Substituting $\ybf=\Xbf\thetabf_*+\epsilonbf$ into \eqref{eq:RidgeSol}, we note that
\begin{align}\nn
\hat\thetabf &= \l(\Xbf^\top\Wbf\Xbf+\lambda\Ibf_d\r)^{-1}\Xbf^\top\Wbf\Xbf\thetabf_* +\l(\Xbf^\top\Wbf\Xbf+\lambda\Ibf_d\r)^{-1}\Xbf^\top\Wbf\epsilonbf\nn
\end{align}
and 
\begin{align}\nn
\E_{\Xbf,\epsilonbf}[\hat\thetabf] = \E_{\Xbf}\l[\l(\Xbf^\top\Wbf\Xbf+\lambda\Ibf_d\r)^{-1}\Xbf^\top\Wbf\Xbf\thetabf_*\r]. 
\end{align}

We now characterize the bias ${\tt B}(\hat\thetabf)$ and variance ${\tt V}(\hat\thetabf)$ terms when the model estimate is given by \eqref{eq:RidgeSol}.

Let $\|\xbf\|^2_{\Abf}:= \xbf^\top\Abf\xbf$. Substituting the expression for $\hat\thetabf$ into $R(\hat\thetabf)$, the excess risk can be decomposed into a bias and a variance term as follows:
\begin{align}
    R(\hat\thetabf)&= \E_{\Xbf,\epsilonbf,\xbf,\eps_{\te}}[(y-\hat\thetabf^\top\xbf)^2-(y-\thetabf_*^\top\xbf)^2]\nn\\
    & = 
    \E_{\Xbf,\epsilonbf,\xbf,\eps_{\te}}[\big(y-\thetabf_*^\top\xbf+(\thetabf_*-\hat\thetabf)^\top\xbf\big)^2-(y-\thetabf_*^\top\xbf)^2]\nn\\
    &= \E_{\Xbf,\epsilonbf,\xbf} \l[\l((\thetabf_*-\hat\thetabf)^\top\xbf\r)^2\r]\nn\\
    &=\E_{\Xbf,\epsilonbf}[\|\thetabf_*-\hat\thetabf\|^2_{\Sigmabf^{\te}}]\nn\\
    &=
    {\tt B}+{\tt V}
    \nn
\end{align} 
where the bias is given by 
\begin{align}\nn
    {\tt B} &= \E_{\Xbf,\epsilonbf}\l[\Big\|\l(\Xbf^\top\Wbf\Xbf+\lambda\Ibf_d\r)^{-1}\Xbf^\top\Wbf\Xbf\thetabf_*-\thetabf_*\Big\|^2_{\Sigmabf^{\te}}\r]\\
    &= \E_{\Xbf}\l[\Big\|\l(\Xbf^\top\Wbf\Xbf+\lambda\Ibf_d\r)^{-1}\lambda\thetabf_*\Big\|^2_{\Sigmabf^{\te}}\r].\nn\\
    &= \lambda^2\thetabf_*^\top\E_{\Xbf}[\Deltabf_{\Wbf,\lambda}\Sigmabf^{\te}\Deltabf_{\Wbf,\lambda}]\thetabf_*\nn
\end{align}
with 
\begin{align}\nn
\Deltabf_{\Wbf,\lambda} = \l[\l(\Xbf^\top\Wbf\Xbf+\lambda\Ibf_d\r)^{-1}\r],
\end{align}
and the variance is given by %
\begin{align}\nn
    {\tt V} &= \E_{\Xbf,\epsilonbf}\l[\Big\|\l(\Xbf^\top\Wbf\Xbf+\lambda\Ibf_d\r)^{-1}\Xbf^\top\Wbf\epsilonbf\Big\|^2_{\Sigmabf^{\te}}\r]\\
    &= {\sigma_\eps^2}\E_{\Xbf}\l[\tr\l(\Phi_V\r)\r].\nn
\end{align} where {\footnotesize$\Phi_V=\l(\Xbf^\top\Wbf\Xbf+\lambda\Ibf_d\r)^{-1}\Xbf^\top\Wbf^2\Xbf\l(\Xbf^\top\Wbf\Xbf+\lambda\Ibf_d\r)^{-1}\Sigmabf^{\te}$}. 

\section{Proof of~\cref{thm:excessp}}\label{app:thm:excessp}
In the one-hot case, it is clear that $\Xbf^{\!\top} \Xbf = \sum_{i=1}^n \xbf_i \xbf_i^{\!\top} $ and  $\Xbf^{\!\top} {\Wbf} \Xbf = \sum_{i=1}^n w_i \xbf_i \xbf_i^{\!\top} $ are diagonal matrices.

	For bias in the one-hot setting, we have
	{\small\begin{equation*}
		\begin{split}
			{\tt B}(\hat{\thetabf}) & = \lambda^{2} \left[\bm {\theta}_*^{\top}\left(\mathbf{X}^{\top} {\Wbf} \mathbf{X}+\lambda \mathbf{I}\right)^{-1} \Sigmabf^{\te} \left(\mathbf{X}^{\top} {\Wbf} \mathbf{X}+\lambda \mathbf{I}\right)^{-1} \bm {\theta}_* \right] \\
			& = \lambda^{2} \sum_{i=1}^d \frac{ [(\bm {\theta}_*)_{i}]^2 \lambda_i'}{\left(  \lambda_i(\mathbf{X}^{\top} {\Wbf} \mathbf{X}) + \lambda \right)^2 } \\
			& = \lambda^2 \sum_{i=1}^d \frac{ [(\bm {\theta}_*)_{i}]^2 \lambda_i' }{\left[  \mu_i w_i + \lambda \right]^2 }
		\end{split}
	\end{equation*}}
where the equation holds by the fact that, all matrices are diagonal including $\Xbf^{\!\top} \Xbf$, $\mathbf{X}^{\top} {\Wbf} \mathbf{X}$, and $\Sigmabf^{\te}$. Accordingly, we have $\lambda_i(\mathbf{X}^{\top} {\Wbf} \mathbf{X}) = \lambda_i(\mathbf{X}^{\top} \mathbf{X}) \lambda_i({\Wbf})$ with $i \in [d]$.
For the classical ERM, the bias is
\begin{equation*}
	{\tt B}(\thetabf^{\mathrm{v}}) = \lambda^2 \sum_{i=1}^d \frac{ [(\bm {\theta}_*)_{i}]^2 \lambda_i}{\left[  \mu_i + \lambda \right]^2 } \,
\end{equation*}
where $\lambda_i$ is the eigenvalue of $\Sigmabf^{\tr}$.
To achieve ${\tt B}(\hat{\thetabf}) \leqslant {\tt B}(\thetabf^{\mathrm{v}})$, we have to make some assumptions on the relationship between $\lambda_i$, $\lambda_i'$ and $w_i$.
Our analysis of error bound requires 
\begin{equation}\label{eqgencondibias}
	\frac{\lambda_i' }{\left[  \mu_i w_i + \lambda \right]^2 } \leqslant \frac{\lambda_i }{\left[  \mu_i  + \lambda \right]^2 } \Leftrightarrow \frac{\mu_i + \lambda}{\mu_i w_i + \lambda} \leqslant \sqrt{\frac{\lambda_i}{\lambda_i'}}\,,
\end{equation}
which implies
\begin{equation}\label{eqbiasbound}
	w_i \geqslant \sqrt{\frac{\lambda'_i}{\lambda_i}} -1 \,,
\end{equation}
such that~\cref{eqgencondibias} holds where we use the inequality $\frac{a+c}{b+c} \leqslant \frac{a}{b} + 1$ for any $a,b,c>0$.

For the vanilla ERM, the variance is
\begin{equation*}
	{\tt V}(\thetabf^{\mathrm{v}}) = \sigma_{\epsilon}^2 \sum_{i=1}^d \frac{\lambda_i \mu_i }{\left[  \mu_i + \lambda \right]^2 } \,.
\end{equation*}
For \IW, the variance is
\begin{equation*}
	\begin{split}
		{\tt V}(\hat{\thetabf}) & = \sigma^2_{\epsilon}  \left[ \left(\mathbf{X}^{\top} {\Wbf} \mathbf{X}+\lambda \mathbf{I}\right)^{-1} \Xbf^{\!\top} {\Wbf}^2 \Xbf \left(\mathbf{X}^{\top} {\Wbf} \mathbf{X}+\lambda \mathbf{I}\right)^{-1} \Sigmabf^{\te} \right] \\
		& = \sigma^2_{\epsilon} \sum_{i=1}^d \frac{\lambda_i' \lambda_i(\mathbf{X}^{\top} {\Wbf}^2 \mathbf{X})  }{\left(  \lambda_i(\mathbf{X}^{\top} {\Wbf} \mathbf{X}) + \lambda \right)^2 } \\
		& = \sigma^2_{\epsilon} \sum_{i=1}^d  \frac{\lambda_i' \mu_i w_i^2 }{\left[  \mu_i w_i + \lambda \right]^2 } \,.
	\end{split}
\end{equation*}
We note that $	{\tt V}(\hat{\thetabf}) \leq {\tt V}(\thetabf^{\mathrm{v}}) $ can be achieved by
\begin{equation*}
   \frac{\lambda_i \mu_i }{\left[  \mu_i + \lambda \right]^2 } \geq \frac{\lambda_i' \mu_i w_i^2 }{\left[  \mu_i w_i + \lambda \right]^2 } \,.
\end{equation*}
This can be obtained by 
\begin{equation}\label{eqvarbound}
\frac{\mu_i + \frac{\lambda}{w_i} }{\mu_i + \lambda} \geq \frac{ \frac{\lambda}{w_i} }{\mu_i + \lambda} \geq \sqrt{\frac{\lambda'_i}{\lambda_i}}\,,
\end{equation}
which implies $w_i \leqslant \xi_i \sqrt{\frac{\lambda_i}{\lambda_i'}}$.
Combining Eqs.~\eqref{eqbiasbound} and~\eqref{eqvarbound}, the proof is complete.

\section{When \IW cannot outperform ERM}\label{app:example}
In this section, we provide a counterexample to show that, under which certain case, \IW cannot provably outperform ERM.
\begin{proposition}\label{prop:example}
Under the same setting of~\cref{thm:excessp}, i.e., the fixed-design setting and label noise assumption, under the following condition
\begin{equation*}
    \sqrt{\frac{\lambda_i'}{\lambda_i}} \geqslant \max\{ \xi, 1 - \xi \}\,.
\end{equation*}
If the ratio satisfies
{\small\begin{equation}\label{eq:cond:example}
    w_i \leqslant \min \Big\{ \frac{1}{ \frac{\sqrt{{\lambda_i'}/{\lambda_i}}-1}{\xi} +1 }, \sqrt{\frac{\lambda_i'}{\lambda_i}} + \frac{\lambda}{\mu_i} \sqrt{\frac{\lambda_i'}{\lambda_i}} -\frac{\lambda}{\mu_i}  \Big\} \,,
\end{equation}}
then we have 
\begin{equation*}
    R(\thetabf^{\mathrm{v}}) \leqslant R(\hat{\thetabf}) \,.
\end{equation*}
\end{proposition}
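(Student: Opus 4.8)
The plan is to mirror the proof of~\cref{thm:excessp} in the reverse direction. I would reuse the closed-form bias--variance decomposition from~\cref{lemma:newbv}, specialized to the fixed-design one-hot setting exactly as in the proof of~\cref{thm:excessp}, where all relevant matrices are diagonal and the excess risk splits coordinatewise as
\begin{equation*}
{\tt B}(\hat{\thetabf}) = \lambda^2 \sum_{i=1}^d \frac{[(\thetabf_*)_i]^2 \lambda_i'}{[\mu_i w_i + \lambda]^2}\,, \quad {\tt V}(\hat{\thetabf}) = \sigma_\epsilon^2 \sum_{i=1}^d \frac{\lambda_i' \mu_i w_i^2}{[\mu_i w_i + \lambda]^2}\,,
\end{equation*}
with ${\tt B}(\thetabf^{\mathrm{v}})$ and ${\tt V}(\thetabf^{\mathrm{v}})$ recovered by setting $w_i = 1$ and replacing $\lambda_i'$ by $\lambda_i$. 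Since $R = {\tt B} + {\tt V}$, it suffices to establish the two coordinatewise comparisons ${\tt B}(\thetabf^{\mathrm{v}}) \le {\tt B}(\hat{\thetabf})$ and ${\tt V}(\thetabf^{\mathrm{v}}) \le {\tt V}(\hat{\thetabf})$ under the stated hypotheses, and then sum over $i \in [d]$ (recalling from the remarks after~\cref{thm:excessp} that only the top-$d$ ratios enter).

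For the bias I would impose the per-coordinate inequality $\frac{\lambda_i}{[\mu_i + \lambda]^2} \le \frac{\lambda_i'}{[\mu_i w_i + \lambda]^2}$, which is precisely the reverse of~\eqref{eqgencondibias}. Taking square roots gives the equivalent form $\frac{\mu_i w_i + \lambda}{\mu_i + \lambda} \le \sqrt{\lambda_i'/\lambda_i}$, and solving for $w_i$ yields
\begin{equation*}
w_i \le \sqrt{\frac{\lambda_i'}{\lambda_i}} + \frac{\lambda}{\mu_i}\sqrt{\frac{\lambda_i'}{\lambda_i}} - \frac{\lambda}{\mu_i}\,,
\end{equation*}
which is the second argument of the minimum in~\eqref{eq:cond:example}. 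This step is a mechanical mirror of the bias analysis in~\cref{thm:excessp}.

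For the variance I would start from $\frac{\lambda_i \mu_i}{[\mu_i + \lambda]^2} \le \frac{\lambda_i' \mu_i w_i^2}{[\mu_i w_i + \lambda]^2}$, cancel $\mu_i$, take square roots, and rewrite the result as
\begin{equation*}
(1 - \xi_i) + \frac{\xi_i}{w_i} \le \sqrt{\frac{\lambda_i'}{\lambda_i}}\,,
\end{equation*}
using $\tfrac{\mu_i}{\mu_i+\lambda} = 1 - \xi_i$ and $\tfrac{\lambda}{\mu_i+\lambda} = \xi_i$. Rearranging for $w_i$ produces the threshold $\frac{1}{(\sqrt{\lambda_i'/\lambda_i}-1)/\xi_i + 1} = \frac{\xi_i}{\sqrt{\lambda_i'/\lambda_i} - 1 + \xi_i}$, the first argument of the minimum in~\eqref{eq:cond:example}. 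The hypothesis $\sqrt{\lambda_i'/\lambda_i} \ge \max\{\xi_i, 1-\xi_i\}$ is exactly what forces $\sqrt{\lambda_i'/\lambda_i} - 1 + \xi_i > 0$, so this threshold is well defined and positive; combining it with the bias bound and summing over coordinates then delivers $R(\thetabf^{\mathrm{v}}) \le R(\hat{\thetabf})$.

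The main obstacle is the variance comparison. Unlike the bias, the coordinate map $w \mapsto \mu_i w^2/(\mu_i w + \lambda)^2$ is monotonically increasing, so the reversed inequality cannot be turned into a clean one-sided bound without carefully tracking the sign of $\sqrt{\lambda_i'/\lambda_i} - 1 + \xi_i$; controlling that sign is the entire role of the assumption $\sqrt{\lambda_i'/\lambda_i} \ge \max\{\xi_i, 1-\xi_i\}$. I would therefore concentrate the care on showing that this hypothesis makes the two thresholds in~\eqref{eq:cond:example} mutually compatible and the admissible window for $w_i$ non-empty, since the algebra of each individual inequality is otherwise a direct transcription of the manipulations in~\cref{thm:excessp}.
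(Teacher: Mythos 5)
Your proposal mirrors the paper's proof of \cref{prop:example} essentially step for step: the same coordinatewise bias--variance split inherited from the one-hot fixed-design analysis of \cref{thm:excessp}, the reversal of \cref{eqgencondibias} for the bias (which correctly yields the second argument of the minimum in \cref{eq:cond:example}), and the reversal of \cref{eqvarbound} for the variance. The bias half is fine.

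The gap is in the variance half, and you half-diagnose it yourself without resolving it. From $(1-\xi_i) + \xi_i/w_i \le \sqrt{\lambda_i'/\lambda_i}$, isolating $w_i$ gives $\xi_i/w_i \le \sqrt{\lambda_i'/\lambda_i} - 1 + \xi_i$; since the left side is positive, the right side must be positive (this is where $\sqrt{\lambda_i'/\lambda_i}\ge 1-\xi_i$ enters), and then $w_i \ge \xi_i/\bigl(\sqrt{\lambda_i'/\lambda_i}-1+\xi_i\bigr)$ --- a \emph{lower} bound on $w_i$, not an upper bound. This is exactly what your own monotonicity remark predicts: since $w \mapsto \mu_i w^2/(\mu_i w+\lambda)^2$ is increasing, forcing ${\tt V}(\hat{\thetabf}) \ge {\tt V}(\thetabf^{\mathrm{v}})$ requires $w_i$ to be large enough. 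Having correctly identified the obstacle, you nevertheless declare the resulting threshold to be ``the first argument of the minimum'' and fold it into the one-sided condition $w_i \le \min\{\cdot,\cdot\}$, which contradicts the inequality you just derived; a sanity check at $\lambda_i'=\lambda_i$ (no shift) gives the variance condition $w_i \ge 1$, not $w_i \le 1$. To be fair, the paper's own proof asserts the same ``equivalent to $w_i\le\cdots$'' at \cref{eq:countvariance}, so you have faithfully reproduced its argument --- but the step does not follow as written in either place, and carrying the algebra through correctly would instead yield a two-sided admissible window $\xi_i/\bigl(\sqrt{\lambda_i'/\lambda_i}-1+\xi_i\bigr) \le w_i \le \sqrt{\lambda_i'/\lambda_i}\,(1+\lambda/\mu_i) - \lambda/\mu_i$ rather than the stated upper-bound-only condition.
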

\bpr
According to~\cref{eqgencondibias}, ${\tt B}(\thetabf^{\mathrm{v}}) \leqslant {\tt B}(\hat{\thetabf})$ holds by
\begin{equation*} \frac{\mu_i + \lambda}{\mu_i w_i + \lambda} \geqslant \sqrt{\frac{\lambda_i}{\lambda_i'}}\,,
\end{equation*}
which is equivalent to 
\begin{equation}\label{eq:countbias}
     w_i \leqslant  \sqrt{\frac{\lambda_i'}{\lambda_i}} + \frac{\lambda}{\mu_i} \sqrt{\frac{\lambda_i'}{\lambda_i}} -\frac{\lambda}{\mu_i}  \,.
\end{equation}

According to~\cref{eqvarbound}, ${\tt V}(\thetabf^{\mathrm{v}}) \leqslant {\tt V}(\hat{\thetabf})$ holds by
\begin{equation*}
\frac{\mu_i + \frac{\lambda}{w_i} }{\mu_i + \lambda}  \leq \sqrt{\frac{\lambda'_i}{\lambda_i}}\,,
\end{equation*}
which is equivalent to

\begin{equation}\label{eq:countvariance}
    w_i \leqslant  \frac{1}{ \frac{\sqrt{\frac{\lambda_i'}{\lambda_i}}-1}{\xi} +1 }   \,.
\end{equation}
Combining Eqs.~\eqref{eq:countbias} and~\eqref{eq:countvariance}, the proof is complete.
To validate the condition in~\cref{eq:cond:example}, we require each term in the RHS to be nonnegative. That implies
\begin{equation*}
    \sqrt{\frac{\lambda_i'}{\lambda_i}} \geqslant \max\{ \xi, 1 - \xi \}\,,
\end{equation*}
which is our condition in~\cref{prop:example}.

By checking Eqs.~\eqref{eq:countbias} and~\eqref{eq:countvariance}, in both cases $ \sqrt{\frac{\lambda_i'}{\lambda_i}} \geq 1$ and $ \sqrt{\frac{\lambda_i'}{\lambda_i}} \leq 1$, we  have 
\begin{equation*}
    w_i \leqslant 1\,.
\end{equation*}
\epr

\section{Experimental details and additional experiments}
\label{app:exp}

\textbf{Datasets}: 
We make use of three datasets in the experiments: \emph{MNIST}~\citep{lecun1998gradient}, \emph{Fashion MNIST}\footnote{\emph{Fashion MNIST} is provided under the MIT license.}~\citep{xiao2017fashion}, and \emph{CIFAR10}~\citep{CIFAR10}.
\emph{MNIST} consists of images depicting handwritten digits from $0$ to $9$. The resolution of each image is $28\times 28$. The dataset includes $60,000$ images for training. Similarly \emph{Fashion MNIST} includes grayscale images of clothing of resolution $28\times 28$. The training set consists of $60,000$ examples, and the test set of $10,000$ examples. 
\emph{CIFAR10} consists of colored images with a resolution of $32 \times 32$.
The training set contains $50,000$ examples while the test set contains $10,000$ examples.

\textbf{Experimental setup}: 
For all experiments we use the cross entropy loss.
The stochastic gradient for each of the clients are computed with a batch size of $64$ and aggregated on the server, which uses the Adam optimizer.
Experiments on \emph{MNIST} and \emph{Fashion MNIST} uses a LeNet \citep{lecun1998gradient}, a learning rate of $0.001$, no weight decay, and runs for $5,000$ iterations.
For \emph{CIFAR10} experiments we use the larger ResNet-18 \citep{he2016deep}. 
Batch normalization in ResNet-18 is treated by averaging the statistics on the server and subsequently broadcasting to the workers.
A learning rate of $0.0001$ and weight decay of $0.0001$ are used. 
We report the best iterate in terms of average test accuracy after $20,000$ iterations in \Cref{app:fig:target-shift:cifar10}. FedBN \cite{li2021fedbn} is given a 10 times larger horizon due to slower convergence. The partial client participation experiment in \Cref{app:fig:target-shift:cifar10:client100:results} uses $200,000$ iterations.

All reported mean and standard deviations are computed over 5 independent runs except for CIFAR10 which uses 3 independent runs.
For target shift the randomisation is also over the realization of the class distributions to ensure that the conclusions are not due to the particularities of the sub-sampled images.
All experiments are carried out on an internal cluster using one GPU.

\subsection{Target shift}\label{app:target-shift}

For the target shift experiments on \emph{Fashion MNIST} in \Cref{fig:target-shift}, we summarize the different number of data points for each dataset split in \Cref{app:fig:target-shift:fmnist:dist}.
A similar distribution across clients is used for the additional experiments for \IW and FedAvg on \emph{CIFAR10} (\Cref{app:fig:target-shift:cifar10:dist}).
\emph{CIFAR10} differs from \emph{Fashion MNIST} in the number of examples due to the training set being smaller.
The results for \emph{CIFAR10} in \Cref{app:fig:target-shift:cifar10} shows that \IW uniformly improves the accuracy over FedAvg on this difficult target shift instance.
We additionally include a two-client setting in \Cref{app:fig:target-shift} with the associated distribution described in \Cref{app:fig:target-shift:dist}.

\begin{table*}[!tb]
\centering
\caption{CIFAR10 target shift distribution across 100 clients where groups of 10 clients shares the same distribution.}
\label{app:fig:target-shift:cifar10:client100:dist}
\begin{adjustbox}{center}
\begin{tabular}{llllllllllll}
\toprule
              &      & \multicolumn{10}{c}{Class} \\
              &      &                         0 &                         1 &                         2 &                         3 &                         4 &                         5 &                         6 &                         7 &                         8 &                         9 \\
\midrule
\multirow{2}{*}{Client 1-10} & Train &  $\nicefrac{{95}}{{100}}$ &     $\nicefrac{{5}}{{9}}$ &     $\nicefrac{{5}}{{9}}$ &     $\nicefrac{{5}}{{9}}$ &     $\nicefrac{{5}}{{9}}$ &     $\nicefrac{{5}}{{9}}$ &     $\nicefrac{{5}}{{9}}$ &     $\nicefrac{{5}}{{9}}$ &     $\nicefrac{{5}}{{9}}$ &     $\nicefrac{{5}}{{9}}$ \\
              & Test &     $\nicefrac{{5}}{{9}}$ &     $\nicefrac{{5}}{{9}}$ &     $\nicefrac{{5}}{{9}}$ &     $\nicefrac{{5}}{{9}}$ &     $\nicefrac{{5}}{{9}}$ &     $\nicefrac{{5}}{{9}}$ &     $\nicefrac{{5}}{{9}}$ &     $\nicefrac{{5}}{{9}}$ &     $\nicefrac{{5}}{{9}}$ &  $\nicefrac{{95}}{{100}}$ \\
\cline{1-12}
\multirow{2}{*}{Client 11-20} & Train &     $\nicefrac{{5}}{{9}}$ &  $\nicefrac{{95}}{{100}}$ &     $\nicefrac{{5}}{{9}}$ &     $\nicefrac{{5}}{{9}}$ &     $\nicefrac{{5}}{{9}}$ &     $\nicefrac{{5}}{{9}}$ &     $\nicefrac{{5}}{{9}}$ &     $\nicefrac{{5}}{{9}}$ &     $\nicefrac{{5}}{{9}}$ &     $\nicefrac{{5}}{{9}}$ \\
              & Test &     $\nicefrac{{5}}{{9}}$ &     $\nicefrac{{5}}{{9}}$ &     $\nicefrac{{5}}{{9}}$ &     $\nicefrac{{5}}{{9}}$ &     $\nicefrac{{5}}{{9}}$ &     $\nicefrac{{5}}{{9}}$ &     $\nicefrac{{5}}{{9}}$ &     $\nicefrac{{5}}{{9}}$ &  $\nicefrac{{95}}{{100}}$ &     $\nicefrac{{5}}{{9}}$ \\
\cline{1-12}
\multirow{2}{*}{Client 21-30} & Train &     $\nicefrac{{5}}{{9}}$ &     $\nicefrac{{5}}{{9}}$ &  $\nicefrac{{95}}{{100}}$ &     $\nicefrac{{5}}{{9}}$ &     $\nicefrac{{5}}{{9}}$ &     $\nicefrac{{5}}{{9}}$ &     $\nicefrac{{5}}{{9}}$ &     $\nicefrac{{5}}{{9}}$ &     $\nicefrac{{5}}{{9}}$ &     $\nicefrac{{5}}{{9}}$ \\
              & Test &     $\nicefrac{{5}}{{9}}$ &     $\nicefrac{{5}}{{9}}$ &     $\nicefrac{{5}}{{9}}$ &     $\nicefrac{{5}}{{9}}$ &     $\nicefrac{{5}}{{9}}$ &     $\nicefrac{{5}}{{9}}$ &     $\nicefrac{{5}}{{9}}$ &  $\nicefrac{{95}}{{100}}$ &     $\nicefrac{{5}}{{9}}$ &     $\nicefrac{{5}}{{9}}$ \\
\cline{1-12}
\multirow{2}{*}{Client 31-40} & Train &     $\nicefrac{{5}}{{9}}$ &     $\nicefrac{{5}}{{9}}$ &     $\nicefrac{{5}}{{9}}$ &  $\nicefrac{{95}}{{100}}$ &     $\nicefrac{{5}}{{9}}$ &     $\nicefrac{{5}}{{9}}$ &     $\nicefrac{{5}}{{9}}$ &     $\nicefrac{{5}}{{9}}$ &     $\nicefrac{{5}}{{9}}$ &     $\nicefrac{{5}}{{9}}$ \\
              & Test &     $\nicefrac{{5}}{{9}}$ &     $\nicefrac{{5}}{{9}}$ &     $\nicefrac{{5}}{{9}}$ &     $\nicefrac{{5}}{{9}}$ &     $\nicefrac{{5}}{{9}}$ &     $\nicefrac{{5}}{{9}}$ &  $\nicefrac{{95}}{{100}}$ &     $\nicefrac{{5}}{{9}}$ &     $\nicefrac{{5}}{{9}}$ &     $\nicefrac{{5}}{{9}}$ \\
\cline{1-12}
\multirow{2}{*}{Client 41-50} & Train &     $\nicefrac{{5}}{{9}}$ &     $\nicefrac{{5}}{{9}}$ &     $\nicefrac{{5}}{{9}}$ &     $\nicefrac{{5}}{{9}}$ &  $\nicefrac{{95}}{{100}}$ &     $\nicefrac{{5}}{{9}}$ &     $\nicefrac{{5}}{{9}}$ &     $\nicefrac{{5}}{{9}}$ &     $\nicefrac{{5}}{{9}}$ &     $\nicefrac{{5}}{{9}}$ \\
              & Test &     $\nicefrac{{5}}{{9}}$ &     $\nicefrac{{5}}{{9}}$ &     $\nicefrac{{5}}{{9}}$ &     $\nicefrac{{5}}{{9}}$ &     $\nicefrac{{5}}{{9}}$ &  $\nicefrac{{95}}{{100}}$ &     $\nicefrac{{5}}{{9}}$ &     $\nicefrac{{5}}{{9}}$ &     $\nicefrac{{5}}{{9}}$ &     $\nicefrac{{5}}{{9}}$ \\
\cline{1-12}
\multirow{2}{*}{Client 51-60} & Train &     $\nicefrac{{5}}{{9}}$ &     $\nicefrac{{5}}{{9}}$ &     $\nicefrac{{5}}{{9}}$ &     $\nicefrac{{5}}{{9}}$ &     $\nicefrac{{5}}{{9}}$ &  $\nicefrac{{95}}{{100}}$ &     $\nicefrac{{5}}{{9}}$ &     $\nicefrac{{5}}{{9}}$ &     $\nicefrac{{5}}{{9}}$ &     $\nicefrac{{5}}{{9}}$ \\
              & Test &     $\nicefrac{{5}}{{9}}$ &     $\nicefrac{{5}}{{9}}$ &     $\nicefrac{{5}}{{9}}$ &     $\nicefrac{{5}}{{9}}$ &  $\nicefrac{{95}}{{100}}$ &     $\nicefrac{{5}}{{9}}$ &     $\nicefrac{{5}}{{9}}$ &     $\nicefrac{{5}}{{9}}$ &     $\nicefrac{{5}}{{9}}$ &     $\nicefrac{{5}}{{9}}$ \\
\cline{1-12}
\multirow{2}{*}{Client 61-70} & Train &     $\nicefrac{{5}}{{9}}$ &     $\nicefrac{{5}}{{9}}$ &     $\nicefrac{{5}}{{9}}$ &     $\nicefrac{{5}}{{9}}$ &     $\nicefrac{{5}}{{9}}$ &     $\nicefrac{{5}}{{9}}$ &  $\nicefrac{{95}}{{100}}$ &     $\nicefrac{{5}}{{9}}$ &     $\nicefrac{{5}}{{9}}$ &     $\nicefrac{{5}}{{9}}$ \\
              & Test &     $\nicefrac{{5}}{{9}}$ &     $\nicefrac{{5}}{{9}}$ &     $\nicefrac{{5}}{{9}}$ &  $\nicefrac{{95}}{{100}}$ &     $\nicefrac{{5}}{{9}}$ &     $\nicefrac{{5}}{{9}}$ &     $\nicefrac{{5}}{{9}}$ &     $\nicefrac{{5}}{{9}}$ &     $\nicefrac{{5}}{{9}}$ &     $\nicefrac{{5}}{{9}}$ \\
\cline{1-12}
\multirow{2}{*}{Client 71-80} & Train &     $\nicefrac{{5}}{{9}}$ &     $\nicefrac{{5}}{{9}}$ &     $\nicefrac{{5}}{{9}}$ &     $\nicefrac{{5}}{{9}}$ &     $\nicefrac{{5}}{{9}}$ &     $\nicefrac{{5}}{{9}}$ &     $\nicefrac{{5}}{{9}}$ &  $\nicefrac{{95}}{{100}}$ &     $\nicefrac{{5}}{{9}}$ &     $\nicefrac{{5}}{{9}}$ \\
              & Test &     $\nicefrac{{5}}{{9}}$ &     $\nicefrac{{5}}{{9}}$ &  $\nicefrac{{95}}{{100}}$ &     $\nicefrac{{5}}{{9}}$ &     $\nicefrac{{5}}{{9}}$ &     $\nicefrac{{5}}{{9}}$ &     $\nicefrac{{5}}{{9}}$ &     $\nicefrac{{5}}{{9}}$ &     $\nicefrac{{5}}{{9}}$ &     $\nicefrac{{5}}{{9}}$ \\
\cline{1-12}
\multirow{2}{*}{Client 81-90} & Train &     $\nicefrac{{5}}{{9}}$ &     $\nicefrac{{5}}{{9}}$ &     $\nicefrac{{5}}{{9}}$ &     $\nicefrac{{5}}{{9}}$ &     $\nicefrac{{5}}{{9}}$ &     $\nicefrac{{5}}{{9}}$ &     $\nicefrac{{5}}{{9}}$ &     $\nicefrac{{5}}{{9}}$ &  $\nicefrac{{95}}{{100}}$ &     $\nicefrac{{5}}{{9}}$ \\
              & Test &     $\nicefrac{{5}}{{9}}$ &  $\nicefrac{{95}}{{100}}$ &     $\nicefrac{{5}}{{9}}$ &     $\nicefrac{{5}}{{9}}$ &     $\nicefrac{{5}}{{9}}$ &     $\nicefrac{{5}}{{9}}$ &     $\nicefrac{{5}}{{9}}$ &     $\nicefrac{{5}}{{9}}$ &     $\nicefrac{{5}}{{9}}$ &     $\nicefrac{{5}}{{9}}$ \\
\cline{1-12}
\multirow{2}{*}{Client 91-100} & Train &     $\nicefrac{{5}}{{9}}$ &     $\nicefrac{{5}}{{9}}$ &     $\nicefrac{{5}}{{9}}$ &     $\nicefrac{{5}}{{9}}$ &     $\nicefrac{{5}}{{9}}$ &     $\nicefrac{{5}}{{9}}$ &     $\nicefrac{{5}}{{9}}$ &     $\nicefrac{{5}}{{9}}$ &     $\nicefrac{{5}}{{9}}$ &  $\nicefrac{{95}}{{100}}$ \\
              & Test &  $\nicefrac{{95}}{{100}}$ &     $\nicefrac{{5}}{{9}}$ &     $\nicefrac{{5}}{{9}}$ &     $\nicefrac{{5}}{{9}}$ &     $\nicefrac{{5}}{{9}}$ &     $\nicefrac{{5}}{{9}}$ &     $\nicefrac{{5}}{{9}}$ &     $\nicefrac{{5}}{{9}}$ &     $\nicefrac{{5}}{{9}}$ &     $\nicefrac{{5}}{{9}}$ \\
\bottomrule
\end{tabular}

\end{adjustbox}
\end{table*}

To compute the exact ratio $r(\xbf)$ we will assume that the distributions are separable.

\begin{definition}[Separability]
A distribution over $\mathcal X \times \mathcal Y$ is separable if there exists a partition $(\mathcal{X}_i)_{i=1}^m$ of $\mathcal X$ such that $p(y_i|\mathcal X_i)=1$ for some $y_i \in \mathcal Y$ and all $i\in [m]$.
We denote the associated deterministic label assignment as $g: \mathcal X \rightarrow \mathcal Y$.
\end{definition}

\begin{proposition}
\label{prop:targets-shift}
Assume that the distributions $p^{\te}(\xbf,y)$ and $p^{\trn}(\xbf,y)$ are both separable.
Then the ratio can be computed based on the associated label $y:=g(\xbf)$ as follows,
\begin{equation}
r(x) = \frac{p^{\te}(y)}{p^{\trn}(y)}.
\end{equation}
\end{proposition}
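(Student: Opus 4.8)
The plan is to compute the covariate density ratio $r(\xbf) = p^{\te}(\xbf)/p^{\trn}(\xbf)$ directly by marginalizing over the label and combining two facts: under target shift the class-conditional density $p(\xbf \mid y)$ is shared between the train and test distributions, and separability forces the marginalization sum to collapse to a single term. First I would write out the two marginals by the law of total probability, $p^{\te}(\xbf) = \sum_{y' \in \mathcal{Y}} p(\xbf \mid y')\, p^{\te}(y')$ and likewise for $p^{\trn}$, where I have already used the target-shift identity $p^{\trn}(\xbf\mid y') = p^{\te}(\xbf \mid y') =: p(\xbf \mid y')$ to give both expansions a common conditional factor.

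Next I would invoke separability. Since a separable distribution places all of its joint mass on the graph $\{(\xbf, g(\xbf)) : \xbf \in \mathcal{X}\}$ of the deterministic label map $g$, we have $p(\xbf, y') = 0$ whenever $y' \neq g(\xbf)$, and hence $p(\xbf \mid y') = 0$ for every $y' \neq g(\xbf)$. Consequently each marginalization sum retains only the single surviving term $y' = g(\xbf)$, yielding $p^{\te}(\xbf) = p(\xbf \mid g(\xbf))\, p^{\te}(g(\xbf))$ and $p^{\trn}(\xbf) = p(\xbf \mid g(\xbf))\, p^{\trn}(g(\xbf))$. Dividing these two identities and cancelling the common factor $p(\xbf \mid g(\xbf))$ gives $r(\xbf) = p^{\te}(g(\xbf))/p^{\trn}(g(\xbf)) = p^{\te}(y)/p^{\trn}(y)$ with $y := g(\xbf)$, which is exactly the claim.

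There is no deep difficulty here; the only points requiring care, and which I would address explicitly, are two. The cancellation of $p(\xbf \mid g(\xbf))$ is legitimate only where this factor is strictly positive, which follows from the standing assumption that the training density is strictly positive on its support together with $p^{\trn}(y), p^{\te}(y) > 0$ for the relevant $y$. Second, I must confirm that the train and test distributions induce the \emph{same} label map $g$ so that a single $g$ may be used throughout; this is immediate because target shift fixes a common conditional $p(\xbf \mid y)$, whence the supports $\{\xbf : p(\xbf \mid y) > 0\}$, and therefore the partition $(\mathcal{X}_i)_{i=1}^m$ that determines $g$, coincide for both distributions. Once these two observations are in place, the result reduces to the one-line marginalization computation above.
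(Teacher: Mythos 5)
Your proof is correct and is essentially the paper's argument: both rest on the same two facts, that separability makes the label deterministic given $\xbf$ (so only the single label $y=g(\xbf)$ contributes) and that target shift makes the class-conditional $p(\xbf\mid y)$ common to train and test so that it cancels in the ratio. The only cosmetic difference is the direction of the computation --- you expand the marginals $p^{\te}(\xbf)$ and $p^{\trn}(\xbf)$ by the law of total probability and collapse each sum to the term $y'=g(\xbf)$, whereas the paper multiplies $r(\xbf)$ by the ratio of posteriors $p^{\te}(y\mid\xbf)/p^{\trn}(y\mid\xbf)=1$ to pass to the joint densities and then factorizes them as $p(\xbf\mid y)p(y)$ --- and your explicit remarks on the positivity needed for the cancellation and on the two distributions inducing the same label map $g$ are points the paper leaves implicit.
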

\begin{proof}
Due to separability, $p^{\te}(y|x)=p^{\trn}(y|\xbf)$. 
So
\begin{equation}
r(\xbf) := \frac{p^{\te}(\xbf)}{p^{\trn}(\xbf)} 
     = \frac{p^{\te}(\xbf) p^{\te}(y|\xbf)}{p^{\trn}(\xbf) p^{\trn}(y|\xbf)}
     = \frac{p^{\te}(\xbf,y)}{p^{\trn}(\xbf,y)}.
\end{equation}
It follows that,
\begin{equation}
\frac{p^{\te}(\xbf,y)}{p^{\trn}(\xbf,y)}
     = \frac{p^{\te}(\xbf|y)p^{\te}(y)}{p^{\trn}(\xbf|y)p^{\trn}(y)}.
\end{equation}
Using the definition of the target shift assumption, $p^{\te}(\xbf|y)=p^{\tr}(\xbf|y)$, the conditional distributions cancel and we obtain the claim.
\end{proof}

\Cref{prop:targets-shift} provides a way to compute the ratio $r(\xbf)$ when the labels are available and the shift  is known.

\subsection{Covariate shift}\label{app:covariate-shift}

The color flipping probability used to generate each of the colored MNIST datasets for the covariate shift experiment can be found in \Cref{app:fig:covariate-shift}.
We consider an asymmetric client setup where client 1 in addition has 40 times less training examples than client 2.

\begin{table*}[!tb]
\centering
\caption{Target shift on CIFAR10 with ResNet-18. Note that FedBN is given a 10 times larger horizon due to slower convergence.}
\label{app:fig:target-shift:cifar10}
\begin{tabular}{llll}
  \toprule
  {} &                   \textbf{\IW} &      \textbf{FedAvg} &                   \textbf{FedBN} \\
  \midrule
  Average accuracy  &  {\bfseries 0.6004} $\pm$ 0.0076 &  0.4426 $\pm$ 0.0291 &              0.5081 $\pm$ 0.0520 \\
  Client 1 accuracy &  {\bfseries 0.6714} $\pm$ 0.0153 &  0.3984 $\pm$ 0.1497 &              0.6699 $\pm$ 0.1283 \\
  Client 2 accuracy &  {\bfseries 0.8196} $\pm$ 0.0962 &  0.7307 $\pm$ 0.1533 &              0.7213 $\pm$ 0.0810 \\
  Client 3 accuracy &  {\bfseries 0.5412} $\pm$ 0.0776 &  0.3333 $\pm$ 0.2251 &              0.1584 $\pm$ 0.1222 \\
  Client 4 accuracy &  {\bfseries 0.5087} $\pm$ 0.0827 &  0.3030 $\pm$ 0.1106 &              0.2907 $\pm$ 0.1859 \\
  Client 5 accuracy &              0.4610 $\pm$ 0.0508 &  0.4476 $\pm$ 0.3649 &  {\bfseries 0.7003} $\pm$ 0.0386 \\
  \bottomrule
  \end{tabular}
\end{table*}

\begin{table*}[!tb]
\centering
\caption{CIFAR10 target shift distribution.}
\label{app:fig:target-shift:cifar10:dist}
\begin{tabular}{llrrrrrrrrrr}
\toprule
         &      & \multicolumn{10}{c}{Class} \\
         &      &     0 &    1 &    2 &    3 &    4 &     5 &     6 &     7 &     8 &     9 \\
\midrule
\multirow{2}{*}{Client 1} & Train &    28 &   28 &   28 &   28 &   28 &  4885 &    28 &    28 &    28 &    28 \\
         & Test &   977 &    5 &    5 &    5 &    5 &     5 &     5 &     5 &     5 &     5 \\
\cline{1-12}
\multirow{2}{*}{Client 2} & Train &    28 &   28 &   28 &   28 &   28 &    28 &  4885 &    28 &    28 &    28 \\
         & Test &     5 &  977 &    5 &    5 &    5 &     5 &     5 &     5 &     5 &     5 \\
\cline{1-12}
\multirow{2}{*}{Client 3} & Train &    28 &   28 &   28 &   28 &   28 &    28 &    28 &  4885 &    28 &    28 \\
         & Test &     5 &    5 &  977 &    5 &    5 &     5 &     5 &     5 &     5 &     5 \\
\cline{1-12}
\multirow{2}{*}{Client 4} & Train &    28 &   28 &   28 &   28 &   28 &    28 &    28 &    28 &  4885 &    28 \\
         & Test &     5 &    5 &    5 &  977 &    5 &     5 &     5 &     5 &     5 &     5 \\
\cline{1-12}
\multirow{2}{*}{Client 5} & Train &    28 &   28 &   28 &   28 &   28 &    28 &    28 &    28 &    28 &  4885 \\
         & Test &     5 &    5 &    5 &    5 &  977 &     5 &     5 &     5 &     5 &     5 \\
\bottomrule
\end{tabular}
\end{table*}

\begin{table*}[!tb]
\centering
\caption{Fashion MNIST target shift distribution.}
\label{app:fig:target-shift:fmnist:dist}
\begin{tabular}{llrrrrrrrrrr}
\toprule
         &      & \multicolumn{10}{c}{Class} \\
         &      &     0 &    1 &    2 &    3 &    4 &     5 &     6 &     7 &     8 &     9 \\
\midrule
\multirow{2}{*}{Client 1} & Train &    34 &   34 &   34 &   34 &   34 &  5862 &    34 &    34 &    34 &    34 \\
         & Test &   977 &    5 &    5 &    5 &    5 &     5 &     5 &     5 &     5 &     5 \\
\cline{1-12}
\multirow{2}{*}{Client 2} & Train &    34 &   34 &   34 &   34 &   34 &    34 &  5862 &    34 &    34 &    34 \\
         & Test &     5 &  977 &    5 &    5 &    5 &     5 &     5 &     5 &     5 &     5 \\
\cline{1-12}
\multirow{2}{*}{Client 3} & Train &    34 &   34 &   34 &   34 &   34 &    34 &    34 &  5862 &    34 &    34 \\
         & Test &     5 &    5 &  977 &    5 &    5 &     5 &     5 &     5 &     5 &     5 \\
\cline{1-12}
\multirow{2}{*}{Client 4} & Train &    34 &   34 &   34 &   34 &   34 &    34 &    34 &    34 &  5862 &    34 \\
         & Test &     5 &    5 &    5 &  977 &    5 &     5 &     5 &     5 &     5 &     5 \\
\cline{1-12}
\multirow{2}{*}{Client 5} & Train &    34 &   34 &   34 &   34 &   34 &    34 &    34 &    34 &    34 &  5862 \\
         & Test &     5 &    5 &    5 &    5 &  977 &     5 &     5 &     5 &     5 &     5 \\
\bottomrule
\end{tabular}
\end{table*}

\begin{table*}[!tb]
\centering
\caption{Fashion MNIST with target shift across two clients. %
}
\label{app:fig:target-shift}
\begin{tabular}{llll}
\toprule
{} &               \textbf{\IW} &  \textbf{\IIW} &              \textbf{FedAvg} \\
\midrule
Average accuracy  &  {\bfseries 0.82} $\pm$ 0.00 &  0.76 $\pm$ 0.01 &              0.76 $\pm$ 0.01 \\
Client 1 accuracy &              0.89 $\pm$ 0.01 &  0.80 $\pm$ 0.02 &  {\bfseries 0.94} $\pm$ 0.00 \\
Client 2 accuracy &  {\bfseries 0.74} $\pm$ 0.01 &  0.71 $\pm$ 0.02 &              0.58 $\pm$ 0.01 \\
\bottomrule
\end{tabular}

\end{table*}

\begin{table}
\centering
\caption{Two-client Fashion MNIST. The number of samples for each class across the different datasets.}
\label{app:fig:target-shift:dist}
\begin{tabular}{llrrrrrrrrrr}
\toprule
         &      & \multicolumn{10}{c}{Class} \\
         &      &     0 &    1 &    2 &    3 &    4 &     5 &     6 &     7 &     8 &     9 \\
\midrule
\multirow{2}{*}{Client 1} & Train &    100 &  100 &   100 &  100 &  100 &   100 &   100 &   100 &    100 &    100 \\
         & Test &     9 &    9 &    9 &    9 &    9 &   990 &   990 &   990 &   990 &   990 \\
\cline{1-12}
\multirow{2}{*}{Client 2} & Train &    39 &   39 &   39 &   39 &   39 &  3986 &  3986 &  3986 &  3986 &  3986 \\
         & Test &   990 &  990 &  990 &  990 &  990 &     9 &     9 &     9 &     9 &     9 \\
\bottomrule
\end{tabular}

\end{table}

\begin{table}
\centering
\caption{For covariate shift the datasets for each of the client are constructed using different probabilities.}
\label{app:fig:covariate-shift}
\begin{tabular}{lrrrr}
\toprule
{} &  $p^{\mathrm{tr}}_1(\mathbf{x})$ &  $p^{\mathrm{te}}_1(\mathbf{x})$ &  $p^{\mathrm{tr}}_2(\mathbf{x})$ &  $p^{\mathrm{te}}_2(\mathbf{x})$ \\
\midrule
Probability of flipping color &                              0.5 &                              0.2 &                              0.2 &                              0.8 \\
\bottomrule
\end{tabular}

\end{table}

\begin{table}
\label{tbl:k-means}
\caption{Estimating ratio upper bound with $k$-means clustering.
We consider the target shift setup, such that a tight upper bound is known, and construct a single client variant for simplicity.
We specifically consider MNIST with a label distribution during training and testing to be $q^{\mathrm{tr}}\propto (\nicefrac{1}{20},\nicefrac{1}{20},\nicefrac{1}{20},\nicefrac{1}{20},\nicefrac{1}{20},1,1,1,1,1)^\top$ and
$q^{\mathrm{te}}\propto (1,1,1,1,1,\nicefrac{1}{20},\nicefrac{1}{20},\nicefrac{1}{20},\nicefrac{1}{20},\nicefrac{1}{20})^\top$ respectively.
The table shows the estimated upper bound on the ratio ($\tilde{r}$) for a range of clustering sizes.
A reasonable estimate of the true maximal ratio of $20$ is obtained for a wide range of clustering sizes. 
Whereas naively binning the space can be problematic due to division by zero, the clustering approach is less prone to this issue as long as $\#(\text{clusters}) \ll \#(\text{datapoints})$.}
\centering
\begin{tabular}{lrrrrrrr}
\toprule
$\#(\text{clusters})$ &   10  &   20  &   40  &   50  &   100 &   200 &    500 \\
\midrule
$\tilde{r}$ & 10.31 & 15.48 & 19.08 & 27.41 & 31.47 & 32.84 & 206.76 \\
\bottomrule
\end{tabular}

\end{table}

\begin{figure}
    \centering
    \includegraphics[width=0.49\textwidth]{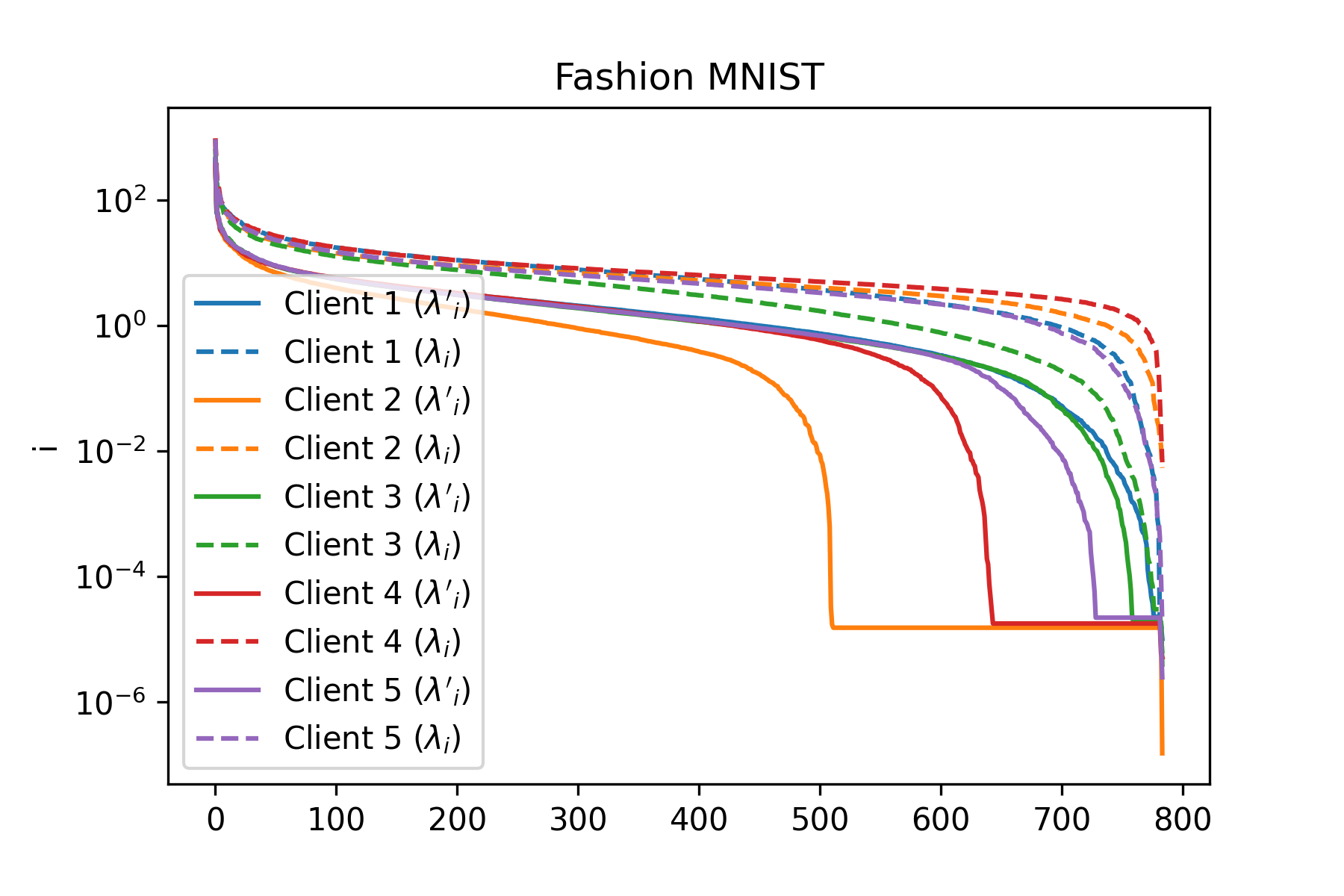}%
    \includegraphics[width=0.49\textwidth]{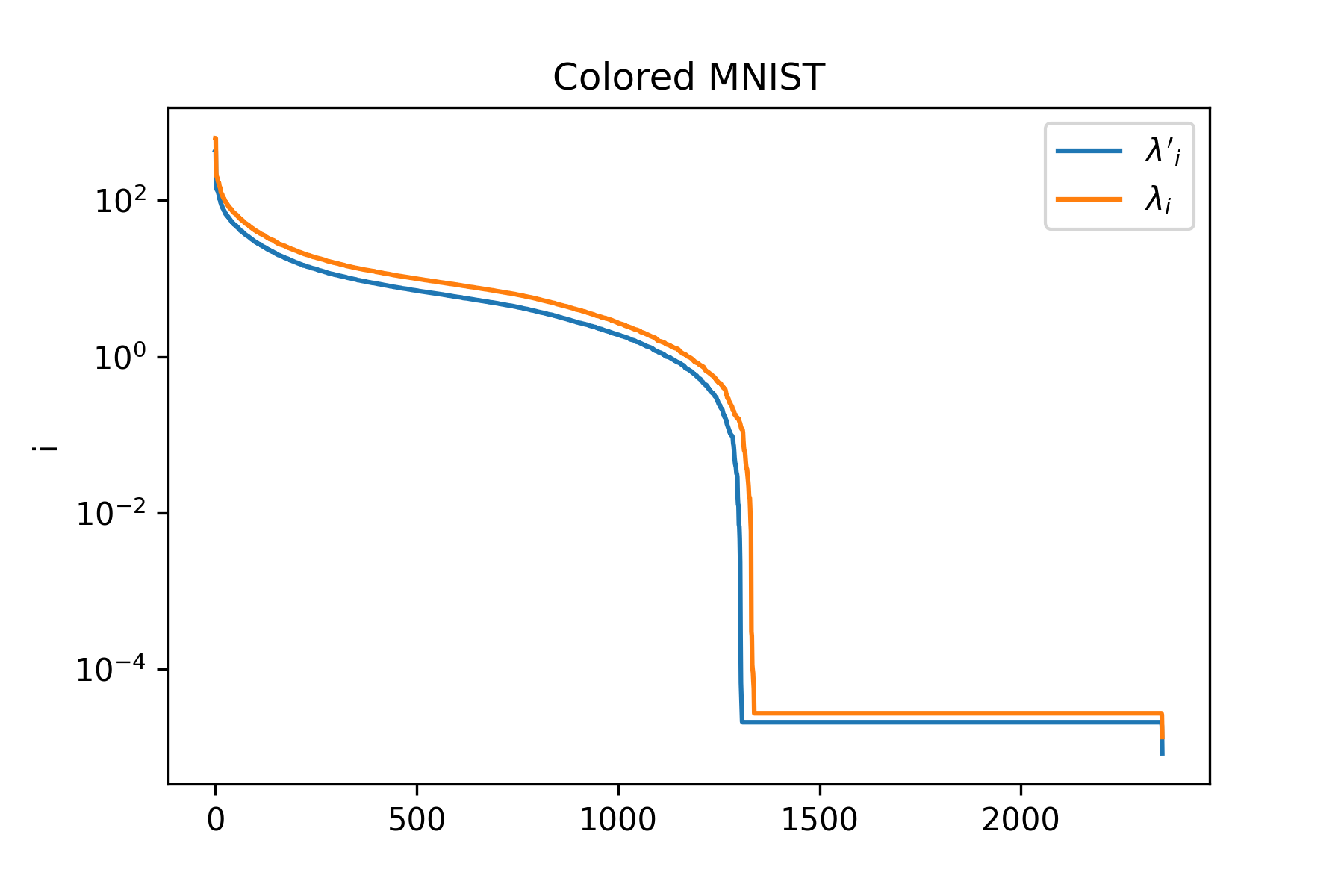}
    \caption{The sudden increase in the eigenvalue ratio observed in \Cref{fig:exp-asm-verify} for $i$ larger than 780 and 1300 in Fashion MNIST and CMNIST respectively is most likely due to numerical error.
    We plot the eigenvalues under the test distributions ($\lambda'$) and under the training distribution ($\lambda$) and observe that indeed the eigenvalues suddenly drop very close to zero at those exact indices.}
    \label{fig:eig}
\end{figure}

\section{Computational complexity of~\cref{HDRMalg}}
\label{app:complexity}
We note that clients compute the ratios {\it in parallel} where each client needs to estimate one ratio.  To estimate density ratios for \IW, clients require to send a few unlabelled test samples {\it only once}. The server shuffles those samples and broadcasts the shuffled version to clients {\it only once}. Compared to FedAvg, the additional computational cost per client is $\Oc(T N_k)$ where $T$ is the number of iterations for~\cref{HDRMalg} to converge and $N_k$ is the number of batches for ratio estimation. Compared to baseline FedAvg, the additional computation of \IW is negligible but leads to substantial improvements of the overall generalization in settings under challenging distribution shifts.

\section{Limitations}
\label{app:limitation}
In this paper, we focus on settings where ratio estimation is required {\it once} prior to model training. Handling distribution shifts in complex non-stationary settings where ratio estimation is an ongoing process is an interesting problem for future work.

In addition, various personalization methods have been proposed to improve fairness in terms of uniformity of model performance across clients~\citep{li2021tilted,li2021ditto}. %
To meet specific requirements of each client, our global model can be combined with a personalized model on each client. Developing new variants of \IW with a focus on fairness is an interesting problem for future work. %

To estimate $\{r_k(\xbf)\}_{k=1}^K$, clients need to send unlabelled samples $\xbf_{l,j}^{\te}$ for $l\in[K]$ and $j
\in[n^{\te}]$ from their test distributions. We note that instead of their true samples, clients can alternatively send samples generated from a {\it generative model}~\citep{goodfellow2020generative}.

Note that training GANs may be computationally extensive  due to required computational resources and availability of representative samples. However, we propose to use GANs as an alternative method with clear caveats, only when 1) clients have sufficient computational resources and 2) they are unwilling to share unlabelled data with the server.

As a partial mitigation of  privacy risks, we introduced \IIW. \IIW does not require any data sharing among clients and does not require any GAN training. In this paper, we focus on \IW since it outputs an unbiased estimate of a minimizer of the overall {\it true risk}, and enables us to theoretically show the benefit of importance weighting in generalization. 

One particular challenge in real-world cross-device FL is to estimate ratios on real-world datasets such as WILDS~\citep{WILDS} and LEAF~\citep{LEAF}.  WILDS has been mostly used for domain generalization, where the setting is not similar to ours. We still have to decide on an arbitrary test/train split. LEAF mainly captures inter-client distribution shifts and settings where different clients have different numbers of examples over thousands of clients. This work is not about scalability to thousands of clients experimentally using our single GPU simulated setup. While we anticipate efficient ratio estimation will improve over time, our~\IW and~\IIW formulations along with improved ratio estimates will provide reasonable solutions to learn an effective global model in real-world cross-device FL under~\css.

\end{document}